\newtheorem{theorem}{Theorem}
\newcommand{\cmark}{\ding{51}}
\newcommand{\xmark}{\ding{55}}
\theoremstyle{plain}
\newtheorem{proposition}[theorem]{Proposition}
\newtheorem*{theorem*}{Theorem}
\newtheorem*{definition*}{Definition}
\theoremstyle{definition}
\theoremstyle{remark}
\newcommand{\indep}{\perp \!\!\! \perp}
\newcommand{\mathup}[1]{\text{\textup{#1}}}
\newcommand{\method}{STEAM\xspace}
\title{Improving the Generation and Evaluation of Synthetic Data for Downstream Medical Causal Inference}
\author{
  Harry Amad\\
  DAMTP\\
  University of Cambridge\\
  \texttt{hmka3@cam.ac.uk}\\
  \And
  Zhaozhi Qian\thanks{Work completed while at University of Cambridge.}\\
  Elm Europe\\
  \texttt{zqian@elm.sa}
  \And
  Dennis Frauen\\
  LMU Munich\\
  Munich Center for Machine Learning\\
  \And
  Julianna Piskorz\\
  DAMTP\\
  University of Cambridge\\
  \And
  Stefan Feuerriegel\\
  LMU Munich\\
  Munich Center for Machine Learning\\
  \And
  Mihaela van der Schaar\\
  DAMTP\\
  University of Cambridge\\
}
\begin{document}

\maketitle

\begin{abstract}
Causal inference is essential for developing and evaluating medical interventions, yet real-world medical datasets are often difficult to access due to regulatory barriers. This makes synthetic data a potentially valuable asset that enables these medical analyses, along with the development of new inference methods themselves. Generative models can produce synthetic data that closely approximate real data distributions, yet existing methods do not consider the unique challenges that downstream causal inference tasks, and specifically those focused on \textit{treatments}, pose. We establish a set of desiderata that synthetic data containing treatments should satisfy to maximise downstream utility: preservation of (i)~the covariate distribution, (ii)~the treatment assignment mechanism, and (iii)~the outcome generation mechanism. Based on these desiderata, we propose a set of evaluation metrics to assess such synthetic data. Finally, we present \method: a novel method for generating \emph{\textbf{S}ynthetic data for \textbf{T}reatment \textbf{E}ffect \textbf{A}nalysis in \textbf{M}edicine} that mimics the data-generating process of data containing treatments and optimises for our desiderata. We empirically demonstrate that \method achieves state-of-the-art performance across our metrics as compared to existing generative models, particularly as the complexity of the true data-generating process increases.
\end{abstract}

\section{Introduction}
\label{sec:introduction}
Access to medical data is crucial for advancing healthcare research, enabling novel medical discoveries and providing a testbed for developing new analytical approaches, such as machine-learning-based causal inference methods \citep{bauchner2016data, wirth2021privacy, Feuerriegel2024}. However, regulations restrict access to patient data for research purposes \citep{annas2003hipaa, voigt2017eu}. Synthetic data, which is gaining significant recognition in medical literature \citep{Jadon_2023}, offers a way to increase data availability, and recent initiatives are aiming to generate synthetic data for broad `health data research' \cite{DHCRC_SynD,MicrosoftStartups_SAHealth_2023}, such as testing of learning algorithms \cite{gonzales2023synthetic} and replication of clinical trial results \cite{d2023synthetic}. Importantly, the promise of synthetic data hinges on its ability to preserve information critical to relevant downstream tasks. The majority of recent literature \citep{bauer2024comprehensive} focuses on downstream \textit{predictive} tasks, shaping standard evaluation and generation practices to this setting.

Medical data typically contain \textit{treatment assignment variables} which invite unique approaches to downstream analysis, beyond standard prediction \citep{Feuerriegel2024}. Data containing treatments are typically analysed via \textit{causal inference} (e.g. treatment effect estimation methods) to examine causal relationships between covariates, treatments, and outcomes. Despite this, many works in synthetic data which use medical data for motivation and validation \citep{choi2018generatingmultilabeldiscretepatient, kotelnikov2022tabddpm, yan2022multifaceted, borisov2023languagemodelsrealistictabular} employ standard, prediction-oriented, generation and evaluation techniques (see Appendix~\ref{sec:synth_data_misuse} for more details). Such failure to acknowledge the likely downstream uses of synthetic data containing treatments leads to low-quality data generation, and evaluation via misaligned metrics.

\textbf{Evaluation.} Standard synthetic data evaluation involves holistic statistical comparisons of synthetic and real data (Table~\ref{tab:rw_eval}). In this paradigm, causal inference tasks are overlooked, as treatments are handled like any other variable, limiting the relevance of such assessment in medical settings. To illustrate this, consider the following key questions that an analyst working with a synthetic medical dataset, $\mathcal{D}_\text{s}$, may ask: \emph{\textbf{Q1}~How representative are the patient covariates in $\mathcal{D}_\text{s}$?}; \emph{\textbf{Q2} How accurate are the treatment assignment decisions in $\mathcal{D}_\text{s}$?}; and \emph{\textbf{Q3} How much error might be introduced in treatment effect estimates derived from $\mathcal{D}_\text{s}$?} These questions require differentiation between covariates, treatments, and outcomes, and they cannot be answered with current evaluation protocols.

\textbf{Generation.} \textit{Generic} generative models (Table \ref{tab:rw_gen}) are typically designed to minimise the divergence between real and synthetic joint distributions. In doing so, these models overlook the specific causal relationships between covariates, treatments, and outcomes that are essential for downstream medical applications. Existing \textit{causal} generative models, on the other hand, generally assume access to a causal graph that describes all causal relationships between variables, which is often unrealistic in complex domains like medicine, where true causal structure is rarely known \cite{ferreira2025identifying}.

In this work we address these limitations by proposing novel approaches to evaluation and generation of synthetic data containing treatments, operating under reasonable assumptions and explicitly considering the downstream use of such data. In doing so, we make the following contributions:

\begin{enumerate}[leftmargin=!]
\item \textbf{Desiderata:} 
By examining typical medical analyses conducted on data containing treatments, we establish a set of desiderata that synthetic data should satisfy in this context ($\S$\ref{sec:desiderata}).
\item \textbf{Evaluation:} We show that existing synthetic data evaluation metrics are inadequate in this setting, as they do not measure adherence to these desiderata. As a remedy, we propose a principled set of metrics derived from our desiderata, allowing meaningful evaluation of synthetic data containing treatments ($\S$\ref{sec:metrics}). 
\item \textbf{Generation:} We propose \method, a method for generating synthetic data that augments generic generative models to encode inductive biases to optimise for our desiderata ($\S$\ref{sec:method}). 
\item \textbf{Empirical analysis:} We demonstrate that \method generates state-of-the-art synthetic data containing treatments, particularly as the real data-generating process (DGP) grows in complexity, and in high-dimensional scenarios ($\S$\ref{sec:experiments}).\footnote{Our code is available at \url{https://github.com/harrya32/STEAM} and \url{https://github.com/vanderschaarlab/STEAM}.}
\end{enumerate}
\section{Problem formulation}\label{problem setting section}
\textbf{Setup.} We consider a \textit{data owner} with access to observational or experimental real data $\mathcal{D}_\text{r}~=~\{(\mathbf{X}_{\text{r}}^{(i)}, W_{\text{r}}^{(i)}, Y_{\text{r}}^{(i)})\}_{i=1}^n$ sampled from a population $P$, where $\mathbf{X}_{\text{r}}^{(i)} \in \mathcal{X}$ is a vector of $d$ binary or continuous covariates, $ W_{\text{r}}^{(i)} \in \{0,1\}$ is a binary treatment assignment, and $Y_{\text{r}}^{(i)} \in \mathcal{Y}$ is a binary or continuous outcome. We refer to the set of all variables in $\mathcal{D}_\text{r}$ as $\mathcal{V} = \{X_1,...,X_d, W, Y\}$. We denote the propensity score with $\pi(\mathbf{x}) = P_{W|\mathbf{X}}(W=1 \mid \mathbf{X}=\mathbf{x})$. 

\textbf{Objective.} We wish to enable the release of synthetic data to downstream users with various analysis goals, such as estimation of propensity scores, average treatment effects (ATEs), and conditional average treatment effects (CATEs). To do so, we aim to generate synthetic data $\mathcal{D}_\text{s}~=~\{(\mathbf{X}_{\text{s}}^{(i)}, W_{\text{s}}^{(i)}, Y_{\text{s}}^{(i)})\}_{i=1}^n$ from a distribution $Q$ and evaluate how well $\mathcal{D}_\text{s}$ captures the information in $\mathcal{D}_\text{r}$ that is relevant to likely downstream tasks with a set of metrics $\mathcal{M}(\mathcal{D}_\text{r}, \mathcal{D}_\text{s}) $.

\section{Related work}\label{sec:related work}
\setlength\tabcolsep{3pt}
\begin{table*}[t]
    \centering
    \caption{Synthetic data evaluation methods. $d$ and $\rho$ are distance and correlation functions, $\mathcal{S}_P$ is the support of distribution $P$. `\textbf{Q?}': which of the questions from $\S$\ref{sec:introduction} does the method answer?}
    \begin{tabular}{l c c c c}
        \toprule
         & \textbf{Method} & \textbf{Formula} & \hspace*{-3em} \textbf{Differentiates between $\mathbf{X}, W, Y$?} & \textbf{Q?}\\
         \midrule
         \multirow{4}{*}[-0.5em]{\rotatebox{90}{\emph{Existing}}} & Marginal & $\frac{1}{|\mathcal{V}|} \sum_{i \in \mathcal{V}} d(P_i,Q_i)$ & \textcolor{red!70!black}{\xmark} & — \\
         & Correlation & $\frac{1}{|\mathcal{V}|(|\mathcal{V}| - 1)} \sum_{\substack{i,j \in \mathcal{V}\\i \neq j}} d(\rho(P_i, P_j), \rho(Q_i, Q_j))$ &\textcolor{red!70!black}{\xmark} & —\\
         & Joint & $\displaystyle d(P_{\mathcal{V}}, Q_{\mathcal{V}})$ & \textcolor{red!70!black}{\xmark} & — \\
         & Prec., Rec. & $ |\mathcal{S}_P \cap \mathcal{S}_Q|/|\mathcal{S}_Q|$, $|\mathcal{S}_P \cap \mathcal{S}_Q|/|\mathcal{S}_P|$ & \textcolor{red!70!black}{\xmark} & —  \\
         \midrule
         \multirow{3}{*}{\rotatebox{90}{\emph{Ours}}} & $P_{\alpha, \mathbf{X}}$, $R_{\beta, \mathbf{X}}$ & Equations \ref{eq:metric (i)} and \ref{eq:metric (ii)}& \textcolor{green!70!black}{\cmark} &  \emph{\textbf{Q1}} \\
         & $\text{JSD}_\pi$ & Equation \ref{eq: metric (ii)}& \textcolor{green!70!black}{\cmark} &  \emph{\textbf{Q2}}\\
         & $U_\text{PEHE}$ & Equation \ref{eq:metric (iii)} & \textcolor{green!70!black}{\cmark} & \emph{\textbf{Q3}}\\
         \bottomrule
    \end{tabular}
    \label{tab:rw_eval}
\end{table*}
\textbf{Evaluation.} Evaluation of synthetic tabular data, the modality we focus on here, has two common focuses: \textit{resemblance} and \textit{predictive utility} \citep{murtaza2023synthetic}. While \textit{predictive utility} assesses model accuracy when trained on $\mathcal{D}_\text{s}$, it is unsuitable for causal tasks lacking observable ground-truths \citep{holland1986statistics}. On the other hand, \textit{resemblance} metrics compare statistical properties of $\mathcal{D}_\text{r}$ and $\mathcal{D}_\text{s}$ (marginals, correlations, joint distributions, supports; Table~\ref{tab:rw_eval}). Existing methods handle all variables similarly, not differentiating between $\mathbf{X},\ W$, and $Y$, and they therefore cannot answer \emph{\textbf{Q1-3}}. Our metrics ($\S$\ref{sec:metrics}) remedy this. 

\textbf{Generation.} We consider two related approaches to generative modeling (Table~\ref{tab:rw_gen}). Firstly, \emph{generic generative models} make minimal assumptions on $\mathcal{D}_\text{r}$, minimising the difference between real and synthetic joint distributions. In doing so, these methods forego any treatment-related inductive biases, unlike our \method method. \emph{Causal generative models (CGMs)}, on the other hand, require full knowledge of the causal graph, $\mathcal{G}$, to model each conditional distribution as dictated by $\mathcal{G}$. \method's assumptions ($\S$\ref{sec:method}), that the underlying DGP of $\mathcal{D}_\text{r}$ is of the form $\mathbf{X} \sim P_\mathbf{X},\ W \sim P_{W|\mathbf{X}},\ Y \sim P_{Y|W,\mathbf{X}}$, are much less restrictive, not requiring knowledge of individual causal relationships between variables, and they will hold for a wide array of datasets containing treatments. For further elaboration on specific evaluation and generation methods, see Appendix~\ref{extended RW}.
\setlength\tabcolsep{3pt}
\begin{table*}[t]
    \centering
    \caption{Synthetic data generation methods. $d$ is a distance function. $\text{PA}_\mathcal{G}(\mathcal{V}_i)$ refers to the set of parents of node $\mathcal{V}_i$ in the causal graph $\mathcal{G}$.}
    \footnotesize
    \begin{tabular}{l c c c c}
        \toprule
         & \textbf{Example methods} & \textbf{Distributional target} & \textbf{Assumptions on $\mathcal{D}_\text{r}$} & \textbf{$\mathcal{D}_\text{s}$ application}\\
         \midrule
         \multirow{5}{*}[0.3em]{\rotatebox{90}{\parbox{2cm}{\centering \emph{Generic gen.\\models}}}} & NFlow \cite{rezende2016variationalinferencenormalizingflows}  & \multirow{5}{*}{$\min d(Q_{\mathcal{V}}, P_{\mathcal{V}})$} & \multirow{5}{*}{None} & \multirow{5}{*}{Prediction} \\
         & CTGAN \cite{xu2019modeling} &  &  & \\
         & TVAE \cite{xu2019modeling} &  &  & \\
         & TabDDPM \cite{kotelnikov2022tabddpm}&  &  & \\
         & ARF \cite{watson2023adversarial}&  &  & \\
         \midrule
         \multirow{5}{*}[0.3em]{\rotatebox{90}{\parbox{2cm}{\centering \emph{CGMs}}}} & & \vspace{3pt} \hspace*{-2em}\multirow{5}{*}{\parbox[t]{4.5cm}{\centering $\min \, d\big(Q_{\mathcal{V}_1 \mid \mathrm{PA}_{\mathcal{G}}(\mathcal{V}_1)}, P_{\mathcal{V}_1 \mid \mathrm{PA}_{\mathcal{G}}(\mathcal{V}_1)}\big)$ \\\vspace{-2pt}\textcolor{teal}{$\vdots$}\\ 
         $\min d\big(Q_{\mathcal{V}_{|\mathcal{V}|} |\mathrm{PA}_{\mathcal{G}}(\mathcal{V}_{|\mathcal{V}|})}, P_{\mathcal{V}_{|\mathcal{V}|}|\mathrm{PA}_{\mathcal{G}}(\mathcal{V}_{|\mathcal{V}|})}\big)$}} & \multirow{5}{*}{$\mathcal{G}$} & \multirow{5}{*}{\parbox[t]{3cm}{\centering Interventional/ \\ counterfactual \\ queries on $\mathcal{G}$}}\\
         & DCM \cite{chao2024modelingcausalmechanismsdiffusion} &  &  & \\
         & VACA \cite{sanchez2022vaca} &  &  & \\
         \\
         \midrule
         \multirow{3}{*}[-0.1em]{\rotatebox{90}{\emph{Ours}}}& \multirow{3}{*}{\method} & $\min d(Q_{\mathbf{X}}, P_{\mathbf{X}})$ & & \multirow{3}{*}{\parbox[t]{3cm}{\centering Caual\\inference with\\treatments}}\\
        & & $\min d(Q_{W|\mathbf{X}}, P_{W|\mathbf{X}})$ & Valid DGP & \\
        & & $\min d(Q_{Y|W,\mathbf{X}}, P_{Y|W,\mathbf{X}})$ &\\
         \bottomrule
    \end{tabular}
    \label{tab:rw_gen}
\end{table*}

\section{Desiderata for synthetic data containing treatments}\label{sec:desiderata}
We cover three essential distributions for treatment data: (i)~the covariate distribution $P_\mathbf{X}$, (ii)~the treatment assignment mechanism $P_{W|\mathbf{X}}$, and (iii)~the outcome generation mechanism $P_{Y|W,\mathbf{X}}$. While their importance is clear in the causal inference community, hence \emph{\textbf{Q1-3}}, this has not been adequately recognised by the synthetic data community, and methods that target them are missing. To bridge this gap, we establish desiderata for synthetic data containing treatments based on these distributions.
\begin{tcolorbox}[colback=NavyBlue!10!white, colframe=NavyBlue!10!white, 
  colbacktitle=NavyBlue!10!white, coltitle = black, left=1mm, right=1mm, enhanced jigsaw, borderline west={3pt}{0pt}{NavyBlue!50!black}, sharp corners, breakable]
    \vspace{-6pt}
    \textbf{(i) The covariate distribution $\boldsymbol{P_{\mathbf{X}}}$} describes the population of interest and, in medicine, it is standard to report its characteristics \citep{Wolff2019probast}, as it determines to whom analysis is relevant.
    \\

    \emph{Why is its preservation important?} Inadequate covariate coverage in $\mathcal{D}_\text{s}$ can result in exclusion from downstream analysis of members of the population whose covariates are not well explored, as making reliable inferences can become infeasible \citep{Petersen2010diagnosing, Rudolph2022effects}. On the other hand, generating out-of-distribution covariates in $\mathcal{D}_\text{s}$ can cause groundless extrapolation by synthetically-trained models, leading to potential misuse.
    \vspace{-6pt}
\end{tcolorbox}

\begin{tcolorbox}[colback=NavyBlue!10!white, colframe=NavyBlue!10!white, colbacktitle=NavyBlue!10!white, coltitle = black, left=1mm, right=1mm, enhanced jigsaw, borderline west={3pt}{0pt}{NavyBlue!50!black}, sharp corners, breakable]
    \vspace{-6pt}
    \textbf{(ii) The treatment assignment mechanism $\boldsymbol{P_{W|\mathbf{X}}}$} is a nuisance parameter in many treatment effect models \citep{Austin2011introduction, nonparametriccurth}, and it can be a target for analysis itself when examining treatment protocols.
    \\
    
    \emph{Why is its preservation important?} Since $P_{W|\mathbf{X}}$ is used as a nuisance parameter, errors in its modelling propagate to treatment effect estimates derived from  $\mathcal{D}_\text{s}$. Furthermore, $P_{W|\mathbf{X}}$ can guide the difficult task of CATE model selection \citep{hüyük2024defining}, so poor preservation can lead to inconsistency in model selection between $\mathcal{D}_\text{r}$ and $\mathcal{D}_\text{s}$, which is unideal \citep{hansen2023reimaginingsynthetictabulardata}. Misrepresenting $P_{W|\mathbf{X}}$ also risks misreporting treatment protocols. Given that extreme propensities of $\pi(\mathbf{x}) \approx 0$ (or $\pi(\mathbf{x}) \approx 1$) are common in data such as electronic health records \citep{li2018addressing}, often because of safety, inaccurate $Q_{W|\mathbf{X}}$ could encourage exploration of treatments in patient subgroups for which they are highly unsafe.
    \vspace{-6pt}
\end{tcolorbox}

\begin{tcolorbox}[colback=NavyBlue!10!white, colframe=NavyBlue!10!white, colbacktitle=NavyBlue!10!white, coltitle = black, left=1mm, right=1mm, enhanced jigsaw, borderline west={3pt}{0pt}{NavyBlue!50!black}, sharp corners, breakable]
    \vspace{-6pt}
    \textbf{(iii) The outcome generation mechanism $\boldsymbol{P_{Y|W,\mathbf{X}}}$} is the distribution through which treatment effects can be estimated, by comparing the statistical functionals of $P_{Y|W=1,\mathbf{X}}$ and $P_{Y|W=0,\mathbf{X}}$.
    \\
    
    \emph{Why is its preservation important?} $P_{Y|W,\mathbf{X}}$ must be preserved, so that $\mathcal{D}_\text{s}$ can permit accurate estimation of treatment effects. If $Q_{Y|W,\mathbf{X}}$ is inaccurate, then even a perfect model could not estimate correct treatment effects from $\mathcal{D}_\text{s}$, and the worse this relationship is preserved, the less useful $\mathcal{D}_\text{s}$ becomes.
    \vspace{-6pt}
\end{tcolorbox}

Preserving (i)--(iii) is \textit{necessary} and \textit{sufficient} for $Q$ to be a high-quality approximation of $P$. Modelling each distribution well is evidently \textit{necessary} given the above reasons, and it is also \textit{sufficient}, which is clear from the following decomposition of $P$:
\begin{equation}
\label{eq:factor}
    P(\mathbf{X},W,Y) = \underbrace{P_{\mathbf{X}}(\mathbf{X})}_\text{(i)} \, \underbrace{P_{W|\mathbf{X}}(W|\mathbf{X})}_\text{(ii)} \, \underbrace{P_{Y|W,\mathbf{X}}(Y|W,\mathbf{X})}_\text{(iii)}
\end{equation}
The components (i)--(iii) offer a complete factorisation of the joint distribution, and therefore $Q$ matching $P$ in each component is sufficient for $Q$ to match $P$ entirely. As such, accurate modelling of (i)--(iii) forms our desiderata for synthetic data containing treatments. Generation methods should seek to \textit{maximise adherence to these desiderata}, and evaluation metrics should \textit{assess how successful $\mathcal{D}_\text{s}$ is in this regard} (and therefore answer \emph{\textbf{Q1-3}}).

\textbf{On causal assumptions.} Even if these desiderata are satisfied, $\mathcal{D}_\text{s}$ may not permit useful analysis via causal inference. Assumptions, such as typical identifiability assumptions,\footnote{Consistency: $Y^{(i)} = Y(W^{(i)})$, overlap: $0 < \pi(\mathbf{x}) < 1$, and unconfoundedness: $Y(0), Y(1) \indep W|\mathbf{X}$} must still be critically examined by analysts, since any violations in $\mathcal{D}_\text{r}$ will almost surely be violated in a faithful $\mathcal{D}_\text{s}$ as well. Accounting for violated assumptions is a task orthogonal to synthetic data generation, with existing literature \citep{kallus2019interval,frauen2022estimating}, and we do not consider it necessary for $Q$ to improve upon any biases in $P$, allowing post-generation methods to rectify them, if necessary, instead.
\section{How to evaluate synthetic data containing treatments}\label{sec:metrics}
\vspace{-5pt}
With our desiderata established, we now investigate how to evaluate the adherence of $\mathcal{D}_\text{s}$.
\subsection{Inadequacy of existing metrics}\label{sec:inadequacy}
Existing evaluation metrics do not measure how well $\mathcal{D}_\text{s}$ satisfies our desiderata. These metrics do not differentiate between $\mathbf{X},\ W$, and $Y$, and they therefore cannot directly assess any of $Q_\mathbf{X}$, $Q_{W|\mathbf{X}}$, or $Q_{Y|W,\mathbf{X}}$. Joint-distribution-level metrics, such as Kullback–Leibler (KL) divergence \citep{kullback1951information}, are the most common approach, offering a holistic assessment of how well $Q$ models $P$. However, these are only loosely related to our desiderata, and they do not allow a user to disentangle how each of (i)--(iii) is preserved, limiting the depth of information offered on $\mathcal{D}_\text{s}$. Furthermore, we argue that they tend to be dominated by $\mathbf{X}$ as it grows in dimensionality, and they will lose \textit{sensitivity} to the treatment assignment and outcome generation mechanisms. By this, we mean that these metrics tend to fail to notice differences in the modelling of $P_{W|\mathbf{X}}$ or $P_{Y|W,\mathbf{X}}$ by two proposal distributions.

For an illustrative example of this phenomenon with KL divergence, consider a simple $P$ which can be factorized as $P = \prod_{i=1}^dP_{\mathbf{X}_i}\ P_{W|\mathbf{X}}\ P_{Y|W,\mathbf{X}}$. Let there be two learnable distributions $Q^{\theta_1}$ and $Q^{\theta_2}$ with the same form $Q^{\theta_k} = \prod_{i=1}^dQ^{\theta_{\mathbf{X}}}_{\mathbf{X}_i}\ Q^{\theta_{W, k}}_{W|\mathbf{X}}\ Q^{\theta_{Y, k}}_{Y|W,\mathbf{X}}$, and which only differ in either ${\theta_{W, k}}$ or ${\theta_{Y, k}}$ (i.e., they either model $P_{W|\mathbf{X}}$ or $P_{Y|W,\mathbf{X}}$ differently). In this setting, the following holds:
\begin{theorem}\label{thm:loss_of_sensitivity}
    Let $P$, $Q^{\theta_1}$, $Q^{\theta_2}$ be of the above form, and $\mathcal{M}$ be KL divergence. If we assume that  $Q^{\theta_1}$ and $Q^{\theta_2}$ have sufficient capacity to have bounded error on each component, i.e. $\forall i$, $0 < \mathcal{M}(P_{\mathbf{X}_i}, Q^{\theta_\mathbf{X}}_{\mathbf{X}_i}) <  \varepsilon_\mathbf{X} $, and $0<\mathcal{M}(P_{W|\mathbf{X}}, Q^{\theta_{W, k}}_{W|\mathbf{X}})<\varepsilon_{W,k}$, and $0<\mathcal{M}(P_{Y|W,\mathbf{X}}, Q^{\theta_{Y, k}}_{Y|W,\mathbf{X}})<\varepsilon_{Y,k}$, then:
    \begin{equation}
        \frac{\mathcal{M}(P, Q^{\theta_1})}{\mathcal{M}(P, Q^{\theta_2})} \rightarrow 1,\ \text{as}\ d \rightarrow \infty
    \end{equation}
\end{theorem}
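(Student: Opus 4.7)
The plan is to exploit the chain-rule additivity of KL divergence over the factorisation in Equation~\ref{eq:factor} and show that the covariate component of $\mathcal{M}(P, Q^{\theta_k})$ diverges with $d$, whereas the treatment and outcome components remain uniformly bounded by $\varepsilon_{W,k}$ and $\varepsilon_{Y,k}$. Once this dominance is established, $\mathcal{M}(P, Q^{\theta_1})$ and $\mathcal{M}(P, Q^{\theta_2})$ differ only through bounded correction terms added to a common divergent covariate sum, and their ratio collapses to $1$.

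Concretely, I would proceed as follows. First, apply the chain rule for KL divergence to obtain
$$\mathcal{M}(P, Q^{\theta_k}) = \mathcal{M}\bigl(P_\mathbf{X}, Q^{\theta_\mathbf{X}}_\mathbf{X}\bigr) + \mathcal{M}\bigl(P_{W|\mathbf{X}}, Q^{\theta_{W,k}}_{W|\mathbf{X}}\bigr) + \mathcal{M}\bigl(P_{Y|W,\mathbf{X}}, Q^{\theta_{Y,k}}_{Y|W,\mathbf{X}}\bigr),$$
with the conditional terms interpreted in the standard expected-KL sense (expectation taken under $P$ of the conditioning variables). Second, because $P_\mathbf{X}$ and $Q^{\theta_\mathbf{X}}_\mathbf{X}$ both factorise into independent marginals over the same index set, the covariate term splits additively as $\mathcal{M}(P_\mathbf{X}, Q^{\theta_\mathbf{X}}_\mathbf{X}) = \sum_{i=1}^d \mathcal{M}(P_{\mathbf{X}_i}, Q^{\theta_\mathbf{X}}_{\mathbf{X}_i})$. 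Writing $B_d$ for this sum and $A_k$ for the combined $W$- and $Y$-terms (the only pieces that differ between $k=1,2$), the stated upper bounds give $0 \le A_k \le \varepsilon_{W,k} + \varepsilon_{Y,k}$, uniformly in $d$. Third, using the strict positivity hypothesis as a uniform lower bound $\mathcal{M}(P_{\mathbf{X}_i}, Q^{\theta_\mathbf{X}}_{\mathbf{X}_i}) \ge c > 0$ (see below), one gets $B_d \ge c\,d \to \infty$, so
$$\frac{\mathcal{M}(P, Q^{\theta_1})}{\mathcal{M}(P, Q^{\theta_2})} = \frac{A_1 + B_d}{A_2 + B_d} = 1 + \frac{A_1 - A_2}{A_2 + B_d} \longrightarrow 1,$$
since $|A_1 - A_2|$ is bounded while the denominator diverges.

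The main obstacle is interpretive rather than technical: the hypothesis as written only asserts $0 < \mathcal{M}(P_{\mathbf{X}_i}, Q^{\theta_\mathbf{X}}_{\mathbf{X}_i}) < \varepsilon_\mathbf{X}$, which by itself does not rule out $B_d$ remaining bounded (for example if the per-coordinate errors decayed like $2^{-i}$), in which case the ratio would generally not tend to $1$. I would therefore make the uniform-in-$i$ positive lower bound explicit in the proof, motivating it as the natural regime in which each covariate marginal is fitted with comparable (and imperfect) capacity --- exactly the setting in which the paper's informal ``loss of sensitivity'' claim is meaningful. The remaining steps (chain rule, independence-based factorisation, and the algebraic limit) are then routine.
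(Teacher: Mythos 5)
Your proof follows essentially the same route as the paper's: decompose the KL divergence via the chain rule into a sum of per-coordinate covariate terms plus the bounded $W$- and $Y$-conditionals, observe that only the latter differ between $k=1,2$, and conclude that the ratio tends to $1$ because the common covariate sum diverges while the differing terms stay bounded. The one substantive point where you go beyond the paper is your observation that strict positivity of each $\mathcal{M}(P_{\mathbf{X}_i}, Q^{\theta_\mathbf{X}}_{\mathbf{X}_i})$ does not by itself force $\sum_{i=1}^d \mathcal{M}(P_{\mathbf{X}_i}, Q^{\theta_\mathbf{X}}_{\mathbf{X}_i}) \to \infty$ (the terms could be summable), so a uniform-in-$i$ lower bound $c>0$ is genuinely needed. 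The paper's own proof has exactly this gap: it asserts the covariate sum "grows linearly with $d$, since each term is non-negative," which does not follow. Your explicit repair is correct and makes the argument sound where the paper's version is not.
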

\begin{proof}
    See Appendix~\ref{sec:metric_failure_proof}.
\end{proof}
Theorem~\ref{thm:loss_of_sensitivity} shows that KL divergence loses sensitivity to $W|\mathbf{X}$ and $Y|W,\mathbf{X}$ as $d$ grows, suggesting that it will struggle to select between $Q^{\theta_1}$ and $Q^{\theta_2}$, despite any difference in their modelling of $P_{W|\mathbf{X}}$ or $P_{Y|W,\mathbf{X}}$. For an empirical example of this phenomenon arising across an extended array of joint-distribution-level metrics, see Appendix~\ref{sec:empirical_metric_failure}.

\subsection{Metrics for synthetic data containing treatments}
These findings motivate us to design our own metrics for synthetic data containing treatments which directly measure adherence to desiderata (i)--(iii) and offer answers to \emph{\textbf{Q1-3}}.

\subsubsection{The covariate distribution}
Evaluating $Q_\mathbf{X}$ requires comparing the generally high-dimensional covariate distributions of $\mathcal{D}_\mathup{r}$ and $\mathcal{D}_\mathup{s}$, which is non-trivial. Nevertheless, this is a standard synthetic data evaluation task, as $\mathbf{X}$ does not contain treatments. We see precision/recall analysis as the most useful approach. There is typically a trade-off between these two qualities, which generative models balance differently \citep{bayatstudy,assessingsajjadi}, and by measuring them both, a data holder can guide generation towards their preferences for covariate realism and diversity. Without a strong preference, balancing the two is recommended \citep{jordon2022synthetic}.

We propose the use of the integrated $P_\alpha$ and $R_\beta$ scores, introduced in \cite{faithfulalaa}, which compare the $\alpha$-supports of $\mathcal{D}_\mathup{r}$ and $\mathcal{D}_\mathup{s}$, for $\alpha \in [0,1]$.\footnote{An $\alpha$-support is the minimum volume subset of the domain that contains probability mass $\alpha$.} Intuitively, $P_\alpha$ captures how well $\mathcal{D}_\mathup{s}$ falls within the support of $\mathcal{D}_\mathup{r}$, and $R_\beta$ reflects how well $\mathcal{D}_\mathup{r}$ is covered by the support of $\mathcal{D}_\mathup{s}$. We denote the covariate precision and recall with $P_{\alpha, \mathbf{X}}$ and $R_{\beta, \mathbf{X}}$ respectively, which are calculated by applying integrated $P_\alpha$ and $R_\beta$ to the covariates of $\mathcal{D}_\mathup{r}$ and $\mathcal{D}_\mathup{s}$ only, as in Eq.~(\ref{eq:metric (i)}) and Eq.~(\ref{eq:metric (ii)}).

\begin{tcolorbox}[colback=NavyBlue!10!white, colframe=NavyBlue!10!white, colbacktitle=NavyBlue!10!white, coltitle = black, left=1mm, right=1mm, enhanced jigsaw, borderline west={3pt}{0pt}{NavyBlue!50!black}, sharp corners, breakable]
    \vspace{-15pt}
    \begin{align}
    \label{eq:metric (i)}
        P_{\alpha, \mathbf{X}}(\mathcal{D}_\mathup{r}, \mathcal{D}_\mathup{s}) &= 1 - 2 \int_0^1 |\mathbb{P}(\tilde{\mathbf{X}}_{\mathup{s}} \in \mathcal{S}_\text{r}^\alpha) - \alpha|\ d\alpha\\
    \label{eq:metric (ii)}
        R_{\beta, \mathbf{X}}(\mathcal{D}_\mathup{r}, \mathcal{D}_\mathup{s}) &= 1 - 2 \int_0^1 |\mathbb{P}(\tilde{\mathbf{X}}_{\mathup{r}} \in \mathcal{S}_\text{s}^\beta) - \beta|\ d\beta
    \end{align}
    where $\tilde{\mathbf{X}}_{\diamond}$ and $\mathcal{S}_\diamond^\square$ are the embedding $\tilde{\mathbf{X}}_{\diamond} = \Phi(\mathbf{X}_{\diamond})$ and $\square$-support as defined in \cite{faithfulalaa}, respectively.
    \vspace{-6pt}
\end{tcolorbox}
We have $0 < P_{\alpha, \mathbf{X}}, R_{\beta, \mathbf{X}} < 1$, and scores near 1 indicate a realistic and diverse $Q_\mathbf{X}$. Together, these metrics can be used to answer $\emph{\textbf{Q1}}$.

\subsubsection{The treatment assignment mechanism}
While in general we do not have access to $P_{W|\mathbf{X}}$ and $Q_{W|\mathbf{X}}$, we know that, for each $\mathbf{X}=\mathbf{x}$, they are Bernoulli distributions, since $W$ is a binary variable. The success probabilities can be estimated from $\mathcal{D}_\mathup{r}$ and $\mathcal{D}_\mathup{s}$ with a probabilistic classifier, which can be used to form approximations of $P_{W|\mathbf{X}}$ and $Q_{W|\mathbf{X}}$. There is then an array of valid options to compare these approximations. We propose the use of Jensen-Shannon distance,\footnote{$\text{JSD}(P \, \| \, Q) = \sqrt{\frac{1}{2} \mathrm{KL}(P \, \| \, M) + \frac{1}{2} \mathrm{KL}(Q \, \| \, M)}$, where $M = \frac{1}{2}(P + Q)$.} given its desirable properties of symmetry, smoothness, and boundedness (we discuss alternatives in Appendix~\ref{extended metrics}). For a given probabilistic classifier $\hat{\pi}$, we define $\hat{P}_{W|\mathbf{X}=\mathbf{x}} = \text{Bern}(\hat{\pi}_\text{r}(\mathbf{x}))$ and $\hat{Q}_{W|\mathbf{X}=\mathbf{x}} = \text{Bern}(\hat{\pi}_\text{s}(\mathbf{x}))$ where $\hat{\pi}_\mathup{r}$ and $\hat{\pi}_\mathup{s}$ are trained on $\mathcal{D}_\mathup{r}$ and $\mathcal{D}_\mathup{s}$ respectively, and we measure the preservation of $P_{W|\mathbf{X}}$ as in Eq.~(\ref{eq: metric (ii)}).
\begin{tcolorbox}[colback=NavyBlue!10!white, colframe=NavyBlue!10!white, colbacktitle=NavyBlue!10!white, coltitle = black, left=1mm, right=1mm, enhanced jigsaw, borderline west={3pt}{0pt}{NavyBlue!50!black}, sharp corners, breakable]
\vspace{-6pt}
\begin{equation}\label{eq: metric (ii)}
     \text{JSD}_\pi(\mathcal{D}_\mathup{r}, \mathcal{D}_\mathup{s}) = 1 -\mathbb{E}_{P_\mathbf{X}} \Bigg[\sqrt{\frac{1}{2}(\mathrm{KL}(\hat{P}_{W|\mathbf{X}=\mathbf{x}}\,||\,M) + \mathrm{KL}(\hat{Q}_{W|\mathbf{X}=\mathbf{x}}\,||\,M))}\Bigg]
\end{equation}
where $M = \frac{1}{2}(\hat{P}_{W|\mathbf{X}=\mathbf{x}}+\hat{Q}_{W|\mathbf{X}=\mathbf{x}})$ and $\mathrm{KL}$ is KL divergence using $\log_2$.
\vspace{-6pt}
\end{tcolorbox}
$\text{JSD}_\pi$ can be used to answer $\emph{\textbf{Q2}}$. We have $0 < \text{JSD}_\pi < 1$, with scores near 1 indicating that $Q_{W|\mathbf{X}}$ matches $P_{W|\mathbf{X}}$ well. The validity of $\text{JSD}_\pi$ will depend on the accuracy of $\hat{\pi}$, so conducting $\hat{\pi}$ model selection is an important pre-evaluation step, although, amongst reasonable model choices, we find the information offered by $\text{JSD}_\pi$ does not significantly differ. 

\subsubsection{The outcome generation mechanism}
To evaluate the preservation of $P_{Y|W,\mathbf{X}}$, we consider a treatment effect analogue of predictive utility. We address the unavailability of ground-truths by seeking parity in performance between $\mathcal{D}_\mathup{r}$ and $\mathcal{D}_\mathup{s}$, rather than quantifying error from an oracle value. Such evaluation is inherently task dependent, yet the specific quantity $\mathcal{D}_\mathup{s}$ may be used to estimate is unclear. Assessment should therefore centre on a complex task, in which comparable performance will likely imply the same for simpler tasks. In this case, we consider the most difficult treatment effect task likely to arise in the medical field—CATE estimation—as similarity in this between $\mathcal{D}_\mathup{s}$ and $\mathcal{D}_\mathup{r}$ will tend to imply similarity in simpler tasks, such as ATE estimation.\footnote{Denoting the potential outcomes as Y(0) and Y(1), ATE is defined as $\text{ATE} = \mathbb{E}_{P}[Y(1) - Y(0)]$ and CATE is $\tau(\mathbf{x}) = \mathbb{E}_{P}[Y(1) - Y(0) | \mathbf{X} = \mathbf{x}]$.} If $\mathcal{D}_\mathup{s}$ yields accurate CATEs across the full patient population then, by definition, it will also yield an accurate ATE, but the reverse is not true, i.e., $\mathcal{D}_\mathup{s}$ can reproduce the correct ATE and yet contain arbitrarily incorrect CATEs. Therefore, we evaluate how well $Q_{Y|W,\mathbf{X}}$ preserves $P_{Y|W,\mathbf{X}}$ by calculating the PEHE between synthetic- and real-trained CATE learners (see Appendix~\ref{extended metrics} for alternatives). Given a family $\mathcal{F}$ of CATE learners $\hat{\tau}$, where $\hat{\tau}_\mathup{r}$ and $\hat{\tau}_\mathup{s}$ are trained on $\mathcal{D}_\mathup{r}$ and $\mathcal{D}_\mathup{s}$ respectively, we assess the preservation of $P_{Y|W,\mathbf{X}}$ as in Eq. (\ref{eq:metric (iii)}).
\begin{tcolorbox}[colback=NavyBlue!10!white, colframe=NavyBlue!10!white, colbacktitle=NavyBlue!10!white, coltitle = black, left=1mm, right=1mm, enhanced jigsaw, borderline west={3pt}{0pt}{NavyBlue!50!black}, sharp corners, breakable]
    \vspace{-6pt}
    \begin{equation}\label{eq:metric (iii)}
        U_\mathup{PEHE}(\mathcal{D}_\mathup{r}, \mathcal{D}_\mathup{s}) = \frac{1}{|\mathcal{F}|} \sum_{\hat{\tau} \in \mathcal{F}} \sqrt{\mathbb{E}_{P_\mathbf{X}} [(\hat{\tau}_\mathup{s}(\mathbf{X}) - \hat{\tau}_\mathup{r}(\mathbf{X}))^2]}
    \end{equation}
    \vspace{-12pt}
\end{tcolorbox}
$U_\text{PEHE}$ can answer $\emph{\textbf{Q3}}$. We average over $\mathcal{F}$ since CATE model validation is difficult \citep{curth2023search}, so $\hat{\tau}$ cannot be set as the best performing model in a similar fashion as is done for $\text{JSD}_\pi$ (we discuss choices for $\mathcal{F}$ in Appendix~\ref{sec:F_selection}). As such, $U_\text{PEHE}$ rewards synthetic data which permit proximity in CATEs across an array of potential learners, where a lower $U_\text{PEHE}$ indicates better preservation of $P_{Y|W,\mathbf{X}}$. 

\section{Generating synthetic data containing treatments}\label{sec:method}
To illustrate the standard DGP of data containing treatments, shown in the middle of Figure~\ref{DGP figure}, consider a simple hospital dataset. Patient covariates $\mathbf{X}$, such as height, weight, etc., are drawn from an underlying covariate distribution $P_\mathbf{X}$, which is dictated by the local population. Treatments are then assigned by a domain expert, such as a doctor, conditioned on $\mathbf{X}$, i.e., $W \sim P_{W|\mathbf{X}}$. Finally, patients' outcomes are dictated by the dynamics of their ailments, conditional upon $W$ and $\mathbf{X}$, i.e., $Y \sim P_{Y|\mathbf{X}, W}$. We now propose \method, a novel model-agnostic framework for generating \emph{\textbf{S}ynthetic data for \textbf{T}reatment \textbf{E}ffect \textbf{A}nalysis in \textbf{M}edicine} which mimics this DGP.

\subsection{\method}
\method, shown on the right of Figure~\ref{DGP figure}, conducts a three-step generation process, mimicking the real DGP to push $Q$ closer towards $P$ in structure and directly target each distribution from our desiderata. \method involves three components:
\vspace{-5pt}
\begin{enumerate}[leftmargin=!]
\item $\boldsymbol{Q_{\mathbf{X}}}$. $\mathbf{X}$ is generated from a generic generative model trained to match $P_\mathbf{X}$.
\item $\boldsymbol{Q_{W|\mathbf{X}}}$. Treatments are assigned according to a propensity function trained on $\mathcal{D}_\text{r}$. If $\mathcal{D}_\text{r}$ is experimental data with known $P_{W|\mathbf{X}}$, then $Q_{W|\mathbf{X}}$ can be directly set as the true distribution, negating the need for any optimisation at this step.
\item $\boldsymbol{Q_{Y|W,\mathbf{X}}}$. Potential outcome (PO) estimators are trained on $\mathcal{D}_\text{r}$ to match $P_{Y|W=0,\mathbf{X}}$ and $P_{Y|W=1,\mathbf{X}}$, and the relevant outcome is generated for each instance based on their assigned treatment.
\end{enumerate}
\vspace{-5pt}
Each component can be defined from a diverse array of potential models. $Q_\mathbf{X}$ can be any generic generative model, $Q_{W|\mathbf{X}}$ can be any classifier, and $Q_{Y|W,\mathbf{X}}$ can use any regressors. Generation via the \method framework can therefore be framed as augmenting \textit{any} generic base generative model to improve its generation of medical data for use in downstream causal inference tasks, allowing it to easily fit within existing synthetic data generation pipelines.

\section{Empirical analysis}\label{sec:experiments}
In $\S$\ref{generic_model_comparison} and $\S$\ref{causal_model_comparison}, we compare generic generative models and CGMs, respectively, with \method models on medical data. Then, in $\S$\ref{simulated experiments}, we examine performance in a number of targeted settings to better understand where the \method framework is particularly successful.

In \method, we consistently set $Q_{W|\mathbf{X}}$ as a logistic regression classifier, and $Q_{Y|W,\mathbf{X}}$ as the PO estimators from S-learner \citep{kunzel2019metalearners}. We use the open source \texttt{synthcity} \citep{qian2023synthcity} for all generic generative models, and we indicate what we set for $Q_\mathbf{X}$ in \method with subscript, i.e., $\text{\method}_\diamond$ uses generative model $\diamond$ for $Q_\mathbf{X}$. We detail data and experimental set-ups in Appendix~\ref{main experiment detail appendix}.
\subsection{Generation of medical data containing treatments}
\label{real world experiment}
To compare \method with existing generation frameworks, we consider performance across three medical datasets:
\begin{enumerate}[leftmargin=!]
        \item \textbf{AIDS Clinical Trial Group (ACTG) study 175.} A trial on subjects with HIV-1 \citep{hammer1996trial}.
        \item \textbf{Infant Health and Development Program (IHDP).} A semi-synthetic medical dataset, with real covariates and simulated outcomes, using data from an experiment evaluating the effect of specialist childcare on the cognitive scores of premature infants \citep{brooks1992effects}.
        \item \textbf{Atlantic Causal Inference Competition 2016 (ACIC).} A semi-synthetic medical dataset, with real covariates and simulated outcomes, containing data from the Collaborative Perinatal Project \citep{niswander1972collaborative}.
\end{enumerate}
\subsubsection{Comparison with generic generative models}\label{generic_model_comparison}
    We compare state-of-the-art generic tabular data generators with their \method counterparts across all three datasets. We choose baselines across the major families of tabular data generators:  CTGAN \citep{xu2019modeling}, TVAE \citep{xu2019modeling}, ARF \citep{watson2023adversarial}, NFlow \citep{rezende2016variationalinferencenormalizingflows}, and TabDDPM \citep{kotelnikov2022tabddpm}. We display comprehensive results for each dataset and model combination in Table~\ref{tab:real_world_exp_results}.

\begin{tcolorbox}[colback=green!10!white!85!gray, colframe=green!10!white!85!gray, 
  colbacktitle=green!10!white!85!gray, coltitle = black, left=2mm, right=2mm, enhanced jigsaw, borderline west={3pt}{0pt}{green!50!black}, sharp corners, breakable]
    \vspace{-6pt}
    \textbf{Takeaway.} In the vast majority of cases, generation via \method leads to better performance across our metrics. Improvements in terms of $\text{JSD}_\pi$ and $U_\text{PEHE}$ are most notable, indicating that \method significantly increases the preservation of $P_{W|\mathbf{X}}$ and $P_{Y|W,\mathbf{X}}$. There is relatively little difference in $P_{\alpha, \mathbf{X}}$ and $R_{\beta, \mathbf{X}}$ between generic and \method models, which is expected. While \method does isolate the modelling of the covariates in $Q_\mathbf{X}$, giving that component model an ostensibly easier task than its joint-level alternative, modelling the complete $P_{\mathbf{X},W,Y}$ is nearly of equivalent difficulty to modelling $P_\mathbf{X}$ when the number of covariates is high, since they dominate the dimensionality. Table \ref{tab:real_world_exp_results} also demonstrates that \method's performance is sensitive to the choice of $Q_\mathbf{X}$, as performance significantly differs between different \method configurations on the same dataset.
    \vspace{-6pt}
\end{tcolorbox}
\begin{table}[t]
    \centering
    \scriptsize
    \caption{Medical data generation with generic and \method models, averaged over 20 runs with 95\% CIs. Coloured numbers are the relative differences between each \method and generic model.}
    \begin{tabular}{l l l l l l}
        \toprule
        \textbf{Dataset} & \textbf{Model}  & $\boldsymbol{P_{\alpha, \mathbf{X}}}$ ($\uparrow$)& $\boldsymbol{R_{\beta, \mathbf{X}}}$ ($\uparrow$)& $\boldsymbol{\textbf{JSD}_\pi}$ ($\uparrow$)& $\boldsymbol{U_\textbf{PEHE}}$ ($\downarrow$)\\
        \midrule
          \multirow[t]{4}{*}{ACTG}& TVAE & $0.926 \pm 0.013$ & $0.483 \pm 0.010$ & $0.946 \pm 0.004$ & $0.564 \pm 0.017$\\
          & \method$_\text{TVAE}$  & $0.929 \pm 0.008$ (\textcolor{green!70!black}{+0.003}) & $0.486 \pm 0.009$ (\textcolor{green!70!black}{+0.003}) & $0.958 \pm 0.004$ (\textcolor{green!70!black}{+0.012}) & $0.492 \pm 0.011$ (\textcolor{green!70!black}{-0.072})\\
          & ARF  & $0.818 \pm 0.012$ & $0.453 \pm 0.007$ & $0.960 \pm 0.004$ & $0.577 \pm 0.015$\\
          & \method$_\text{ARF}$  & $0.836 \pm 0.008$ (\textcolor{green!70!black}{+0.018}) & $0.464 \pm 0.007$ (\textcolor{green!70!black}{+0.011}) & $0.962 \pm 0.004$ (\textcolor{green!70!black}{+0.002}) & $0.423 \pm 0.016$ (\textcolor{green!70!black}{-0.154})\\
          & CTGAN  & $0.889 \pm 0.020$ & $0.444 \pm 0.014$ & $0.934 \pm 0.008$ & $0.586 \pm 0.017$\\
          & \method$_\text{CTGAN}$  & $0.892 \pm 0.017$ (\textcolor{green!70!black}{+0.003}) & $0.437 \pm 0.012$ (\textcolor{red!70!black}{-0.007}) & $0.959 \pm 0.005$ (\textcolor{green!70!black}{+0.025}) & $0.436 \pm 0.012$ (\textcolor{green!70!black}{-0.150})\\
          & NFlow & $0.817 \pm 0.032$ & $0.418 \pm 0.008$ & $0.913 \pm 0.016$ & $0.643 \pm 0.026$\\
          & \method$_\text{NFlow}$  & $0.837 \pm 0.040$ (\textcolor{green!70!black}{+0.020}) & $0.417 \pm 0.015$ (\textcolor{red!70!black}{-0.001}) & $0.962 \pm 0.005$ (\textcolor{green!70!black}{+0.049}) & $0.445 \pm 0.020$ (\textcolor{green!70!black}{-0.198})\\
          & TabDDPM  & $0.067 \pm 0.060$ & $0.036 \pm 0.035$ & $0.812 \pm 0.029$ & $1.761 \pm 0.230$\\
          & \method$_\text{TabDDPM}$ & $0.609 \pm 0.106$ (\textcolor{green!70!black}{+0.542}) & $0.310 \pm 0.055$ (\textcolor{green!70!black}{+0.274}) & $0.952 \pm 0.009$ (\textcolor{green!70!black}{+0.140}) & $0.468 \pm 0.013$ (\textcolor{green!70!black}{-1.293})\\
        \midrule
        \midrule
            \multirow[t]{4}{*}{IHDP} & CTGAN  & $0.663 \pm 0.018$ & $0.419 \pm 0.013$ & $0.888 \pm 0.010$ & $2.521 \pm 0.161$\\
            & \method$_\text{CTGAN}$ & $0.674 \pm 0.014$ (\textcolor{green!70!black}{+0.011}) & $0.424 \pm 0.011$ (\textcolor{green!70!black}{+0.005}) & $0.928 \pm 0.009$ (\textcolor{green!70!black}{+0.040}) & $1.709 \pm 0.052$ (\textcolor{green!70!black}{-0.812})\\
            & TabDDPM & $0.477 \pm 0.036$ & $0.340 \pm 0.022$ & $0.862 \pm 0.011$ & $2.706 \pm 0.138$\\
            & \method$_\text{TabDDPM}$ & $0.553 \pm 0.029$ (\textcolor{green!70!black}{+0.076}) & $0.396 \pm 0.015$ (\textcolor{green!70!black}{+0.056}) & $0.918 \pm 0.011$ (\textcolor{green!70!black}{+0.056}) & $2.346 \pm 0.088$ (\textcolor{green!70!black}{-0.360})\\
            & ARF & $0.528 \pm 0.009$ & $0.381 \pm 0.010$ & $0.921 \pm 0.009$ & $3.019 \pm 0.117$\\
            & \method$_\text{ARF}$ & $0.565 \pm 0.014$ (\textcolor{green!70!black}{+0.037}) & $0.394 \pm 0.010$ (\textcolor{green!70!black}{+0.013}) & $0.921 \pm 0.009$ (+0.000) & $1.629 \pm 0.056$ (\textcolor{green!70!black}{-1.390})\\
            & TVAE & $0.622 \pm 0.014$ & $0.410 \pm 0.010$ & $0.880 \pm 0.014$ & $3.198 \pm 0.172$\\
            & \method$_\text{TVAE}$ & $0.629 \pm 0.015$ (\textcolor{green!70!black}{+0.007}) & $0.412 \pm 0.011$ (\textcolor{green!70!black}{+0.002}) & $0.927 \pm 0.007$ (\textcolor{green!70!black}{+0.047}) & $2.100 \pm 0.075$ (\textcolor{green!70!black}{-1.098})\\
            & NFlow & $0.406 \pm 0.028$ & $0.309 \pm 0.012$ & $0.882 \pm 0.012$ & $3.835 \pm 0.345$\\
            & \method$_\text{NFlow}$ & $0.435 \pm 0.034$ (\textcolor{green!70!black}{+0.029}) & $0.333 \pm 0.020$ (\textcolor{green!70!black}{+0.024}) & $0.921 \pm 0.007$ (\textcolor{green!70!black}{+0.039}) & $2.177 \pm 0.118$ (\textcolor{green!70!black}{-1.658})\\
        \midrule
        \midrule
          \multirow[t]{4}{*}{ACIC} & TVAE & $0.901 \pm 0.014$ & $0.513 \pm 0.004$ & $0.929 \pm 0.005$ & $4.223 \pm 0.138$\\
          & \method$_\text{TVAE}$ & $0.900 \pm 0.014$ (\textcolor{red!70!black}{-0.001}) & $0.514 \pm 0.004$ (\textcolor{green!70!black}{+0.001}) & $0.972 \pm 0.002$ (\textcolor{green!70!black}{+0.043}) & $2.422 \pm 0.118$ (\textcolor{green!70!black}{-1.801})\\
          & CTGAN & $0.880 \pm 0.016$ & $0.421 \pm 0.013$ & $0.942 \pm 0.005$ & $4.518 \pm 0.186$\\
          & \method$_\text{CTGAN}$ & $0.873 \pm 0.014$ (\textcolor{red!70!black}{-0.007}) & $0.424 \pm 0.014$ (\textcolor{green!70!black}{+0.003}) & $0.972 \pm 0.002$ (\textcolor{green!70!black}{+0.030}) & $2.268 \pm 0.154$ (\textcolor{green!70!black}{-2.250})\\
          & ARF & $0.828 \pm 0.003$ & $0.430 \pm 0.002$ & $0.945 \pm 0.002$ & $4.633 \pm 0.146$\\
          & \method$_\text{ARF}$ & $0.835 \pm 0.004$ (\textcolor{green!70!black}{+0.007}) & $0.430 \pm 0.004$ (+0.000) & $0.977 \pm 0.002$ (\textcolor{green!70!black}{+0.032}) & $2.449 \pm 0.149$ (\textcolor{green!70!black}{-2.184})\\
          & NFlow & $0.748 \pm 0.019$ & $0.333 \pm 0.014$ & $0.838 \pm 0.035$ & $5.068 \pm 0.282$\\
          & \method$_\text{NFlow}$ & $0.744 \pm 0.021$ (\textcolor{red!70!black}{-0.004}) & $0.333 \pm 0.010$ (+0.000) & $0.971 \pm 0.002$ (\textcolor{green!70!black}{+0.133}) & $2.938 \pm 0.149$ (\textcolor{green!70!black}{-2.130})\\
          & TabDDPM & $0.124 \pm 0.028$ & $0.002 \pm 0.001$ & $0.813 \pm 0.023$ & $9.281 \pm 1.033$\\
          & \method$_\text{TabDDPM}$ & $0.141 \pm 0.035$ (\textcolor{green!70!black}{+0.017}) & $0.002 \pm 0.000$ (+0.000) & $0.955 \pm 0.019$ (\textcolor{green!70!black}{+0.142}) & $4.497 \pm 0.501$ (\textcolor{green!70!black}{-4.784})\\
        \bottomrule
    \end{tabular}
    \label{tab:real_world_exp_results}
\end{table}
\subsubsection{Comparison with causal generative models}\label{causal_model_comparison}
To fairly compare with CGMs, we must first address their more restrictive assumptions, as discussed in $\S$\ref{sec:related work}. For the ACTG, IHDP, and ACIC data, we do not know the true causal graphs; we simply know which features are the treatment and outcome. To construct reasonable causal graphs using this knowledge, we consider three methods:
\begin{enumerate}[leftmargin=!]
    \item Construction of a naive graph, $\mathcal{G}_\text{naive}$, in which each covariate in $\mathbf{X}$ causes $W$ and $Y$, $W$ causes $Y$, and every pair of covariates has a causal relationship between them;
    \item Using the constraint-based PC causal discovery algorithm \citep{spirtes2001causation} to discover a graph, $\mathcal{G}_\text{discovered}$;
    \item Pruning $\mathcal{G}_\text{discovered}$ by removing any edges which contradict the DGP we assume, i.e., edges from $Y$ to $W$ or $\mathbf{X}$, or from $W$ to $\mathbf{X}$ are removed, to form $\mathcal{G}_\text{pruned}$.
\end{enumerate}
In Table~\ref{tab:causal_gen_model_results}, we compare the best \method models from Table \ref{tab:real_world_exp_results} with two CGMs (using the best of the above graph methods): the additive noise model (ANM) \citep{hoyer2008nonlinear} implementation from \texttt{DoWhy-GCM} \citep{blobaum2024dowhy}, and a diffusion-based causal model (DCM) from \cite{chao2024modelingcausalmechanismsdiffusion} (full results in Appendix \ref{sec:causal_gen_model_exp}). 
\begin{tcolorbox}[colback=green!10!white!85!gray, colframe=green!10!white!85!gray, 
  colbacktitle=green!10!white!85!gray, coltitle = black, left=2mm, right=2mm, enhanced jigsaw, borderline west={3pt}{0pt}{green!50!black}, sharp corners, breakable]
    \vspace{-6pt}
    \textbf{Takeaway.} For each dataset, we see that \method outperforms the CGMs in almost every metric, only being outperformed in $P_{\alpha, \mathbf{X}}$ on the ACIC dataset. These results validate that, when the true causal graph is unknown, our less restrictive assumptions enable better generation of synthetic data containing treatments than CGMs.
    \vspace{-6pt}
\end{tcolorbox}

\begin{table}[thbp]
    \centering
    \scriptsize
    \caption{Medical data generation with CGMs and \method models, averaged over 20 runs with 95\% CIs. Coloured numbers are the relative differences between each CGM and \method model.}
    \begin{tabular}{l l l l l l}
        \toprule
        \textbf{Dataset} & \textbf{Model}  & $\boldsymbol{P_{\alpha, \mathbf{X}}}$ ($\uparrow$)& $\boldsymbol{R_{\beta, \mathbf{X}}}$ ($\uparrow$)& $\boldsymbol{\textbf{JSD}_\pi}$ ($\uparrow$)& $\boldsymbol{U_\textbf{PEHE}}$ ($\downarrow$)\\
        \midrule
        \multirow[t]{4}{*}{ACTG} 
            & \method$_\text{TVAE}$  & $0.929 \pm 0.008$ & $0.486 \pm 0.009$ & $0.958 \pm 0.004$ & $0.492 \pm 0.011$\\
            & DCM $\mathcal{G}_\text{pruned}$ & $0.758 \pm 0.013$ (\textcolor{red!70!black}{-0.171}) & $0.358 \pm 0.007$ (\textcolor{red!70!black}{-0.128}) & $0.957 \pm 0.003$ (\textcolor{red!70!black}{-0.001}) & $0.596 \pm 0.017$ (\textcolor{red!70!black}{+0.104})\\
            & ANM $\mathcal{G}_\text{discovered}$ & $0.836 \pm 0.007$ (\textcolor{red!70!black}{-0.093}) & $0.419 \pm 0.007$ (\textcolor{red!70!black}{-0.067}) & $0.952 \pm 0.004$ (\textcolor{red!70!black}{-0.006}) & $0.578 \pm 0.019$ (\textcolor{red!70!black}{+0.086})\\
        \midrule
        \midrule
        \multirow[t]{4}{*}{IHDP} 
            & \method$_\text{CTGAN}$ & $0.674 \pm 0.014$ & $0.424 \pm 0.011$ & $0.928 \pm 0.009$ & $1.709 \pm 0.052$\\
            & DCM $\mathcal{G}_\text{pruned}$ & $0.658 \pm 0.011$ (\textcolor{red!70!black}{-0.016}) & $0.360 \pm 0.007$ (\textcolor{red!70!black}{-0.064}) & $0.893 \pm 0.008$ (\textcolor{red!70!black}{-0.035}) & $2.059 \pm 0.140$ (\textcolor{red!70!black}{+0.350})\\
            & ANM $\mathcal{G}_\text{pruned}$ & $0.589 \pm 0.012$ (\textcolor{red!70!black}{-0.085}) & $0.359 \pm 0.009$ (\textcolor{red!70!black}{-0.065}) & $0.892 \pm 0.008$ (\textcolor{red!70!black}{-0.036}) & $1.865 \pm 0.059$ (\textcolor{red!70!black}{+0.156})\\
        \midrule
        \midrule
        \multirow[t]{4}{*}{ACIC} 
            & \method$_\text{TVAE}$  & $0.900 \pm 0.014$  & $0.514 \pm 0.004$ & $0.972 \pm 0.002$ & $2.422 \pm 0.118$ \\
            & DCM $\mathcal{G}_\text{discovered}$ & $0.942 \pm 0.004$ (\textcolor{green!70!black}{+0.042}) & $0.422 \pm 0.003$ (\textcolor{red!70!black}{-0.092}) & $0.957 \pm 0.003$ (\textcolor{red!70!black}{-0.015}) & $4.249 \pm 0.132$ (\textcolor{red!70!black}{+1.827})\\
            & ANM $\mathcal{G}_\text{discovered}$ & $0.929 \pm 0.003$ (\textcolor{green!70!black}{+0.029}) & $0.404 \pm 0.003$ (\textcolor{red!70!black}{-0.110}) & $0.872 \pm 0.002$ (\textcolor{red!70!black}{-0.100}) & $4.193 \pm 0.127$ (\textcolor{red!70!black}{+1.771})\\
        \bottomrule
    \end{tabular}
    \label{tab:causal_gen_model_results}
\end{table}

\subsection{Comparisons on simulated data}
\label{simulated experiments}
To investigate the performance delta between \method and generic generation, we use simulated data with tunable experimental knobs, similar to \cite{crabbe2022benchmarking}. These knobs include covariate dimensionality $d$, propensity function $\pi: \mathcal{X}^{(d)} \rightarrow [0,1]$, and prognostic and predictive functions $\mu_\text{prog.}, \mu_\text{pred.}: \mathcal{X}^{(d)} \rightarrow \mathbb{R}$.\footnote{Prognostic variables affect an outcome regardless of treatment, while predictive variables only affect treated outcomes. Prognostic/predictive functions dictate the effects of the relevant variables on the outcome.} Simulated sample $i$ is generated by drawing $\mathbf{X}^{(i)} \sim \mathcal{N}(0,I_d)$, $W^{(i)} \sim \text{Bern}[\pi(\mathbf{X}^{(i)})]$, and $Y^{(i)} \sim \mathcal{N}(\mu_\text{prog.}(\mathbf{X}^{(i)}) + W^{(i)} \cdot  \mu_\text{pred.}(\mathbf{X}^{(i)}), 1)$. With this DGP, we can assess performance on datasets tailored to specific situations. Across these experiments, we consistently compare between TabDDPM and $\text{\method}_\text{TabDDPM}$, and the default settings for each experimental knob are: 
$d=10$, $\pi(\mathbf{X}) = (1+e^{-1/2({X_1}^2 + {X_2}^2)})^{-1}$, $\mu_\text{prog.}(\mathbf{X}) = {X_1}^2 + {X_2}^2$, $\mu_\text{pred.}(\mathbf{X}) = {X_3}^2 + {X_4}^2$.
\subsubsection{Increasing covariate dimensionality}

\begin{minipage}{0.7\textwidth}
    To investigate performance as $\mathcal{D}_\text{r}$ increases in dimensionality, we vary $d \in \{5,10,20,50\}$.
\begin{tcolorbox}[colback=green!10!white!85!gray, colframe=green!10!white!85!gray, 
  colbacktitle=green!10!white!85!gray, coltitle = black, left=2mm, right=2mm, enhanced jigsaw, borderline west={3pt}{0pt}{green!50!black}, sharp corners,breakable]
    \vspace{-6pt}
    \textbf{Takeaway.} The performance delta between \method and generic generation grows with the dimensionality of $\mathbf{X}$. This follows the intuition that, as $d$ grows, $P_\mathbf{X}$ will dominate the joint distribution, and the comparatively small $P_{W|\mathbf{X}}$ and $P_{Y|W,\mathbf{X}}$ will be overlooked by generic models. The top of Figure \ref{fig:covariate dimensionality} shows that, as $d$ increases, both $\text{\method}_\text{TabDDPM}$ and TabDDPM preserve $P_{Y|W,\mathbf{X}}$ worse, but $\text{\method}_\text{TabDDPM}$ is less affected. The bottom of Figure \ref{fig:covariate dimensionality} is similar, showing that TabDDPM degrades more than $\text{\method}_\text{TabDDPM}$ in preserving $P_{W|\mathbf{X}}$ as $d$ grows. Direct modelling of these small, but important, distributions by \method results in better performance in high dimensions.
    \vspace{-6pt}
\end{tcolorbox}
\end{minipage}%
\hspace{0.01\textwidth}
\begin{minipage}{0.28\textwidth}
\centering
    \includegraphics[width=\textwidth]{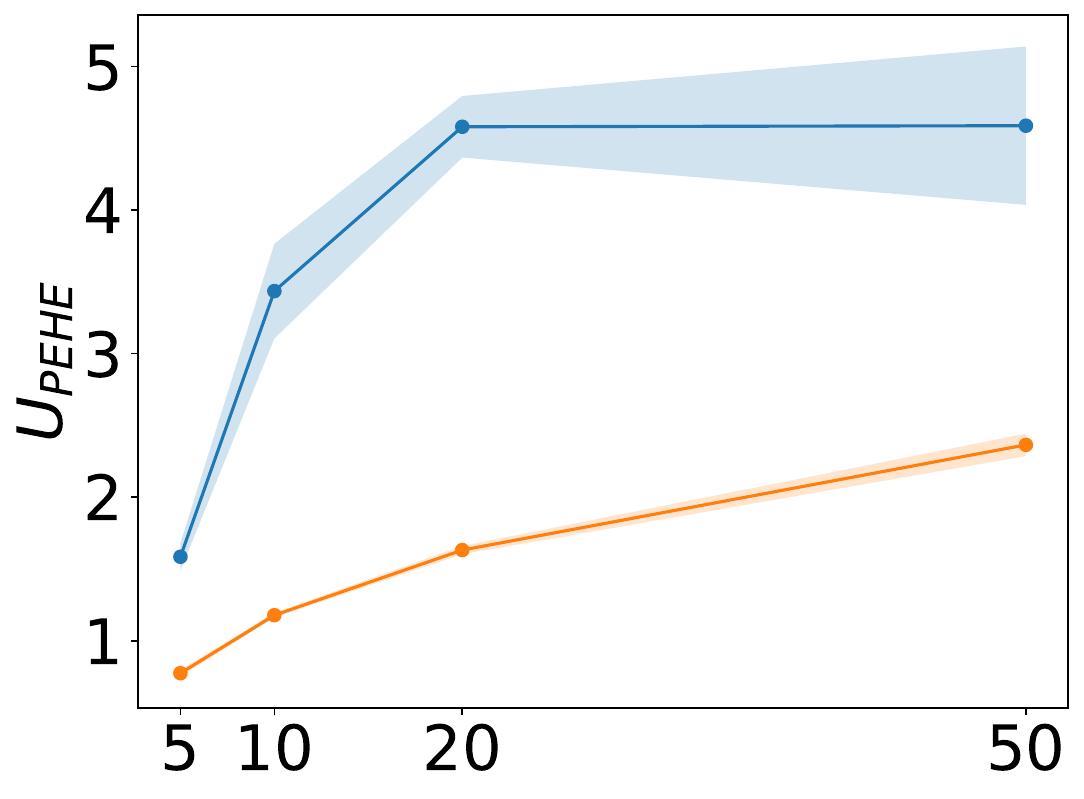}
    \vspace{0.5cm}
    \includegraphics[width=\textwidth]{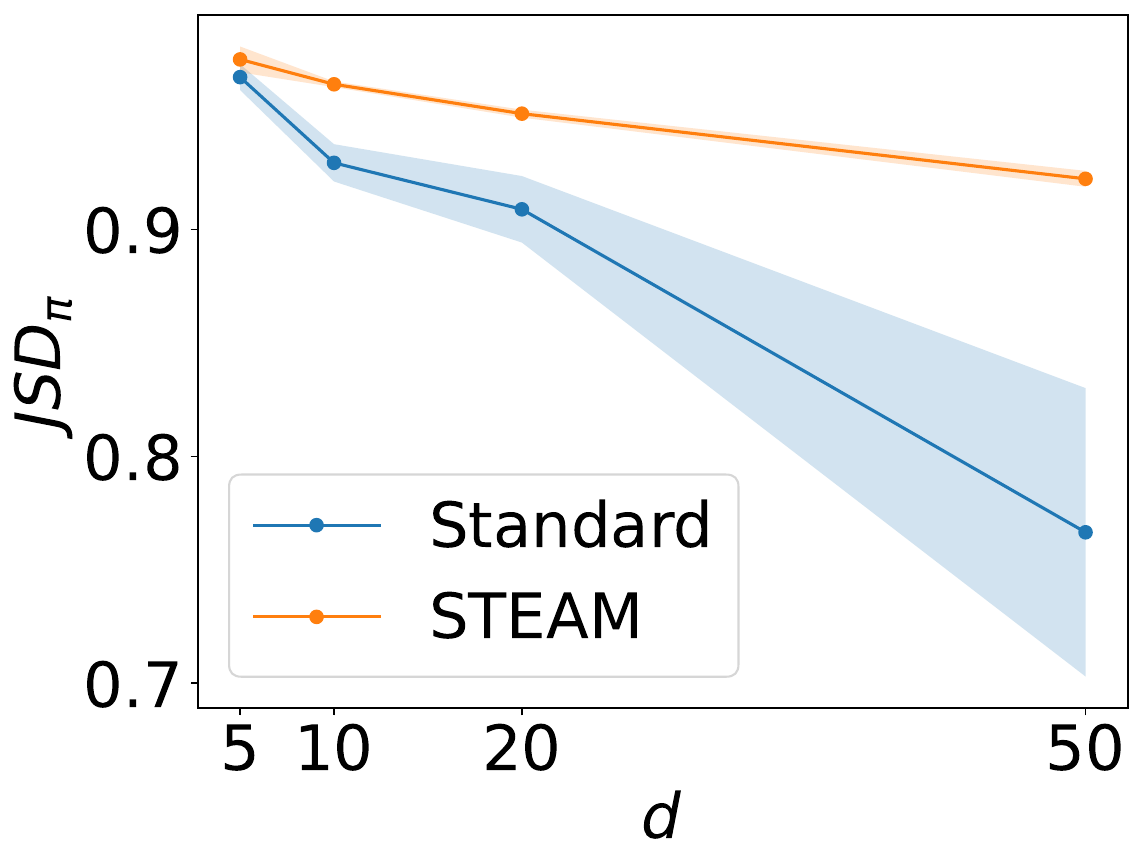}
    \vspace{-30pt}
    \captionof{figure}{$U_\text{PEHE}$ and $\text{JSD}_\pi$ as $d$ increases. Averaged over 10 runs, with 95\% CIs.}
    \label{fig:covariate dimensionality}
\end{minipage}

\subsubsection{Increasing treatment assignment complexity}

\begin{minipage}{0.7\textwidth}
    To investigate performance as $P_{W|\mathbf{X}}$ increases in complexity, we vary the number of covariates upon which it depends. We set $\pi(\mathbf{X}) = (1 + e^{-1/K\sum_{k=1}^K X_k^2})^{-1}$ for $K \in \{1,2,3,4,5\}$.
\begin{tcolorbox}[colback=green!10!white!85!gray, colframe=green!10!white!85!gray, 
  colbacktitle=green!10!white!85!gray, coltitle = black, left=2mm, right=2mm, enhanced jigsaw, borderline west={3pt}{0pt}{green!50!black}, sharp corners,breakable]
    \vspace{-6pt}
    \textbf{Takeaway.} \method increasingly outperforms generic generation in preserving more complex $P_{W|\mathbf{X}}$. Figure \ref{fig:treatment assignment complexity} shows that, as $K$ increases, $\text{\method}_\text{TabDDPM}$ maintains a good estimate of $P_{W|\mathbf{X}}$, with $\text{JSD}_\pi$ consistently near 1. TabDDPM degrades with $K$, widening the performance gap. Direct modelling by \method allows more complex $P_{W|\mathbf{X}}$ to be preserved.
    \vspace{-6pt}
\end{tcolorbox}
\end{minipage}
\hspace{0.01\textwidth}
\begin{minipage}{0.28\textwidth}
\centering
\includegraphics[width=\textwidth]{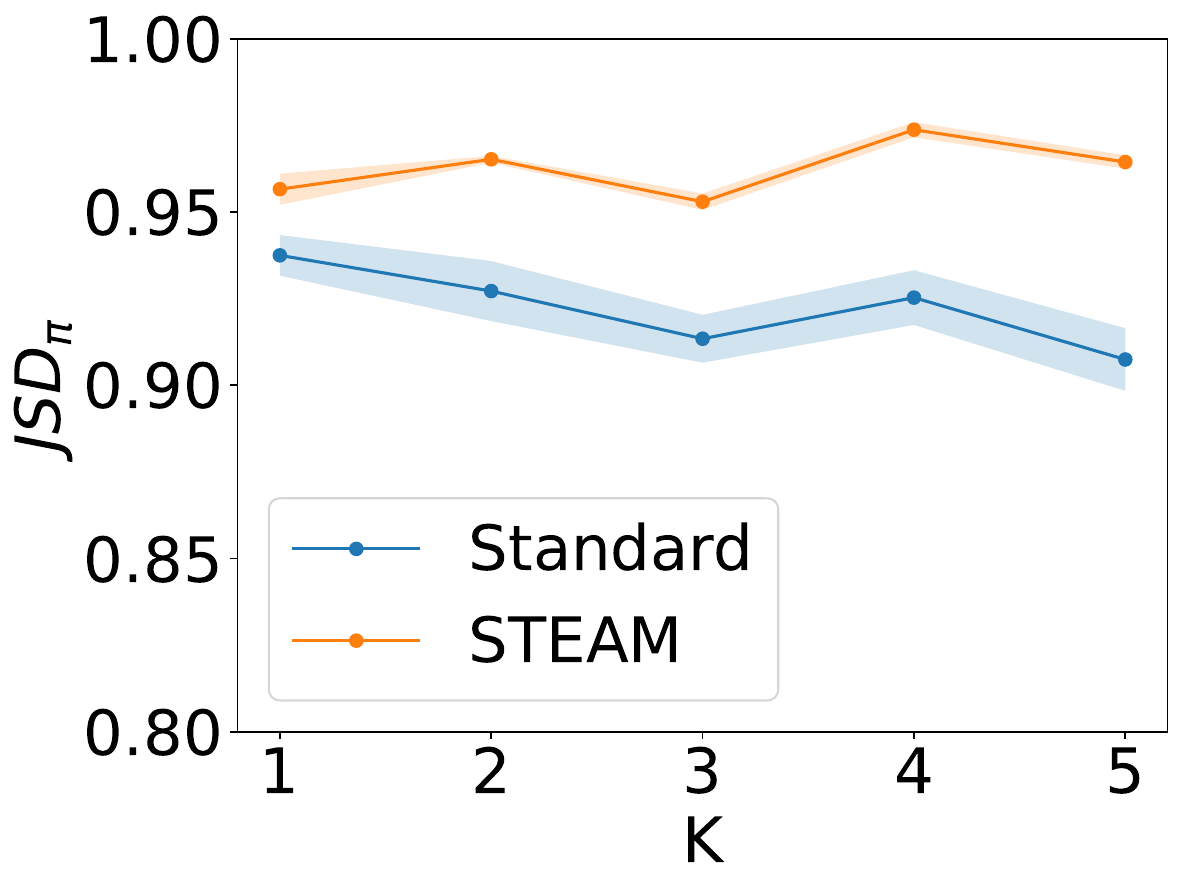}
\captionof{figure}{$\text{JSD}_\pi$ as $K$ increases. Averaged over 10 runs, with 95\% CIs.}
\label{fig:treatment assignment complexity}
\end{minipage}
\newpage
\subsubsection{Outcome heterogeneity}
\begin{minipage}{0.7\textwidth}
    To investigate performance as outcomes become increasingly heterogeneous, we vary the number of covariates upon which $P_{Y|W,\mathbf{X}}$ depends. We set $\mu_\text{pred.}(\mathbf{X}) = \sum_{k=3}^K X_k^2$ for $\ K \in \{3,4,5,6,7\}$.
\begin{tcolorbox}[colback=green!10!white!85!gray, colframe=green!10!white!85!gray, 
  colbacktitle=green!10!white!85!gray, coltitle = black, left=2mm, right=2mm, enhanced jigsaw, borderline west={3pt}{0pt}{green!50!black}, sharp corners,breakable]
    \vspace{-6pt}
    \textbf{Takeaway.} As $P_{Y|W,\mathbf{X}}$ becomes increasingly heterogeneous, its preservation by $\text{\method}_\text{TabDDPM}$ degrades only slightly, and much more dramatically for TabDDPM, as shown in Figure \ref{fig:outcome complexity}. Again, direct modelling of $Q_{Y|W,\mathbf{X}}$ by \method better preserves complex distributions.
    \vspace{-6pt}
\end{tcolorbox}
\end{minipage}
\hspace{0.01\textwidth}
\begin{minipage}{0.28\textwidth}
\centering
 \includegraphics[width=\textwidth]{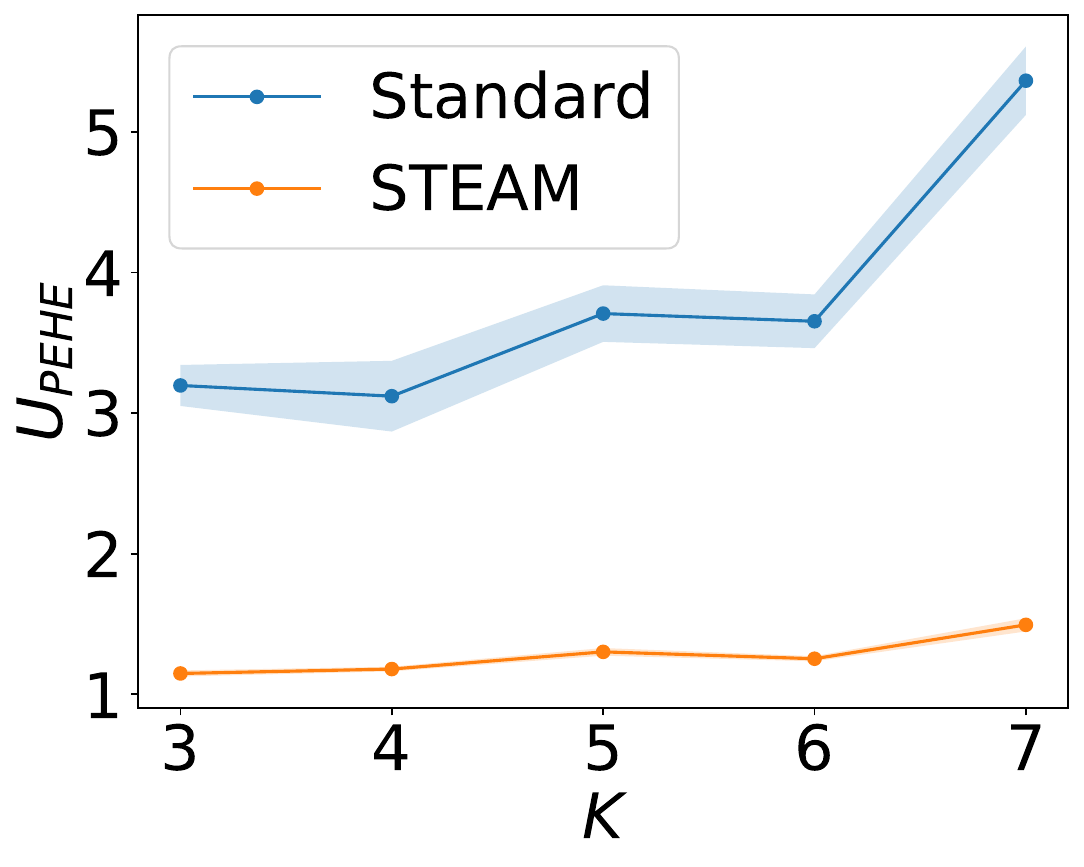}
\captionof{figure}{$U_\text{PEHE}$ as $K$ increases. Averaged over 10 runs, with 95\% CIs.}
\label{fig:outcome complexity}
\end{minipage}

The performance delta between \method and generic generation grows in complex settings. When generating synthetic copies of real data with high-dimensionality, or complex dependencies in $P_{W|\mathbf{X}}$ or $P_{Y|W, \mathbf{X}}$, \method increasingly outperforms. These situations are likely to emerge in real-world data, which is often highly complex, heightening the relevance of the \method framework.
\section{Discussion}\label{sec:discussion}
In this paper, we tackle a problem impeding progress in the medical community---low-quality synthetic data. Existing methods produce data that are poor for causal inference tasks, which are evaluated with misaligned metrics. Our evaluation metrics and generation framework, grounded in our desiderata which stem from the needs of analysts, remedy this.

We allow meaningful evaluation with our metrics, proposed in $\S$\ref{sec:metrics}, that can answer the key questions \emph{\textbf{Q1-3}} of downstream analysts from $\S$\ref{sec:introduction}. \method generates synthetic data of substantially higher quality than existing methods, demonstrated across a range of experiments in $\S$\ref{sec:experiments}, as well as in an ablation study (Appendix~\ref{ablation}) and hyperparameter stability study (Appendix~\ref{hyperparameter stability}). While we focus on medical data, our methods are also applicable to other fields where data contain treatments, or interventions, such as education, marketing, and public policy. In Appendix~\ref{extended real world results} we demonstrate performance on the Jobs dataset \cite{lalonde1986evaluating}, showing \method has similar benefits in non-medical settings.

\subsection{Future work}
There are many future research directions in this setting. In particular, generating synthetic data that respects formal definitions of privacy is an important step. In Appendix~\ref{sec:differential_privacy}, we prove the immediate compatibility of \method with existing differentially private methods, and we conduct initial experiments showing promising results. Developing sophisticated mechanisms for assigning a privacy budget across the component models in \method is a potential area of future work, as we discuss in Appendix~\ref{privacy budget appendix}.

Beyond privacy, extending \method to operate in more complicated settings presents several opportunities. One extension, for instance, in settings with a valid instrumental variable $Z$, could be to adapt the \method generation process to follow the corresponding causal structure:
\begin{equation*}
    Z \sim P_Z,\ \mathbf{X} \sim P_{\mathbf{X}|Z},\ W\sim P_{W|\mathbf{X},Z},\ Y \sim P_{Y|W,\mathbf{X}}.
\end{equation*}
This would produce synthetic data suitable for treatment effect analysis using instrumental variable methods \cite{hartford2017deep, frauen2023estimatingindividualtreatmenteffects}, which do not require the unconfoundedness assumption.

Other extensions could be designed for settings with multiple, or continuous, treatment options. \method can easily extend to accommodate multiple treatments by using a multi-class classifier for $Q_{W|\mathbf{X}}$, and PO regressors compatible with $>2$ treatment arms for $Q_{Y|W,\mathbf{X}}$. More significant architectural changes would be required for continous treatments, as $Q_{W|\mathbf{X}}$ would have to be replaced with a regressor, and $Q_{Y|W,\mathbf{X}}$ would require PO regressors designed for continuous treatments, such as those based on the generalised propensity score \cite{hirano2004propensity}.
\section*{Acknowledgements}
Harry Amad's studentship is funded by Canon Medical Systems Corporation.

\bibliographystyle{plain}
\bibliography{references}

\begin{thebibliography}{100}

\bibitem{akrami2020robust}
Haleh Akrami, Sergul Aydore, Richard~M. Leahy, and Anand~A. Joshi.
\newblock Robust variational autoencoder for tabular data with beta divergence, 2020.

\bibitem{faithfulalaa}
Ahmed Alaa, Boris Van~Breugel, Evgeny~S Saveliev, and Mihaela Van Der~Schaar.
\newblock How faithful is your synthetic data? {S}ample-level metrics for evaluating and auditing generative models.
\newblock In {\em International Conference on Machine Learning}, pages 290--306. PMLR, 2022.

\bibitem{annas2003hipaa}
George~J Annas.
\newblock {HIPAA} regulations: a new era of medical-record privacy?
\newblock {\em New England Journal of Medicine}, 348:1486--1490, 2003.

\bibitem{Austin2011introduction}
Peter~C Austin.
\newblock An introduction to propensity score methods for reducing the effects of confounding in observational studies.
\newblock {\em Multivariate Behavioral Research}, 46(3):399--424, 2011.

\bibitem{aydore2021differentially}
Sergul Aydore, William Brown, Michael Kearns, Krishnaram Kenthapadi, Luca Melis, Aaron Roth, and Ankit~A Siva.
\newblock Differentially private query release through adaptive projection.
\newblock In {\em International Conference on Machine Learning}, pages 457--467. PMLR, 2021.

\bibitem{bauchner2016data}
Howard Bauchner, Robert~M. Golub, and Phil~B. Fontanarosa.
\newblock Data sharing: An ethical and scientific imperative.
\newblock {\em JAMA}, 315(12):1238--1240, 2016.

\bibitem{bauer2024comprehensive}
André Bauer, Simon Trapp, Michael Stenger, Robert Leppich, Samuel Kounev, Mark Leznik, Kyle Chard, and Ian Foster.
\newblock Comprehensive exploration of synthetic data generation: A survey, 2024.

\bibitem{bayatstudy}
Reza Bayat.
\newblock A study on sample diversity in generative models: {GAN}s vs. diffusion models.
\newblock In {\em URL https://openreview.net/forum?id=BQpCuJoMykZ}, 2023.

\bibitem{blobaum2024dowhy}
Patrick Bl{\"o}baum, Peter G{\"o}tz, Kailash Budhathoki, Atalanti~A Mastakouri, and Dominik Janzing.
\newblock Dowhy-gcm: An extension of dowhy for causal inference in graphical causal models.
\newblock {\em Journal of Machine Learning Research}, 25(147):1--7, 2024.

\bibitem{borisov2023languagemodelsrealistictabular}
Vadim Borisov, Kathrin Seßler, Tobias Leemann, Martin Pawelczyk, and Gjergji Kasneci.
\newblock Language models are realistic tabular data generators.
\newblock In {\em International Conference on Learning Representations}, 2023.

\bibitem{brooks1992effects}
Jeanne Brooks-Gunn, Fong-ruey Liaw, and Pamela~Kato Klebanov.
\newblock Effects of early intervention on cognitive function of low birth weight preterm infants.
\newblock {\em The Journal of Pediatrics}, 120(3):350--359, 1992.

\bibitem{chao2024modelingcausalmechanismsdiffusion}
Patrick Chao, Patrick Blöbaum, Sapan Patel, and Shiva~Prasad Kasiviswanathan.
\newblock Modeling causal mechanisms with diffusion models for interventional and counterfactual queries.
\newblock In {\em Transactions on Machine Learning Research}, 2024.

\bibitem{choi2018generatingmultilabeldiscretepatient}
Edward Choi, Siddharth Biswal, Bradley Malin, Jon Duke, Walter~F. Stewart, and Jimeng Sun.
\newblock Generating multi-label discrete patient records using generative adversarial networks.
\newblock In {\em Machine learning for Healthcare Conference}, 2017.

\bibitem{crabbe2022benchmarking}
Jonathan Crabb\'{e}, Alicia Curth, Ioana Bica, and Mihaela van~der Schaar.
\newblock Benchmarking heterogeneous treatment effect models through the lens of interpretability.
\newblock In {\em Advances in Neural Information Processing Systems}, volume~35, pages 12295--12309, 2022.

\bibitem{nonparametriccurth}
Alicia Curth and Mihaela van~der Schaar.
\newblock Nonparametric estimation of heterogeneous treatment effects: From theory to learning algorithms.
\newblock In {\em International Conference on Artificial Intelligence and Statistics}, pages 1810--1818. PMLR, 2021.

\bibitem{curth2023search}
Alicia Curth and Mihaela Van Der~Schaar.
\newblock In search of insights, not magic bullets: Towards demystification of the model selection dilemma in heterogeneous treatment effect estimation.
\newblock In {\em International Conference on Machine Learning}, pages 6623--6642. PMLR, 2023.

\bibitem{d2023synthetic}
Saverio D'amico, Daniele Dall’Olio, Claudia Sala, Lorenzo Dall’Olio, Elisabetta Sauta, Matteo Zampini, Gianluca Asti, Luca Lanino, Giulia Maggioni, Alessia Campagna, et~al.
\newblock Synthetic data generation by artificial intelligence to accelerate research and precision medicine in hematology.
\newblock {\em JCO Clinical Cancer Informatics}, 7:e2300021, 2023.

\bibitem{dehejia1998causal}
Rajeev~H Dehejia and Sadek Wahba.
\newblock Causal effects in non-experimental studies: Re-evaluating the evaluation of training programs.
\newblock Working Paper 6586, National Bureau of Economic Research, June 1998.

\bibitem{dehejia2002propensity}
Rajeev~H. Dehejia and Sadek Wahba.
\newblock {Propensity score-matching methods for nonexperimental causal studies}.
\newblock {\em The Review of Economics and Statistics}, 84(1):151--161, 02 2002.

\bibitem{DHCRC_SynD}
{Digital Health Cooperative Research Centre (DHCRC)}.
\newblock Synthetic data community of practice ({SynD}).
\newblock https://digitalhealthcrc.com/synthetic-data-community-of-practice-synd/.

\bibitem{dorie2018automated}
Vincent Dorie, Jennifer Hill, Uri Shalit, Marc Scott, and Dan Cervone.
\newblock Automated versus do-it-yourself methods for causal inference: Lessons learned from a data analysis competition, 2018.

\bibitem{dwork2006calibrating}
Cynthia Dwork, Frank McSherry, Kobbi Nissim, and Adam Smith.
\newblock Calibrating noise to sensitivity in private data analysis.
\newblock In {\em Theory of Cryptography Conference}, pages 265--284. Springer, 2006.

\bibitem{dwork2014algorithmic}
Cynthia Dwork and Aaron Roth.
\newblock The algorithmic foundations of differential privacy.
\newblock {\em Foundations and Trends in Theoretical Computer Science}, 9(3–4):211–407, 2014.

\bibitem{emam2021optimizing}
Khaled~El Emam, Lucy Mosquera, and Chaoyi Zheng.
\newblock Optimizing the synthesis of clinical trial data using sequential trees.
\newblock {\em Journal of the American Medical Informatics Association}, 28(1):3--13, 2021.

\bibitem{esteban2017realvalued}
Cristóbal Esteban, Stephanie~L. Hyland, and Gunnar Rätsch.
\newblock Real-valued (medical) time series generation with recurrent conditional {GAN}s, 2017.

\bibitem{Ewald_2024}
Fiona~Katharina Ewald, Ludwig Bothmann, Marvin~N. Wright, Bernd Bischl, Giuseppe Casalicchio, and Gunnar König.
\newblock {\em A Guide to Feature Importance Methods for Scientific Inference}, page 440–464.
\newblock Springer, Cham, Switzerland, 2024.

\bibitem{ferreira2025identifying}
Simon Ferreira and Charles~K Assaad.
\newblock Identifying macro causal effects in {C-DMGs}.
\newblock {\em arXiv preprint arXiv:2504.01551}, 2025.

\bibitem{Feuerriegel2024}
Stefan Feuerriegel, Dennis Frauen, Valentyn Melnychuk, Jonas Schweisthal, Konstantin Hess, Alicia Curth, Stefan Bauer, Niki Kilbertus, Isaac~S. Kohane, and Mihaela van~der Schaar.
\newblock Causal machine learning for predicting treatment outcomes.
\newblock {\em Nature Medicine}, 30(4):958--968, Apr 2024.

\bibitem{frauen2022estimating}
Dennis Frauen and Stefan Feuerriegel.
\newblock Estimating individual treatment effects under unobserved confounding using binary instruments.
\newblock {\em arXiv preprint arXiv:2208.08544}, 2022.

\bibitem{frauen2023estimatingindividualtreatmenteffects}
Dennis Frauen and Stefan Feuerriegel.
\newblock Estimating individual treatment effects under unobserved confounding using binary instruments.
\newblock In {\em International Conference on Learning Representations}, 2023.

\bibitem{ghalebikesabi2023differentially}
Sahra Ghalebikesabi, Leonard Berrada, Sven Gowal, Ira Ktena, Robert Stanforth, Jamie Hayes, Soham De, Samuel~L Smith, Olivia Wiles, and Borja Balle.
\newblock Differentially private diffusion models generate useful synthetic images.
\newblock {\em arXiv preprint arXiv:2302.13861}, 2023.

\bibitem{goncalves2020generation}
Andre Goncalves, Priyadip Ray, Braden Soper, Jennifer Stevens, Linda Coyle, and Ana~Paula Sales.
\newblock Generation and evaluation of synthetic patient data.
\newblock {\em BMC Medical Research Methodology}, 20:108, 2020.

\bibitem{gonzales2023synthetic}
Aldren Gonzales, Guruprabha Guruswamy, and Scott~R Smith.
\newblock Synthetic data in health care: A narrative review.
\newblock {\em PLOS Digital Health}, 2(1):e0000082, 2023.

\bibitem{goodfellow2020generative}
Ian Goodfellow, Jean Pouget-Abadie, Mehdi Mirza, Bing Xu, David Warde-Farley, Sherjil Ozair, Aaron Courville, and Yoshua Bengio.
\newblock Generative adversarial networks.
\newblock {\em Communications of the ACM}, 63(11):139--144, 2020.

\bibitem{gretton2012kernel}
Arthur Gretton, Karsten~M Borgwardt, Malte~J Rasch, Bernhard Sch{\"o}lkopf, and Alexander Smola.
\newblock A kernel two-sample test.
\newblock {\em The Journal of Machine Learning Research}, 13(1):723--773, 2012.

\bibitem{hammer1996trial}
Scott~M. Hammer, David~A. Katzenstein, Michael~D. Hughes, Holly Gundacker, Robert~T. Schooley, Richard~H. Haubrich, W.~Keith Henry, Michael~M. Lederman, John~P. Phair, Manette Niu, Martin~S. Hirsch, and Thomas~C. Merigan.
\newblock A trial comparing nucleoside monotherapy with combination therapy in hiv-infected adults with cd4 cell counts from 200 to 500 per cubic millimeter.
\newblock {\em New England Journal of Medicine}, 335(15):1081--1090, 1996.

\bibitem{hansen2023reimaginingsynthetictabulardata}
Lasse Hansen, Nabeel Seedat, Mihaela van~der Schaar, and Andrija Petrovic.
\newblock Reimagining synthetic tabular data generation through data-centric {AI}: A comprehensive benchmark.
\newblock In {\em Advances in Neural Information Processing Systems (Datasets and Benchmarks Track)}, 2023.

\bibitem{hartford2017deep}
Jason Hartford, Greg Lewis, Kevin Leyton-Brown, and Matt Taddy.
\newblock Deep iv: A flexible approach for counterfactual prediction.
\newblock In {\em International Conference on Machine Learning}, pages 1414--1423. PMLR, 2017.

\bibitem{hatt2022combiningobservationalrandomizeddata}
Tobias Hatt, Jeroen Berrevoets, Alicia Curth, Stefan Feuerriegel, and Mihaela van~der Schaar.
\newblock Combining observational and randomized data for estimating heterogeneous treatment effects, 2022.

\bibitem{hermansson2021discovering}
Erik Hermansson and David Svensson.
\newblock {\em On discovering treatment-effect modifiers using virtual twins and causal forest ML in the presence of prognostic biomarkers}, pages 624--640.
\newblock Springer, Cham, Switzerland, 09 2021.

\bibitem{hill2011bayesian}
Jennifer~L Hill.
\newblock Bayesian nonparametric modeling for causal inference.
\newblock {\em Journal of Computational and Graphical Statistics}, 20(1):217--240, 2011.

\bibitem{hirano2004propensity}
Keisuke Hirano and Guido~W. Imbens.
\newblock {\em The Propensity Score with Continuous Treatments}, chapter~7, pages 73--84.
\newblock John Wiley \& Sons, Ltd, 2004.

\bibitem{ho2020denoising}
Jonathan Ho, Ajay Jain, and Pieter Abbeel.
\newblock Denoising diffusion probabilistic models.
\newblock {\em Advances in Neural Information Processing Systems}, 33:6840--6851, 2020.

\bibitem{holland1986statistics}
Paul~W Holland.
\newblock Statistics and causal inference.
\newblock {\em Journal of the American statistical Association}, 81(396):945--960, 1986.

\bibitem{diffprivlib}
Naoise Holohan, Stefano Braghin, P{\'o}l Mac~Aonghusa, and Killian Levacher.
\newblock Diffprivlib: the {IBM Differential Privacy Library}.
\newblock {\em arXiv:1907.02444}, 2019.

\bibitem{hoyer2008nonlinear}
Patrik Hoyer, Dominik Janzing, Joris~M Mooij, Jonas Peters, and Bernhard Sch{\"o}lkopf.
\newblock Nonlinear causal discovery with additive noise models.
\newblock {\em Advances in Neural Information Processing Systems}, 21, 2008.

\bibitem{hüyük2024defining}
Alihan Hüyük, Qiyao Wei, Alicia Curth, and Mihaela van~der Schaar.
\newblock Defining expertise: Applications to treatment effect estimation.
\newblock In {\em International Conference on Learning Representations}, 2024.

\bibitem{Jadon_2023}
Aryan Jadon and Shashank Kumar.
\newblock Leveraging generative {AI} models for synthetic data generation in healthcare: Balancing research and privacy.
\newblock In {\em 2023 International Conference on Smart Applications, Communications and Networking (SmartNets)}. IEEE, 2023.

\bibitem{johnson2016mimic}
Alistair~EW Johnson, Tom~J Pollard, Lu~Shen, Li-wei~H Lehman, Mengling Feng, Mohammad Ghassemi, Benjamin Moody, Peter Szolovits, Leo Anthony~Celi, and Roger~G Mark.
\newblock {MIMIC-III}: a freely accessible critical care database.
\newblock {\em Scientific Data}, 3:160035, 2016.

\bibitem{jordon2022synthetic}
James Jordon, Lukasz Szpruch, Florimond Houssiau, Mirko Bottarelli, Giovanni Cherubin, Carsten Maple, Samuel~N Cohen, and Adrian Weller.
\newblock Synthetic data--what, why and how?
\newblock {\em arXiv:2205.03257}, 2022.

\bibitem{Jordon2018PATEGAN}
James Jordon, Jinsung Yoon, and Mihaela van~der Schaar.
\newblock {PATE-GAN}: Generating synthetic data with differential privacy guarantees.
\newblock In {\em International Conference on Learning Representations}, 2018.

\bibitem{kallus2019interval}
Nathan Kallus, Xiaojie Mao, and Angela Zhou.
\newblock Interval estimation of individual-level causal effects under unobserved confounding.
\newblock In {\em The 22nd international conference on artificial intelligence and statistics}, pages 2281--2290. PMLR, 2019.

\bibitem{kantorovich1960mathematical}
L.~V. Kantorovich.
\newblock Mathematical methods of organizing and planning production.
\newblock {\em Management Science}, 6(4):366--422, 1960.

\bibitem{kaur2020application}
Dhamanpreet Kaur, Matthew Sobiesk, Shubham Patil, Jin Liu, Puran Bhagat, Amar Gupta, and Natasha Markuzon.
\newblock Application of {B}ayesian networks to generate synthetic health data.
\newblock {\em Journal of the American Medical Informatics Association}, 28(4):801--811, 2020.

\bibitem{kennedy2020optimal}
Edward~H Kennedy et~al.
\newblock Optimal doubly robust estimation of heterogeneous causal effects.
\newblock {\em arXiv:2004.14497}, 2020.

\bibitem{kingma2022autoencoding}
Diederik~P Kingma and Max Welling.
\newblock Auto-encoding variational {B}ayes, 2022.

\bibitem{komanduri2024identifiablecausalrepresentationscontrollable}
Aneesh Komanduri, Xintao Wu, Yongkai Wu, and Feng Chen.
\newblock From identifiable causal representations to controllable counterfactual generation: A survey on causal generative modeling, 2024.

\bibitem{kotelnikov2022tabddpm}
Akim Kotelnikov, Dmitry Baranchuk, Ivan Rubachev, and Artem Babenko.
\newblock {TabDDPM}: Modelling tabular data with diffusion models.
\newblock In {\em International Conference on Machine Learning}, 2023.

\bibitem{kullback1951information}
S.~Kullback and R.~A. Leibler.
\newblock On information and sufficiency.
\newblock {\em The Annals of Mathematical Statistics}, 22(1):79--86, 1951.

\bibitem{kunzel2019metalearners}
S{\"o}ren~R K{\"u}nzel, Jasjeet~S Sekhon, Peter~J Bickel, and Bin Yu.
\newblock Metalearners for estimating heterogeneous treatment effects using machine learning.
\newblock {\em Proceedings of the National Academy of Sciences}, 116(10):4156--4165, 2019.

\bibitem{lalonde1986evaluating}
Robert~J LaLonde.
\newblock Evaluating the econometric evaluations of training programs with experimental data.
\newblock {\em American Economic Review}, 76:604--620, 1986.

\bibitem{lee2020generating}
Dongha Lee, Hwanjo Yu, Xiaoqian Jiang, Deevakar Rogith, Meghana Gudala, Mubeen Tejani, Qiuchen Zhang, and Li~Xiong.
\newblock Generating sequential electronic health records using dual adversarial autoencoder.
\newblock {\em Journal of the American Medical Informatics Association}, 27(9):1411--1419, 2020.

\bibitem{li2018addressing}
Fan Li, Laine~E Thomas, and Fan Li.
\newblock Addressing extreme propensity scores via the overlap weights.
\newblock {\em American Journal of Epidemiology}, 188(1):250--257, 09 2018.

\bibitem{lin1991divergence}
J.~Lin.
\newblock Divergence measures based on the {Shannon} entropy.
\newblock {\em IEEE Transactions on Information Theory}, 37(1):145--151, 1991.

\bibitem{lipman2023flowmatchinggenerativemodeling}
Yaron Lipman, Ricky T.~Q. Chen, Heli Ben-Hamu, Maximilian Nickel, and Matt Le.
\newblock Flow matching for generative modeling.
\newblock In {\em International Conference on Learning Representations}, 2023.

\bibitem{liu2021iterative}
Terrance Liu, Giuseppe Vietri, and Steven~Z Wu.
\newblock Iterative methods for private synthetic data: Unifying framework and new methods.
\newblock {\em Advances in Neural Information Processing Systems}, 34:690--702, 2021.

\bibitem{louizos2017causal}
Christos Louizos, Uri Shalit, Joris Mooij, David Sontag, Richard Zemel, and Max Welling.
\newblock Causal effect inference with deep latent-variable models.
\newblock In {\em Advances in Neural Information Processing Systems}, 2017.

\bibitem{ma2024diffpo}
Yuchen Ma, Valentyn Melnychuk, Jonas Schweisthal, and Stefan Feuerriegel.
\newblock {DiffPO}: A causal diffusion model for learning distributions of potential outcomes.
\newblock {\em Advances in Neural Information Processing Systems}, 2024.

\bibitem{massey1951kolmogorov}
Frank~J Massey~Jr.
\newblock The {Kolmogorov-Smirnov} test for goodness of fit.
\newblock {\em Journal of the American Statistical Association}, 46(253):68--78, 1951.

\bibitem{mckenna2021winning}
Ryan McKenna, Gerome Miklau, and Daniel Sheldon.
\newblock Winning the nist contest: A scalable and general approach to differentially private synthetic data.
\newblock {\em arXiv preprint arXiv:2108.04978}, 2021.

\bibitem{mckenna2022aim}
Ryan McKenna, Brett Mullins, Daniel Sheldon, and Gerome Miklau.
\newblock {AIM}]: An adaptive and iterative mechanism for differentially private synthetic data.
\newblock {\em Proceedings of the VLDB Endowment}, 15(11):2599--2612, 2022.

\bibitem{MicrosoftStartups_SAHealth_2023}
{Microsoft Startups}.
\newblock {South Australian Health: Synthetic data enables safe EHR data sharing}, 2023.
\newblock https://www.microsoft.com/en-us/startups/blog/south-australian-health-synthetic-data-safe-ehr-data-sharing/.

\bibitem{murtaza2023synthetic}
Hajra Murtaza, Musharif Ahmed, Naurin~Farooq Khan, Ghulam Murtaza, Saad Zafar, and Ambreen Bano.
\newblock Synthetic data generation: State of the art in health care domain.
\newblock {\em Computer Science Review}, 48:100546, 2023.

\bibitem{niswander1972collaborative}
Kenneth~R Niswander.
\newblock The collaborative perinatal study of the national institute of neurological diseases and stroke.
\newblock {\em The Woman and Their Pregnancies}, 1972.

\bibitem{Park_2018}
Noseong Park, Mahmoud Mohammadi, Kshitij Gorde, Sushil Jajodia, Hongkyu Park, and Youngmin Kim.
\newblock Data synthesis based on generative adversarial networks.
\newblock {\em Proceedings of the VLDB Endowment}, 11(10):1071–1083, June 2018.

\bibitem{pearl2009causality}
Judea Pearl.
\newblock {\em Causality: Models, Reasoning and Inference}.
\newblock Cambridge University Press, USA, 2nd edition, 2009.

\bibitem{scikit-learn}
F.~Pedregosa, G.~Varoquaux, A.~Gramfort, V.~Michel, B.~Thirion, O.~Grisel, M.~Blondel, P.~Prettenhofer, R.~Weiss, V.~Dubourg, J.~Vanderplas, A.~Passos, D.~Cournapeau, M.~Brucher, M.~Perrot, and E.~Duchesnay.
\newblock {scikit-learn}: Machine learning in {P}ython.
\newblock {\em Journal of Machine Learning Research}, 12:2825--2830, 2011.

\bibitem{Petersen2010diagnosing}
Maya~L Petersen, Kristin~E Porter, Susan Gruber, Yue Wang, and Mark~J van~der Laan.
\newblock Diagnosing and responding to violations in the positivity assumption.
\newblock {\em Statistical Methods in Medical Research}, 21(1):31--54, October 2010.

\bibitem{qian2023synthcity}
Zhaozhi Qian, Bogdan-Constantin Cebere, and Mihaela van~der Schaar.
\newblock Synthcity: facilitating innovative use cases of synthetic data in different data modalities, 2023.

\bibitem{rezende2016variationalinferencenormalizingflows}
Danilo~Jimenez Rezende and Shakir Mohamed.
\newblock Variational inference with normalizing flows.
\newblock In {\em International Conference on Machine Learning}, 2015.

\bibitem{Rudolph2022effects}
Kara~E Rudolph, Catherine Gimbrone, Ellicott~C Matthay, Iv{\'a}n D{\'\i}az, Corey~S Davis, Katherine Keyes, and Magdalena Cerd{\'a}.
\newblock When effects cannot be estimated: Redefining estimands to understand the effects of naloxone access laws.
\newblock {\em Epidemiology}, 33(5):689--698, 2022.

\bibitem{assessingsajjadi}
Mehdi S.~M. Sajjadi, Olivier Bachem, Mario Lucic, Olivier Bousquet, and Sylvain Gelly.
\newblock Assessing generative models via precision and recall.
\newblock In {\em Advances in Neural Information Processing Systems}, 2018.

\bibitem{sanchez2022vaca}
Pablo S{\'a}nchez-Martin, Miriam Rateike, and Isabel Valera.
\newblock {VACA}: Designing variational graph autoencoders for causal queries.
\newblock In {\em Proceedings of the AAAI Conference on Artificial Intelligence}, volume~36, pages 8159--8168, 2022.

\bibitem{shalit_estimating_2017}
Uri Shalit, Fredrik~D. Johansson, and David Sontag.
\newblock Estimating individual treatment effect: generalization bounds and algorithms.
\newblock In {\em International Conference on Machine Learning}, pages 3076--3085, 2017.

\bibitem{shi2019adapting}
Claudia Shi, David Blei, and Victor Veitch.
\newblock Adapting neural networks for the estimation of treatment effects.
\newblock {\em Advances in Neural Information Processing Systems}, 32, 2019.

\bibitem{shi2006unsupervised}
Tao Shi and Steve Horvath.
\newblock Unsupervised learning with random forest predictors.
\newblock {\em Journal of Computational and Graphical Statistics}, 15(1):118--138, 2006.

\bibitem{shimoni2019evaluation}
Yishai Shimoni, Ehud Karavani, Sivan Ravid, Peter Bak, Tan~Hung Ng, Sharon~Hensley Alford, Denise Meade, and Yaara Goldschmidt.
\newblock An evaluation toolkit to guide model selection and cohort definition in causal inference, 2019.

\bibitem{song2020score}
Yang Song, Jascha Sohl-Dickstein, Diederik~P Kingma, Abhishek Kumar, Stefano Ermon, and Ben Poole.
\newblock Score-based generative modeling through stochastic differential equations.
\newblock {\em arXiv preprint arXiv:2011.13456}, 2020.

\bibitem{spirtes2001causation}
Peter Spirtes, Clark Glymour, and Richard Scheines.
\newblock {\em Causation, prediction, and search}.
\newblock MIT Press, Cambridge, MA, 2001.

\bibitem{torfi2022differentially}
Amirsina Torfi, Edward~A Fox, and Chandan~K Reddy.
\newblock Differentially private synthetic medical data generation using convolutional gans.
\newblock {\em Information Sciences}, 586:485--500, 2022.

\bibitem{tucker2020generating}
Allan Tucker, Zhenchen Wang, Ylenia Rotalinti, and Puja Myles.
\newblock Generating high-fidelity synthetic patient data for assessing machine learning healthcare software.
\newblock {\em npj Digital Medicine}, 3:147, 2020.

\bibitem{voigt2017eu}
Paul Voigt and Axel Von~dem Bussche.
\newblock The eu general data protection regulation (gdpr).
\newblock {\em A Practical Guide, 1st Ed., Cham: Springer International Publishing}, 10(3152676):10--5555, 2017.

\bibitem{watson2023adversarial}
David~S. Watson, Kristin Blesch, Jan Kapar, and Marvin~N. Wright.
\newblock Adversarial random forests for density estimation and generative modeling, 2023.

\bibitem{wirth2021privacy}
Felix~Nikolaus Wirth, Thierry Meurers, Marco Johns, and Fabian Prasser.
\newblock Privacy-preserving data sharing infrastructures for medical research: systematization and comparison.
\newblock {\em BMC Medical Informatics and Decision Making}, 21:1--13, 2021.

\bibitem{Wolff2019probast}
Robert~F Wolff, Karel G~M Moons, Richard~D Riley, Penny~F Whiting, Marie Westwood, Gary~S Collins, Johannes~B Reitsma, Jos Kleijnen, Sue Mallett, and {PROBAST Group}.
\newblock {PROBAST}: A tool to assess the risk of bias and applicability of prediction model studies.
\newblock {\em Annals of Internal Medine}, 170(1):51--58, 2019.

\bibitem{xie2018differentially}
Liyang Xie, Kaixiang Lin, Shu Wang, Fei Wang, and Jiayu Zhou.
\newblock Differentially private generative adversarial network, 2018.

\bibitem{xu2019modeling}
Lei Xu, Maria Skoularidou, Alfredo Cuesta-Infante, and Kalyan Veeramachaneni.
\newblock Modeling tabular data using conditional {GAN}.
\newblock In {\em Advances in Neural Information Processing Systems}, 2019.

\bibitem{yan2022multifaceted}
Chao Yan, Yao Yan, Zhiyu Wan, Ziqi Zhang, Larsson Omberg, Justin Guinney, Sean~D Mooney, and Bradley~A Malin.
\newblock A multifaceted benchmarking of synthetic electronic health record generation models.
\newblock {\em Nature Communications}, 13(1):7609, 2022.

\bibitem{Yoon2020anonymization}
Jinsung Yoon, Lydia~N Drumright, and Mihaela van~der Schaar.
\newblock Anonymization through data synthesis using generative adversarial networks ({ADS-GAN}).
\newblock {\em IEEE Journal of Biomedical and Health Informatics}, 24(8):2378--2388, 2020.

\bibitem{opacus}
Ashkan Yousefpour, Igor Shilov, Alexandre Sablayrolles, Davide Testuggine, Karthik Prasad, Mani Malek, John Nguyen, Sayan Ghosh, Akash Bharadwaj, Jessica Zhao, Graham Cormode, and Ilya Mironov.
\newblock Opacus: {U}ser-friendly differential privacy library in {PyTorch}.
\newblock {\em arXiv:2109.12298}, 2021.

\end{thebibliography}


\newpage

\appendix

\section{Example synthetic data papers using medical data containing treatments}\label{sec:synth_data_misuse}

Here, we detail selected works in the synthetic data literature that do not consider the downstream task of causal inference. Note that we do not claim that this list is exhaustive, as this is a pervasive problem in the synthetic data literature, and we mean only to provide a few examples here to demonstrate this problem and provide motivation for our paper.

\subsection{MedGAN}
In the paper \textit{`Generating Multi-label Discrete Patient Records using Generative Adversarial Networks'} \citep{choi2018generatingmultilabeldiscretepatient}, the authors propose a GAN-based approach to generate `realistic synthetic patient records'. In doing so, they experiment on multiple datasets containing treatments, including one from Sutter Palo Alto Medical Foundation (PAMF), which
consists of longitudinal medical records of 258,000 patients, as well as the MIMIC-III dataset \citep{johnson2016mimic}, which includes 46,000 intensive care unit patient records. Both datasets include treatments administered to patients, and they therefore invite downstream analysts to conduct causal inference tasks, such as treatment effect estimation.

Nevertheless, in this paper, standard generation and evaluation practices are followed, not differentiating between covariates, treatments, and outcomes. In particular, the evaluation protocol relies on marginal comparisons and predictive utility assessment, which offers limited information as to how well the generation method, \texttt{medGAN}, produces useful data for causal inference. Furthermore, \texttt{medGAN} itself draws samples directly from the joint distribution, not mimicking the DGP of treatment data, or optimising for the distributions most important for causal inference.

\subsection{TabDDPM}
The paper \textit{`TabDDPM: Modelling Tabular Data with Diffusion Models'} \citep{kotelnikov2022tabddpm} proposes a diffusion-based tabular data generation method. While not explicitly geared towards medical data, this paper does use medical data with variables that could be seen as treatments in its experiments. It, therefore, at least implicitly, positions itself to work on data containing treatments, and invites users to conduct generation with \texttt{TabDDPM} on such data. Specifically, the cardiovascular disease dataset from \url{https://www.kaggle.com/datasets/sulianova/cardiovascular-disease-dataset} is used, and this data could be analysed via causal inference by setting `physical activity' as a treatment, to estimate its effect on cardiovascular disease.

However, in this paper, only standard evaluation and generation methods are used, and the needs of downstream analysts pursuing causal inference tasks are not acknowledged. Evaluation involves only predictive utility measures, and the \texttt{TabDDPM} method generates all variables in a sample simultaneously, not optimising for the distributions most important for causal inference.

\subsection{GReaT}
The paper \textit{`Language Models are Realistic Tabular Data Generators'} \citep{borisov2023languagemodelsrealistictabular} proposes an LLM-based generator. Similar to the \texttt{TabDDPM} paper, this paper is not explicitly geared towards medical data, but it demonstrates on medical data containing treatments, thereby implicitly condoning its use on this type of data. Specifically, the dataset \texttt{sick} from \url{https://www.openml.org/search?type=data&sort=runs&id=38&status=active} is demonstrated on, which could be analysed via causal inference to assess the effect of `thyroxine' or `antithyroid' treatments. Nevertheless, once again, this paper does not consider downstream analysis involving causal inference, only evaluating its \texttt{GReaT} method with predictive utility metrics and a discriminator score.

\subsection{Benchmarking process for synthetic electronic health records}

Finally, the paper \textit{`A Multifaceted benchmarking of synthetic electronic health record generation models'} proposes a benchmarking framework for use on synthetic electronic health record (EHR) data \citep{yan2022multifaceted}. Naturally, EHRs will include treatments administered to patients, and they will likely be analysed with treatment effect estimation in mind. In the proposed benchmarking framework, the evaluation procedures—including marginal comparison, correlation comparison, and predictive utility—do not differentiate between covariates, treatments, and outcomes, or acknowledge the needs of downstream analysts conducting causal inference.

\newpage

\section{Extended literature review}
\label{extended RW}
To provide useful context for readers, we extend our literature review here. 

\paragraph{Evaluation.} We extend on the synthetic data evaluation practices summarised in Table \ref{tab:rw_eval} here.

\textit{Marginal comparison.} Assessing the distributional distance between synthetic and real marginals is often used to offer a quantitative assessment of how well individual variables are modelled \citep{yan2022multifaceted, tucker2020generating, goncalves2020generation}. The distance function $d$ to conduct this can be set from a variety of choices, including including KL divergence \citep{kullback1951information}, Jensen-Shannon distance \citep{lin1991divergence}, Wasserstein distance \citep{kantorovich1960mathematical}, Kolmogorov-Smirnov score \citep{massey1951kolmogorov},  MMD \citep{gretton2012kernel}, and many more.

\textit{Correlation matrix comparison.} Correlation-based assessment can offer a sense of how well inter-dependencies between variables are modelled in synthetic data \citep{murtaza2023synthetic}. This commonly involves calculating synthetic and real 2-way correlation matrices, and assessing their difference, by setting $d$ as a distance such as Frobenius norm \citep{goncalves2020generation} and absolute error \citep{kaur2020application}.

\textit{Joint distribution comparison.} Metrics based on notions of statistical divergence can offer a means of quantifying how different the entire joint distributions of real and synthetic data are \citep{Yoon2020anonymization, tucker2020generating, torfi2022differentially}. The distance function $d$ can be set to largely the same family of functions as in the marginal comparison case.

\textit{Precision and recall analysis.} Precision and recall, originally proposed for generative model assessment in \cite{assessingsajjadi}, measure if generated samples are covered by real samples, and vice versa. Alpha precision and beta recall \citep{faithfulalaa} are refined versions of the original metrics, which account for the densities of the real and generative distributions, rather than just comparing supports.

\textit{Discriminator performance.} Discriminator performance is a slightly unique evaluation practice, involving a `discriminator', which predicts whether instances are synthetic or real, where poor performance of the discriminator indicates realism in the synthetic data \citep{kaur2020application, lee2020generating, emam2021optimizing, borisov2023languagemodelsrealistictabular}.

\textit{Predictive utility.} Predictive utility metrics offer a practical evaluation of synthetic data by quantifying the performance of a synthetically-trained predictive model. The “train on synthetic, test on real” (TSTR) paradigm \citep{esteban2017realvalued} is the common approach to such assessment, measuring the accuracy of a synthetically-trained model in predicting a target label on a real test set.

\paragraph{Generic generative models.} We expand on the generic generative modelling paradigm outlined in Table \ref{tab:rw_gen} by describing specific existing generative models which adhere to it. These models approximate the real joint distribution using a diverse range of techniques.

\textit{GAN-based models.} GANs \citep{goodfellow2020generative} consist of a generator and discriminator network, which are trained adversarially to generate and identify synthetic data, respectively, until the samples are realistic. Originally proposed for image generation, GANs have been adapted to tabular generation, and there are many methods that adopt this popular architecture (e.g., CTGAN \citep{xu2019modeling}, TableGAN \citep{Park_2018}), including those specifically designed for medical data (e.g., MedGAN \citep{choi2018generatingmultilabeldiscretepatient}).

\textit{VAE-based models.} VAEs \citep{kingma2022autoencoding} are another common architecture, which learn to encode data into a lower-dimensional latent space and then decode it back to reconstruct the original data.  They generate new data samples by sampling from the latent distribution and decoding these samples, and their application to tabular data involve techniques to handle mixed data types (e.g., TVAE \citep{xu2019modeling}), and regularisation for improved robustness (e.g., RTVAE \cite{akrami2020robust}).

\textit{Diffusion-based models.} Diffusion models \citep{ho2020denoising, song2020score} learn the gradient of the data distribution, and generate data via progressive denoising, beginning with a noisy sample and using a neural network to predict and remove noise over a number of timesteps, including for tabular data (e.g. TabDDPM \citep{kotelnikov2022tabddpm}). Extensions for direct treatment effect estimation exist \citep{ma2024diffpo}, but not for synthetic data generation for this purpose.  

\textit{Forest-based models.} Random forests can estimate the density of a probability distribution, as leaf nodes partition the data space into distinct hyper-rectangles with estimated densities of the proportion of samples that fall into them. Samples can then be drawn from this estimated density. Random forests can easily handle heterogeneous data types, so their application to tabular data synthesis is natural. They are particularly fast to train and generate from \citep{shi2006unsupervised, watson2023adversarial}).

\textit{Normalizing flow-based models.} Normalizing flows estimate the target density by transforming a tractable density (e.g., a Gaussian) into the target through a series of invertible transformations, called 'flows'. Probabilities from the target distribution can then be found using the change of variables formula (e.g. \cite{rezende2016variationalinferencenormalizingflows}). Recent work has shown their theoretical similarity to diffusion models \citep{lipman2023flowmatchinggenerativemodeling}.

\paragraph{Causal generative models.} Causal generative models \citep{komanduri2024identifiablecausalrepresentationscontrollable, blobaum2024dowhy, chao2024modelingcausalmechanismsdiffusion} are a class of generative models, distinct from generic tabular data generators, that approximate the underlying structural causal model (SCM) \citep{pearl2009causality} of a dataset. While such models are related to our work with STEAM and are likely to better preserve causal relationships than generic generators in settings where they can be used, their assumptions can restrict their practical use cases. In comparison to \method, they generally differ in terms of their (1) assumptions,  (2) motivation, and (3) flexibility, which we detail here.

(1): Importantly, causal generative models typically assume that the data holder has knowledge of the entire causal graph of the real data, which is a more restrictive assumption than we make in this work. We assume that our specification of the underlying DGP $\mathbf{X} \sim P_\mathbf{X},\ W \sim P_{W|\mathbf{X}},\ Y \sim P_{Y|W,\mathbf{X}}$ is correct for datasets containing treatments, but we do not require knowledge of the causal graph, as we do not need to know the causal links between individual variables. We do not assume knowledge of the causal relationships amongst the covariates, nor knowledge of which covariates cause treatment assignment or patient outcomes. As such, we make less restrictive assumptions than works that require knowledge of a causal graph, and we suggest that our approach is more realistic in complex, real-world scenarios, such as those that arise in medicine, where the true causal graph is unlikely to be available.

(2): The motivation for causal generative models is typically to allow the generation of data to answer graph-specific interventional and counterfactual queries that require knowledge of the full causal graph. With STEAM, however, we seek to generate useful synthetic data only from the observational distribution, for use by analysts with goals such as treatment effect estimation (e.g., CATE).

(3): STEAM’s design can, in principle, incorporate any generative model for $Q_\mathbf{X}$, essentially acting as a wrapper around $Q_\mathbf{X}$ to improve its generation quality for causal inference tasks. This allows STEAM to very easily empower many existing generative modelling frameworks, without having to incorporate bespoke generators. Also, it allows STEAM models to continuously improve along with the base generative model. Existing causal generative models do not generally allow such flexibility, and therefore cannot as easily benefit from future improvements in underlying generic generative models.

Despite these differences, we conduct empirical comparisons between STEAM and some baseline causal generative models in Appendix~\ref{sec:causal_gen_model_exp}.

\paragraph{Privacy.} Despite the popularity of some of the above generators, memorisation of training samples is a phenomenon observed in generative models \citep{ghalebikesabi2023differentially}. Therefore, provably private generation is often desired to limit the amount of information leaked. Differential privacy \cite{dwork2006calibrating} is the most common standard adopted, and there are multiple generators which guarantee this, including GAN-based methods (e.g., PATE-GAN \citep{Jordon2018PATEGAN}, DP-GAN \citep{xie2018differentially}) and query-based methods (e.g., GEM \citep{liu2021iterative}, MST \citep{mckenna2021winning}, RAP \citep{aydore2021differentially}, AIM \citep{mckenna2022aim}).

\newpage

\section{Theorem 1 proof}
\label{sec:metric_failure_proof}
\renewcommand{\thetheorem}{1}

\begin{theorem}
    Let $P$, $Q^{\theta_1}$, $Q^{\theta_2}$ be of the form described in $\S$\ref{sec:inadequacy}, and $\mathcal{M}$ be KL divergence. If we assume that  $Q^{\theta_1}$ and $Q^{\theta_2}$ have sufficient capacity to have bounded error on each component, i.e. $\forall i$, $0 < \mathcal{M}(P_{\mathbf{X}_i}, Q^{\theta_\mathbf{X}}_{\mathbf{X}_i}) <  \varepsilon_\mathbf{X} $, and $0<\mathcal{M}(P_{W|\mathbf{X}}, Q^{\theta_{W, k}}_{W|\mathbf{X}})<\varepsilon_{W,k}$, and $0<\mathcal{M}(P_{Y|W,\mathbf{X}}, Q^{\theta_{Y, k}}_{Y|W,\mathbf{X}})<\varepsilon_{Y,k}$, then:
    \begin{equation}
        \frac{\mathcal{M}(P, Q^{\theta_1})}{\mathcal{M}(P, Q^{\theta_2})} \rightarrow 1,\ \text{as}\ d \rightarrow \infty
    \end{equation}
\end{theorem}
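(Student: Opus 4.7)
I will exploit the chain rule for KL divergence together with the fact that, by hypothesis, $Q^{\theta_1}$ and $Q^{\theta_2}$ share the same covariate factor $Q^{\theta_\mathbf{X}}$ and differ only through $\theta_{W,k}$ or $\theta_{Y,k}$. Applying the chain rule along the factorisation $P_{\mathbf{X},W,Y} = P_\mathbf{X}\,P_{W\mid\mathbf{X}}\,P_{Y\mid W,\mathbf{X}}$ and using independence of the covariate marginals (which holds under both $P$ and each $Q^{\theta_k}$ by the stated form), I obtain
\begin{equation*}
\mathcal{M}(P, Q^{\theta_k}) \,=\, \underbrace{\sum_{i=1}^{d} \mathcal{M}\!\left(P_{\mathbf{X}_i}, Q^{\theta_\mathbf{X}}_{\mathbf{X}_i}\right)}_{=:\,A_d} \,+\, \underbrace{\mathbb{E}_{P_{\mathbf{X}}}\!\left[\mathcal{M}\!\left(P_{W\mid\mathbf{X}}, Q^{\theta_{W,k}}_{W\mid\mathbf{X}}\right)\right]}_{=:\,B_k} \,+\, \underbrace{\mathbb{E}_{P_{\mathbf{X},W}}\!\left[\mathcal{M}\!\left(P_{Y\mid W,\mathbf{X}}, Q^{\theta_{Y,k}}_{Y\mid W,\mathbf{X}}\right)\right]}_{=:\,C_k}.
\end{equation*}
The key structural observation is that $A_d$ carries no $k$ index: it is identical in the numerator and denominator of the target ratio.

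\textbf{The key step.} The ratio of interest then becomes
\begin{equation*}
\frac{\mathcal{M}(P, Q^{\theta_1})}{\mathcal{M}(P, Q^{\theta_2})} \,=\, \frac{A_d + B_1 + C_1}{A_d + B_2 + C_2} \,=\, \frac{1 + (B_1 + C_1)/A_d}{1 + (B_2 + C_2)/A_d}.
\end{equation*}
By the assumed bounds on the conditional components, $B_k + C_k < \varepsilon_{W,k} + \varepsilon_{Y,k}$, which is a constant independent of $d$. Hence it suffices to show that $A_d \to \infty$ as $d \to \infty$; both the numerator and denominator of the rightmost expression will then tend to $1$, and the claim follows by a standard squeeze.

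\textbf{The main obstacle.} The only delicate point is establishing divergence of $A_d$. The stated hypothesis only gives strict positivity per coordinate, $\mathcal{M}(P_{\mathbf{X}_i}, Q^{\theta_\mathbf{X}}_{\mathbf{X}_i}) > 0$, which in principle permits a summable sequence. I would read the intended non-degeneracy condition as a uniform positive lower bound, $\mathcal{M}(P_{\mathbf{X}_i}, Q^{\theta_\mathbf{X}}_{\mathbf{X}_i}) \ge \varepsilon_{\mathbf{X},\min} > 0$ for all $i$, which is the natural formalisation of ``the model has bounded (but nonzero) error on each component'' appearing in the hypothesis. Under this reading, $A_d \ge d\,\varepsilon_{\mathbf{X},\min} \to \infty$, and the argument closes; any comparable condition such as $\liminf_{i} \mathcal{M}(P_{\mathbf{X}_i}, Q^{\theta_\mathbf{X}}_{\mathbf{X}_i}) > 0$ as the covariate index grows would serve equally well. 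With that in hand, the proof is a one-line limit computation on the displayed ratio above.
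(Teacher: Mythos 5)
Your proposal is correct and follows essentially the same route as the paper: the KL chain-rule decomposition into a shared marginal sum $A_d$ plus bounded conditional terms $B_k + C_k$, followed by the observation that $A_d$ cancels structurally and the bounded terms wash out in the ratio. The one place you go beyond the paper is the "main obstacle" paragraph, and you are right to flag it: the stated hypothesis $0 < \mathcal{M}(P_{\mathbf{X}_i}, Q^{\theta_\mathbf{X}}_{\mathbf{X}_i}) < \varepsilon_\mathbf{X}$ only gives per-coordinate positivity, which admits a summable sequence of errors and hence a bounded $A_d$, in which case the ratio need not tend to $1$. The paper's own proof asserts that the marginal sum "grows linearly with $d$, since each term is non-negative," which does not follow; your uniform lower bound $\mathcal{M}(P_{\mathbf{X}_i}, Q^{\theta_\mathbf{X}}_{\mathbf{X}_i}) \ge \varepsilon_{\mathbf{X},\min} > 0$ (or a $\liminf$ condition) is exactly the missing hypothesis that makes the argument airtight.
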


\begin{proof}
From the factorizations of $P$ and $Q$, KL  divergence decomposes:

\begin{equation}
    D_{\mathrm{KL}}(P \| Q^{\theta_k}) = \sum_{i=1}^d D_{\mathrm{KL}}(P_{\mathbf{X}_i} \| Q^{\theta_{\mathbf{X}}}_{\mathbf{X}_i}) + \mathbb{E}_{P_\mathbf{X}} \left[ D_{\mathrm{KL}}(P_{W|\mathbf{X}} \| Q_{W|\mathbf{X}}^{\theta_{W,k}}) \right] + \mathbb{E}_{P_{\mathbf{X},W}} \left[ D_{\mathrm{KL}}(P_{Y|W,\mathbf{X}} \| Q^{\theta_{Y, k}}_{Y|W,\mathbf{X}}) \right]
\end{equation}

As such, the following holds for when KL divergence is set as $\mathcal{M}$.

Define the ratio:
\[
R(d) = \frac{\mathcal{M}(P, Q^{\theta_1})}{\mathcal{M}(P, Q^{\theta_2})}.
\]

Substituting the decompositions, we have:
\[
R(d) = \frac{\sum_{i=1}^d \mathcal{M}(P_{\mathbf{X}_i} , Q^{\theta_{\mathbf{X}}}_{\mathbf{X}_i}) + \mathbb{E}_{P_X} \left[\mathcal{M}(P_{W|\mathbf{X}} , Q_{W|\mathbf{X}}^{\theta_{W,1}}) \right] + \mathbb{E}_{P_{X,W}} \left[\mathcal{M}(P_{Y|W,\mathbf{X}} , Q^{\theta_{Y, 1}}_{Y|W,\mathbf{X}}) \right]}{\sum_{i=1}^d \mathcal{M}(P_{\mathbf{X}_i} , Q^{\theta_{\mathbf{X}}}_{\mathbf{X}_i}) + \mathbb{E}_{P_X} \left[ \mathcal{M}(P_{W|\mathbf{X}} , Q_{W|\mathbf{X}}^{\theta_{W,2}}) \right] + \mathbb{E}_{P_{X,W}} \left[ \mathcal{M}(P_{Y|W,\mathbf{X}} , Q^{\theta_{Y, 2}}_{Y|W,\mathbf{X}}) \right]}.
\]

As the dimensionality \( d \) increases, the marginal summations \( \sum_{i=1}^d \mathcal{M}(P_{\mathbf{X}_i} \| Q^{\theta_{\mathbf{X}}}_{\mathbf{X}_i}) \) grow linearly with $d$, since each $\mathcal{M}(P_{\mathbf{X}_i} \| Q^{\theta_{\mathbf{X}}}_{\mathbf{X}_i})$ is, by assumption, non-negative, and they therefore dominate the bounded conditional contributions:
\[
\mathbb{E}_{P_X} \left[\mathcal{M}(P_{W|\mathbf{X}} , Q_{W|\mathbf{X}}^{\theta_{W,k}}) \right] < \varepsilon_{W,k},
\]
\[
\mathbb{E}_{P_{X,W}} \left[ \mathcal{M}(P_{Y|W,\mathbf{X}} , Q^{\theta_{Y, k}}_{Y|W,\mathbf{X}}) \right]< \varepsilon_{Y,k}.
\]
Thus, $\mathcal{M}(P_{\mathbf{X},W,Y}, Q^{\theta_k}_{\mathbf{X},W,Y}) \sim \sum_{i=1}^d \mathcal{M}(P_{\mathbf{X}_i} , Q^{\theta_{\mathbf{X}}}_{\mathbf{X}_i})$ and $
R(d) \sim \frac{\sum_{i=1}^d \mathcal{M}(P_{\mathbf{X}_i} , Q^{\theta_{\mathbf{X}}}_{\mathbf{X}_i})}{\sum_{i=1}^d \mathcal{M}(P_{\mathbf{X}_i} , Q^{\theta_{\mathbf{X}}}_{\mathbf{X}_i})} = 1.$

Therefore:
\[
R(d) \rightarrow 1,\ \text{as}\ d \rightarrow \infty
\]
\end{proof}

\newpage

\section{Empirical demonstrations of current metric failure}
\label{sec:empirical_metric_failure}

\subsection{Failure to identify changes to the outcome generation mechanism}
We demonstrate this with a simple experiment investigating how four $\mathcal{D}_\text{s}$ of size $n=1000$, which only differ in their outcome generation mechanisms, are assessed by an array of current metrics. We simulate $\mathcal{D}_\text{r}$ from a simple DGP with 10 covariates with $P_\mathbf{X} = \mathcal{N}(0, I)$, $P_{W|\mathbf{X}}=\text{Bern}(0.5)$, $P_{Y|W,\mathbf{X}}=\mathcal{N}(W \cdot {X_1}^2, 1)$. We generate four $\mathcal{D}_\text{s}^i$ with the same $Q_{\mathbf{X}}^i~\overset{d}{=}~P_\mathbf{X},\  Q_{W|\mathbf{X}}^i~\overset{d}{=}~P_{W|\mathbf{X}},\ \forall i \in \{1,2,3,4\}$. We vary each $Q_{Y|W,\mathbf{X}}^i \sim \mathcal{N}(W \cdot \Phi_i(\mathbf{X}, 1) + (1-W) \cdot \Phi_i(\mathbf{X}, 0),1)$ where $\Phi_i$ represents a potential outcome (PO) estimator with the architecture from either an S-Learner, T-Learner \citep{kunzel2019metalearners}, DragonNet \citep{shi2019adapting}, or TARNet \citep{shalit_estimating_2017}. These four architectures will model $Q_{Y|W,X}^i$ differently, inducing the only point of variation amongst the $\mathcal{D}_\text{s}^i$.

Since we simulate $\mathcal{D}_\text{r}$, we know the ground-truth treatment effects, and an oracle metric can be established to determine the true quality of each $\mathcal{D}_\text{s}^i$. We define this as the precision of estimating heterogeneous effects (PEHE) \citep{hill2011bayesian} of estimates from a CATE learner trained on $\mathcal{D}_\text{s}^i$ and the ground-truth CATEs. In Table~\ref{model selection old metrics} we report the scores of $P_\alpha$, $R_\beta$ \citep{faithfulalaa}, inverse KL divergence \citep{kullback1951information}, Kolmogorov-Smirnov (KS) score \citep{massey1951kolmogorov}, Wasserstein distance (WD) \citep{kantorovich1960mathematical}, and Jensen-Shannon distance (JSD)  \citep{lin1991divergence} on each $\mathcal{D}_\text{s}^i$. All report very similar scores across the $\mathcal{D}_\text{s}^i$, with most offering no statistically significant best option, suggesting that their quality is the same. The oracle metric, however, determines that $\mathcal{D}_\text{s}^i$ using a T-Learner for $\Phi_i$ is a clear best, and $\mathcal{D}_\text{s}^i$ with an S-Learner as $\Phi_i$ is more than twice as bad at preserving the true treatment effects. Clearly, even in a moderately sized dataset, these metrics cannot reliably identify changes in $Q_{Y|W,\mathbf{X}}^i$, despite the large effect that this distribution has on downstream performance.

\setlength\tabcolsep{4pt}
\begin{table}[ht]
    \centering
    \scriptsize
    \caption{Joint-distribution-level metrics on $\mathcal{D}_\text{s}^i$ which differ in $Q_{Y|W,\mathbf{X}}^i$ architecture only. Averaged over 10 runs, with 95\% CIs.}
    \begin{tabular}{l l l l l l l|l}
        \toprule
        $\boldsymbol{Q_{Y|W,\mathbf{X}}^i}$  & $\boldsymbol{P_\alpha}$ ($\uparrow$)& $\boldsymbol{R_\beta}$ ($\uparrow$) & \textbf{Inv. KL} ($\uparrow$)& \textbf{KS} ($\uparrow$)& \textbf{WD} ($\downarrow$) & \textbf{JSD ($\downarrow$)} & \textbf{Oracle} ($\downarrow$)\\
        \midrule
        T-Learner & 0.927 $\pm$ 0.001 & 0.584 $\pm$ 0.006 & 0.947 $\pm$ 0.000 & 0.979 $\pm$ 0.000 & 0.002 $\pm$ 0.000 & 0.002 $\pm$ 0.000 & 0.525 $\pm$ 0.012\\
        TARNet & 0.919 $\pm$ 0.002 & 0.573 $\pm$ 0.005 & 0.950 $\pm$ 0.006 & 0.985 $\pm$ 0.001 & 0.002 $\pm$ 0.000 & 0.002 $\pm$ 0.000 & 0.616 $\pm$ 0.015\\
        DragonNet & 0.921 $\pm$ 0.001 & 0.574 $\pm$ 0.004 & 0.947 $\pm$ 0.000 & 0.984 $\pm$ 0.001 & 0.002 $\pm$ 0.000 & 0.002 $\pm$ 0.000 & 0.618 $\pm$ 0.007\\
        S-Learner & 0.926 $\pm$ 0.002 & 0.579 $\pm$ 0.007 & 0.957 $\pm$ 0.009 & 0.990 $\pm$ 0.000 & 0.002 $\pm$ 0.000 & 0.001 $\pm$ 0.000 & 1.279 $\pm$ 0.015\\
        \bottomrule
    \end{tabular}
    \label{model selection old metrics}
\end{table}

\paragraph{Comparison to $U_\text{PEHE}$.}
\label{metric performance experiment}
Since we only alter $Q_{Y|W,\mathbf{X}}^i$ between each $\mathcal{D}_\text{s}^i$, all have the same $P_{\alpha, \mathbf{X}}$, $R_{\beta, \mathbf{X}}$, and $\text{JSD}_\pi$. In Table~\ref{model selection new metrics} we report $U_\text{PEHE}$ on each dataset, and we see that it fully reproduces the oracle ranking, and correctly identifies the best dataset to a statistically significant level, which no existing metric could do.

\begin{table}[h]
    \centering
    \footnotesize
    \caption{$U_\text{PEHE}$ on $\mathcal{D}_\text{s}^i$ with varied $Q_{Y|W,\mathbf{X}}^i$. Averaged over 10 runs, with 95\% CIs.}
    \begin{tabular}{l l|l}
        \toprule
        $\boldsymbol{Q_{Y|W,\mathbf{X}}^i}$  & $\boldsymbol{U_\textbf{PEHE}}$ ($\downarrow$)& \textbf{Oracle} ($\downarrow$)\\
        \midrule
        T-Learner & 0.693 $\pm$ 0.013 & 0.525 $\pm$ 0.012\\
        TARNet & 0.731 $\pm$ 0.016 & 0.616 $\pm$ 0.015\\
        DragonNet & 0.754 $\pm$ 0.019 & 0.618 $\pm$ 0.007\\
        S-Learner & 0.906 $\pm$ 0.019 & 1.279 $\pm$ 0.015\\
        \bottomrule
    \end{tabular}
    \label{model selection new metrics}
\end{table}

\subsection{Failure to identify changes to the treatment assignment mechanism}\label{sec:treatment assignment model selection}

We conduct a similar experiment varying $Q_{W|\mathbf{X}}^i$ across three $\mathcal{D}_\text{s}^i$. We simulate $\mathcal{D}_\text{r} \sim P_{\mathbf{X},W,Y}$ from a DGP with 5 covariates, all of which contribute to the propensity score. We set $P_\mathbf{X} = \mathcal{N}(0,I)$, $P_{W|\mathbf{X}} = \text{Bern}(\pi(\mathbf{X}))$, $\pi(\mathbf{X}) = (1 + e^{-1/5\sum_{i=1}^5  X_i})^{-1}$,  $P_{Y|W,\mathbf{X}} = \mathcal{N}(0,1)$. We generate three $\mathcal{D}_\text{s}^i \sim Q_{\mathbf{X},W,Y}^i$ which vary only in the degree to which they correctly model $\pi(\mathbf{X})$ by setting $Q_\mathbf{X}^i \overset{d}{=} P_\mathbf{X}$, $Q_{Y|W,\mathbf{X}}^i \overset{d}{=} P_{Y|W,\mathbf{X}}$, $\forall i \in \{1,2,3\}$ and $Q_{W|\mathbf{X}}^i = \text{Bern}(\pi_i(\mathbf{X}))$ where $\pi_1(\mathbf{X}) = (1 + e^{-X_1})^{-1}$, $\pi_2(\mathbf{X}) = (1 + e^{-1/3\sum_{i=1}^3  X_i})^{-1}$, and $\pi_3(\mathbf{X}) = (1 + e^{-1/5\sum_{i=1}^5  X_i})^{-1}$. In this way, we know that, in truth, $Q_{\mathbf{X},W,Y}^3$ is a better model than $Q_{\mathbf{X},W,Y}^2$, which in turn is better than $Q_{\mathbf{X},W,Y}^1$, and we can now assess how well existing metrics, and our $\text{JSD}_\pi$ metric, recover this ranking. 

We display the scores of $P_\alpha$, $R_\beta$, inverse KL, Kolmogorov-Smirnov score, Wasserstein distance, Jensen-Shannon distance, and our metric $\text{JSD}_\pi$ on each $\mathcal{D}_\text{s}^i$ in Table \ref{tab:treatment assignment model selection}. We see that the existing metrics report very similar scores across the three datasets, and none offer a statistically significant best option. This contrasts with our $\text{JSD}_\pi$ metric, which correctly orders the three models and selects $Q_{\mathbf{X},W,Y}^3$ as the best option to a statistically significant level.

\setlength\tabcolsep{4pt}
\begin{table}[t]
    \centering
    \scriptsize
    \caption{\# correct var.: The number of correctly identified variables in the propensity score. $P_\alpha$: $\alpha$ precision. $R_\beta$: $\beta$ recall. Inv. KL: Inverse KL divergence. KS: Kolmogorov-Smirnov score. WD: Wasserstein distance. JSD: Jensen-Shannon distance. $\text{JSD}_\pi$: Ours. Averaged over 10 runs, with 95\% CIs.}
    \begin{tabular}{l|l l l l l l|l}
        \toprule
        \textbf{\# correct var.}  & $\boldsymbol{P_\alpha}$ ($\uparrow$)& $\boldsymbol{R_\beta}$ ($\uparrow$) & \textbf{Inv. KL} ($\uparrow$)& \textbf{KS} ($\uparrow$)& \textbf{WD} ($\downarrow$) & \textbf{JSD} ($\downarrow$) & $\boldsymbol{\textbf{JSD}_\pi}$ ($\uparrow$)\\
        \midrule
        5 & 0.863 $\pm$ 0.024 & 0.456 $\pm$ 0.017 & 0.989 $\pm$ 0.002 & 0.965 $\pm$ 0.003 & 0.017 $\pm$ 0.001 & 0.004 $\pm$ 0.000 & 0.963 $\pm$ 0.007\\
        3 & 0.868 $\pm$ 0.022 &	0.457 $\pm$ 0.012 &	0.981 $\pm$ 0.006 &	0.966 $\pm$ 0.003 & 0.018 $\pm$ 0.001 & 0.004	$\pm$ 0.000 & 0.942 $\pm$ 0.006 \\
        1 & 0.866 $\pm$ 0.021 & 0.453 $\pm$ 0.013 & 0.985 $\pm$ 0.003 & 0.965 $\pm$ 0.003 & 0.018 $\pm$ 0.001 & 0.004 $\pm$ 0.000 & 0.908 $\pm$ 0.013\\
        \bottomrule
    \end{tabular}
    \label{tab:treatment assignment model selection}
\end{table}
To further elucidate the differences in rankings between existing and our metrics, both in this experiment and the outcome generation comparison in $\S$\ref{sec:metrics}, we list each ranking and their Spearman's rank correlation coefficient with the oracle ranking in Tables \ref{tab:model selection treatment assignment ranking table} and \ref{tab:model selection outcome generation ranking table}. Assessment via our metrics is the only protocol that reproduces the oracle ranking across both experiments.

\begin{table}[h!]
    \centering
    \caption{Treatment assignment experiment: rankings by different metrics, sorted by Spearman's rank correlation coefficient ($r_s$) with oracle ranking. Numbering indicates the oracle order of $\pi_i(\mathbf{X})$.}
    \begin{tabular}{c c|c}
        \toprule
        \textbf{Metric} & \textbf{Ranking} & $\boldsymbol{r_s}$ ($\uparrow$)\\
        \midrule
        $P_\alpha$ &  2,3,1 & -0.5\\
        KS & 2,1,3 & 0.5\\
        Inv. KL & 1,3,2 & 0.5\\
        $R_\beta$ & 2,1,3 & 0.5\\
        WD & 1,2,3 & 1\\
        $\text{JSD}_\pi$ & 1,2,3 & 1\\
        \bottomrule
    \end{tabular}
    \label{tab:model selection treatment assignment ranking table}
\end{table}

\begin{table}[h!]
    \centering
    \caption{Outcome generation experiment: Rankings by different metrics, sorted by Spearman's rank correlation coefficient ($r_s$) with oracle ranking. $Q_{Y|W,\mathbf{X}}^i$ are numbers by oracle ranking, 1: T-Learner, 2: TARNet, 3: DragonNet, 4: S-Learner.}
    \begin{tabular}{c c|c}
    \toprule
        \textbf{Metric} & \textbf{Ranking} & $\boldsymbol{r_s}$ ($\uparrow$)\\
        \midrule
        KS & 4,2,3,1 & -0.80\\
        Inv. KL & 4,2,1,3 & -0.40\\
        $P_\alpha$ &  1,4,3,2 & 0.20\\
        $R_\beta$ & 1,4,3,2 & 0.20\\
        WD & 2,3,1,4 & 0.40\\
        $U_\text{PEHE}$ & 1,2,3,4 & 1\\
        \bottomrule
    \end{tabular}
    \label{tab:model selection outcome generation ranking table}
\end{table}

\subsection{Existing metric failure: extreme example}\label{current metric failure appendix}

As a `proof by contradiction' that current metrics can offer a good level of information on the preservation of (i)--(iii), we present some extreme examples. We show that joint-distribution-level metrics do not have enough resolution to identify how well (i)--(iii) are preserved, even if $Q$ comprehensively fails in modelling any one of the component distributions $P_{\mathbf{X}}$, $P_{W|\mathbf{X}}$, or $P_{Y|W,\mathbf{X}}$.

We perform a series of experiments where we evaluate adversarial synthetic versions of a simulated dataset, with each synthetic version failing in one of the above components, and we show that standard metrics do not identify these failure modes. We simulate real data using the DGP in \texttt{CATENets} from \cite{nonparametriccurth} and we create three $\mathcal{D}_\text{s}$ that perfectly model two component distributions of $\mathcal{D}_\text{r}$ but poorly approximate the remaining one. For poorly modelled $P_{\mathbf{X}}$, we set $\mathbf{X} = \mathbf{0}$; for poorly modelled $P_{W|\mathbf{X}}$, we assign all instances with $W=0$; and for poorly modelled $P_{Y|W,\mathbf{X}}$, we draw $Y$ from a normal distribution with mean 0 regardless of treatment. All such $\mathcal{D}_\text{s}$ are useless for treatment effect estimation.

\begin{table}[h]
\caption{Scores on adversarially created $\mathcal{D}_\text{s}$ which poorly perform on desiderata (i), (ii), or (iii).}
\centering
\begin{tabular}{l l l l l}
        \toprule
          & \textbf{Inv. KL} ($\uparrow$)&$\boldsymbol{P_\alpha}$ ($\uparrow$) & $\boldsymbol{R_\beta}$ ($\uparrow$)&\textbf{MMD} ($\downarrow$)\\
        \midrule
        Poor (i) & 0.681 & 0.902 & 0.368 & 0.085 \\
        \midrule
        \midrule
        Poor (ii) & 0.685 & 0.501 & 0.333 & 0.074 \\
        \midrule
        \midrule
        Poor (iii) & 0.844 & 0.905 & 0.430 & 0.008 \\
        \bottomrule
    \end{tabular}
    \label{metric failure table}
\end{table}

We report the inverse of KL divergence, $P_\alpha$, $R_\beta$, and MMD, which all have range $[0,1]$, for these synthetic datasets in Table~\ref{metric failure table}. We see that these conventional evaluation metrics do not accurately reflect the invalidity of each $\mathcal{D}_\text{s}$ for treatment effect estimation. None report significantly low scores, despite the failure of each $\mathcal{D}_\text{s}$. $R_\beta$ reflects these failures best, although its scores still do not adequately reflect how these datasets render correct treatment effect analysis impossible, and it does not allow a granular enough analysis to disentangle which component distribution is poorly modelled.

In further detail, for the experiments that examine poor modelling of $P_\mathbf{X}$ and $P_{W|\mathbf{X}}$, we simulate $\mathcal{D}_\text{r}$ of size $n=1000$ with $d=1$ covariate as follows:

\begin{align}
    X &\sim \mathcal{N}(0, 1)\\
    W &\sim \text{Bernoulli}(0.5)\\
    Y(0), Y(1) &= 0\\
    Y &=  (1-W) Y(0) + W Y(1) + \epsilon,\ \epsilon \sim \mathcal{N}(0, 1)
\end{align}

We manufacture $\mathcal{D}_\text{s}$ that exhibits poor modelling of $P_\mathbf{X}$ by generating $W$ and $Y$ from the true distributions as above, but set $\mathbf{X} = \boldsymbol{0}$. For $\mathcal{D}_\text{s}$ that exhibits poor modelling of $P_{W|\mathbf{X}}$, we generate $\mathbf{X}$ and $Y$ from their true distributions, but set all $W = 0$. 

To demonstrate assessment under poor modelling of $P_{Y|W,\mathbf{X}}$, we set the covariate in $\mathcal{D}_\text{r}$ to be predictive, such that it affects the value of the potential outcome $Y(1)$, but not $Y(0)$. The distributions remain the same as the above, although now the potential outcomes are:

\begin{align}
    Y(0) &=0\\
    Y(1) &= X^2
\end{align}

We manufacture $\mathcal{D}_\text{s}$ that poorly models of $P_{Y|W,\mathbf{X}}$ by generating $\mathbf{X}$ and $W$ from their true distributions, but we set $Y(0), Y(1) = 0$. 

\newpage

\section{Discussion on alternative metrics}
\label{extended metrics}
While we propose a set of metrics $\mathcal{M}$ for evaluation of $\mathcal{D}_\text{s}$, there are many possible alternatives to each choice we make. Our choices enable evaluation of how well $\mathcal{D}_\text{s}$ adheres to our desiderata, but, like any metrics, they may be sub-optimal for certain data holders with specific preferences. Here, we list some alternative definitions, and we detail when they may be preferable. We would like to emphasise that conducting \textit{any} reasonable assessment of the preservation of (i)--(iii) is beneficial compared to standard evaluation practices. 

\subsection{Alternative covariate distribution assessment}

As we state in the main paper, comparison of $P_X$ and $Q_X$ is essentially a standard synthetic data evaluation problem, and therefore any standard protocol can be applied.

For example, if the dimensionality of $\mathbf{X}$ is small, manual evaluation via visualisation may be preferable to the precision/recall analysis we suggest, as this can provide a more granular and interpretable assessment. On the other hand, if a single all-encompassing score is desired, rather than the two-dimensional metric ($P_{\alpha, \mathbf{X}}$, $R_{\beta, \mathbf{X}}$), then statistical divergence metrics can offer this. These one-dimensional metrics can lead to more straightforward model selection than ($P_{\alpha, \mathbf{X}}$, $R_{\beta, \mathbf{X}}$), as ordering based on a two-dimensional metric can be ambiguous.

\subsection{Alternative treatment assignment mechanism assessment}

Similarly, there is a vast array of metrics which could be substituted into Eq.~(\ref{eq: metric (ii)}) over Jensen-Shannon distance, which could measure the difference between $P_{W|\mathbf{X}}$ and $Q_{W|\mathbf{X}}$. These include metrics such as KL divergence and Wasserstein distance, which are also very common in machine learning literature. For example, a data holder may prefer KL divergence if they want to more harshly punish $Q_{W|\mathbf{X}}$ for failing to place density where $P_{W|\mathbf{X}}$ is probable, encouraging \textit{mode-covering} behaviour. On the other hand, if a data holder wants to more harshly punish $Q_{W|\mathbf{X}}$ for spreading mass away from the modes, Wasserstein distance may be preferable, leading to \textit{mode-seeking} behaviour. JSD achieves a balance between these two focuses, but if a data holder has a strong preference for one or the other, these alternate choices would be preferable. Nevertheless, we suggest that, apart from extreme scenarios, most reasonable methods to assess the preservation of $P_{W|\mathbf{X}}$ will lead to similar analysis. 

\subsection{Alternative outcome generation mechanism metrics}\label{alternative utility appendix}

Raw similarity of PEHE in CATE estimation between $\mathcal{D}_\text{r}$ and $\mathcal{D}_\text{s}$ may not be the most important quantity of interest for certain data holders. This can be particularly true in medical practice, as raw performance is not the only important aspect of a downstream model. We propose some alternatives that may be more applicable in the following situations:

\begin{enumerate}
    \item Correct estimation of the \textit{sign} of the CATE may be of heightened importance if the CATE learner is assisting with policy decisions. The wrong CATE sign will lead to incorrect policy administration, whereas the magnitude of the effect may not be as important for decision-making. 
    \item Discovering the correct drivers of effect heterogeneity may be important, as how a learner arrives at its final estimation is particularly important to consider in applications such as in drug discovery or clinical practice \citep{hermansson2021discovering, crabbe2022benchmarking}.
\end{enumerate}

\subsubsection{Policy assignment}

If policy guidance is of interest, then quantification of how well the sign of CATE estimates is preserved between $\mathcal{D}_\text{s}$ and $\mathcal{D}_\text{r}$ may be desired, which can be done as follows:

\begin{equation}
    U_\text{policy}(\mathcal{D}_\text{r}, \mathcal{D}_\text{s}) = \frac{1}{|\mathcal{F}|} \sum_{\hat{\tau} \in \mathcal{F}} \mathbb{E}_{P_\mathbf{X}}[I(\hat{\tau}_\text{synth}(\mathbf{X}) \times \hat{\tau}_\text{real}(\mathbf{X}) > 0)]
\end{equation}
where $I$ is the indicator function.

\subsubsection{Feature importance}

If assessing how well $\mathcal{D}_\text{s}$ permits the discovery of the correct drivers of effect heterogeneity is important, this can be quantified through the use of feature importance methods. Given a CATE learner $\hat{\tau}$, feature importance methods offer a means to measure the sensitivity of the model to each covariate by assigning an importance score $a_i(\hat{\tau}, \mathbf{x})$ to each feature $x_i$ that reflects its importance in the prediction of the CATE $\hat{\tau}(\mathbf{x})$. There are many different instantiations of feature importance methods with different strengths \citep{Ewald_2024}, and the metric we propose here is method-agnostic. We quantify how well $P_{Y|W,\mathbf{X}}$ is modelled according to feature importance similarity between $\mathcal{D}_\text{s}$ and $\mathcal{D}_\text{r}$ as follows:

\begin{equation}
        U_\text{int}(\mathcal{D}_\text{r}, \mathcal{D}_\text{s}) =  \frac{1}{|\mathcal{F}|} \sum_{\hat{\tau} \in \mathcal{F}} S_C(A_{\text{real}, \hat{\tau}}, A_{\text{synth}, \hat{\tau}})
\end{equation}
    
where $S_C$ is cosine similarity, and $A_{\text{real}, \hat{\tau}}$ and $A_{\text{synth}, \hat{\tau}}$ are $d$-dimensional vectors with $i^{th}$ entries
\begin{equation}
     A^i_{\diamond, \hat{\tau}} = \mathbb{E}_{P_\mathbf{X}}[a_i(\hat{\tau}_\diamond, \mathbf{X})],\ \diamond \in \{\text{real}, \text{synth}\}
\end{equation}

$U_\text{int} \in [-1,1]$, where $U_\text{int} = 1$ indicates total agreement in the feature importances of $\mathcal{D}_\text{r}$ and $\mathcal{D}_\text{s}$, while $U_\text{int}=0$ indicates that the feature importances are uncorrelated, suggesting that $Q_{Y|W,\mathbf{X}}$ does not allow discovery of the correct drivers of heterogeneity. Finally, $U_\text{int}=-1$ indicates antithetical feature importances, suggesting a drastic failure of $Q_{Y|W,\mathbf{X}}$ in estimating $P_{Y|W,\mathbf{X}}$.

\newpage

\section{Defining $\mathcal{F}$ for $U_\text{PEHE}$}
\label{sec:F_selection}

In CATE estimation, model validation is a difficult task  \citep{curth2023search}. As such, it is reasonable to expect that a set of downstream analysts conducting CATE estimation on $\mathcal{D}_\text{s}$ will use different learners. Therefore, we want $U_\text{PEHE}$ to reflect the expected difference in downstream performance between $\mathcal{D}_\text{s}$ and $\mathcal{D}_\text{r}$ across a diverse array of potential learners, such that it is representative for the entire population of analysts, and has limited bias towards any particular learner class. To achieve this, we propose averaging $U_\text{PEHE}$ across a family of CATE learners $\mathcal{F}$, and we suggest that larger $|\mathcal{F}|$ and a diverse selection of the learners within $\mathcal{F}$ is preferable.

Of course, there is a trade-off between the size of $\mathcal{F}$, and therefore the stability of $U_\text{PEHE}$, and the computational cost of repeated CATE estimation. With this in mind, to limit the computation involved in calculating $U_\text{PEHE}$, we suggest that users should be selective of the learners included in $\mathcal{F}$ to maximize learner diversity, and minimise $|\mathcal{F}|$. For example, in our experiments, we set $|\mathcal{F}|=4$, and we chose learners from both of the high-level CATE learning strategies described in \cite{nonparametriccurth} (i.e., one-step plug-in learners, and two-step learners). Specifically, for the one-step learners we use S- and T-learners \citep{kunzel2019metalearners}, and for the two-step learners we use RA- and DR-learners \citep{kennedy2020optimal}. All four of these learners conduct CATE estimation differently, and encode different inductive biases in their approaches, and thus they form a good diverse base for $\mathcal{F}$.

For our experiments, on each of the real datasets, the runtime for calculating $U_\text{PEHE}$ for one run is shown in Table~\ref{tab:u_runtimes}. Note that these are much less than the typical generation times for each dataset, so this step is unlikely to be a large time burden for the data holder. Also note that these calculations can be parallelised across the learner classes, which we did not do, and this can improve the computational feasibility of using a larger $|\mathcal{F}|$.

\begin{table}[h]
    \centering
    \caption{Runtime to calculate $U_\text{PEHE}$}
    \begin{tabular}{c c}
    \toprule
         Dataset& $U_\text{PEHE}$ runtime (s)\\
    \midrule
         ACTG & 26\\
         IHDP & 60\\
         ACIC & 191\\
    \bottomrule
    \end{tabular}
    \label{tab:u_runtimes}
\end{table}

\newpage

\section{STEAM diagram}\label{app:steam_diagram}
See Figure~\ref{DGP figure} for flowcharts of the generic DGP for synthetic data generation, real datasets containing treatments, and \method. \method is designed to closely mimic the real DGP.

\begin{figure}[h]
    \centering
    \includegraphics[width = \linewidth]{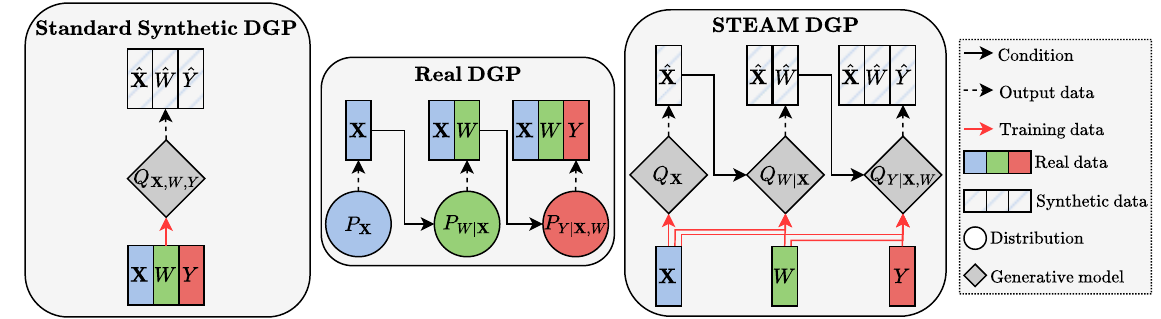}
    \caption{DGPs for generic generative models (left), real datasets (middle), and \method (right).}
    \label{DGP figure}
\end{figure}

\newpage

\section{Main experimental details}\label{main experiment detail appendix}

Here we add any additional details to the experiment set-ups from $\S$\ref{sec:experiments}. All experiments were run on an Azure VM with a 48-Core AMD EPYC Milan CPU and an A100 GPU. We report typical runtimes where relevant. An estimated total compute time for all experimental runs is $\sim$72 hours. This does not include the compute required for preliminary experimentation.

For all generative models, we use the open source library \texttt{synthcity} \cite{qian2023synthcity} (Apache-2.0 License), and we do not change the default hyperparameters. We set the treatment and outcome generators of \method using a logistic regression function from \texttt{scikit-learn} \cite{scikit-learn} and S-Learner from \texttt{CATENets} \cite{nonparametriccurth}, respectively.

We release code here \url{https://github.com/harrya32/STEAM} and here \url{https://github.com/vanderschaarlab/STEAM}.

\subsection{Generation of medical data containing treatments}

To assess sequential generation in a number of real-world scenarios, we evaluate performance on ACTG \citep{hammer1996trial} and on the popular treatment effect estimation datasets IHDP \citep{hill2011bayesian} and ACIC \citep{dorie2018automated}. We also report further results in Table \ref{extended real world table} on a non-medical dataset, Jobs \citep{lalonde1986evaluating}, which is also popular amongst the treatment effect estimation community, to show that \method can be applied beyond the medical context, to any dataset containing treatments. More in-depth descriptions of the datasets used are here:

\begin{enumerate}[leftmargin=!]
    \item \textbf{AIDS Clinical Trial Group (ACTG) study 175.} A clinical trial on subjects with HIV-1 \citep{hammer1996trial}. Preprocessed as in \cite{hatt2022combiningobservationalrandomizeddata} to compare CD4 counts at the beginning of the study and after 20 $\pm$ 5 weeks across treatment arms using zidovudine (ZDV) and zalcitabine (ZAL) vs. ZDV only. The ACTG dataset contains $n=1056$ instances with $d=12$ covariates and a continuous outcome, and we use the publicly available version from  \url{https://github.com/tobhatt/CorNet}.
    
    \item \textbf{Infant Health and Development Program (IHDP).} A semi-synthetic medical dataset, with real covariates and simulated outcomes, using data from a randomised experiment designed to evaluate the effect of specialist childcare on the cognitive test scores of premature infants \citep{brooks1992effects}. Confounding and treatment imbalance were introduced in \cite{hill2011bayesian} to mimic an observational dataset. The IHDP dataset consists of $n=747$ instances with $d=25$ covariates and a continuous outcome. We use the publicly available version from \url{https://github.com/AMLab-Amsterdam/CEVAE} \citep{louizos2017causal}, with the first batch of simulated outcomes.
    
    \item \textbf{Atlantic Causal Inference Competition 2016 (ACIC).} A semi-synthetic medical dataset, with real covariates and simulated outcomes, containing data from the Collaborative Perinatal Project \citep{niswander1972collaborative}. The data was modified in \cite{dorie2018automated} to simulate an observational study examining the impact of birth weight in twins on IQ. The ACIC dataset consists of $n=4802$ instances with $d=58$ covariates and a continuous outcome. We use the publicly available version from the \texttt{causallib} package \citep{shimoni2019evaluation} (Apache-2.0 License) available here \url{https://github.com/BiomedSciAI/causallib}, using the first simulated set of treatments and potential outcomes.

    \item \textbf{Jobs.} Jobs contains experimental data from a male sub-sample from the National Supported Work Demonstration from \cite{lalonde1986evaluating} to evaluate the effect of job training on income. The Jobs dataset consists of $n=722$ instances with $d=7$ covariates and a continuous outcome. We use the publicly available version used in \cite{dehejia1998causal, dehejia2002propensity}, from \url{https://users.nber.org/~rdehejia/data/.nswdata2.html}.
\end{enumerate}

We report extended results for all models tested, and further results on the Jobs dataset, in Table~\ref{extended real world table}. For each model on each dataset, we conduct 20 runs. A typical run for a given real dataset and generative model took 15 minutes.

\subsection{Simulated experiments}

For our simulated insight experiments, we compare the performance of a standard TabDDPM with $\text{\method}_\text{TabDDPM}$, and we report average results over 10 runs. A typical run took 5 minutes. For simulation of $\mathcal{D}_\text{r}$, we use the DGP from \texttt{CATENets} \citep{nonparametriccurth}.

\subsection{Differentially private generation}

For our experiment, which showcases the performance of \method when satisfying DP (Appendix~\ref{sec:differential_privacy}), we compare the generative performance of baseline methods AIM \citep{mckenna2022aim}, GEM \citep{liu2021iterative}, MST \citep{mckenna2021winning}, RAP \citep{aydore2021differentially} with their STEAM counterparts. We use the code provided by \cite{mckenna2022aim} in their GitHub \url{https://github.com/ryan112358/private-pgm} for the AIM and MST implementations, and we use the code provided in the GitHub \url{https://github.com/terranceliu/dp-query-release} for the GEM and RAP implementations. We use the default hyperparameter settings of these implementations, with the workload set as 3-way marginals. For the STEAM models, we use the relevant base model for $Q_\mathbf{X}$, DP random forest from the \texttt{diffprivlib} library \citep{diffprivlib} (MIT License) for $Q_{W|\mathbf{X}}$, and a custom implementation of a T-Learner \cite{kunzel2019metalearners} based on \cite{nonparametriccurth} which guarantees DP by training with DP stochastic gradient descent, implemented with the \texttt{Opacus} library \citep{opacus} (Apache-2.0 License). We report comparative results on varied $\epsilon$, averaged over 5 runs.

\newpage

\section{Extended results}
\label{extended real world results}

We report the full set of results for each model and dataset from $\S$\ref{real world experiment} in Table~\ref{extended real world table}, including on the Jobs dataset. We pair each standard model with its \method analogue, and report the relative difference between them for each metric, where (\textcolor{green!70!black}{green}) indicates better performance by \method, and (\textcolor{red!70!black}{red}) indicates better performance by standard modelling. We see that \method clearly outperforms. Almost all \method models perform better in each metric than all standard models.

\setlength\tabcolsep{4pt}
\begin{table}[htbp]
    \centering
    \scriptsize
    \caption{$P_{\alpha, \mathbf{X}}$, $R_{\beta, \mathbf{X}}$, $\text{JSD}_\pi$, and $U_\text{PEHE}$ values for \method and standard models. Averaged over 20 runs, with 95\% confidence intervals. Each \method model is placed after its corresponding standard model. Coloured numbers in brackets indicate relative difference between standard and \method model, where (\textcolor{green!70!black}{green}) indicates better performance by \method, and (\textcolor{red!70!black}{red}) indicates better performance by standard modelling.}
    \begin{tabular}{l l l l l l}
        \toprule
        \textbf{Dataset} & \textbf{Model}  & $\boldsymbol{P_{\alpha, \mathbf{X}}}$ ($\uparrow$)& $\boldsymbol{R_{\beta, \mathbf{X}}}$ ($\uparrow$)& $\boldsymbol{\textbf{JSD}_\pi}$ ($\uparrow$)& $\boldsymbol{U_\textbf{PEHE}}$ ($\downarrow$)\\
        \midrule
          \multirow[t]{4}{*}{ACTG}& TVAE & $0.926 \pm 0.013$ & $0.483 \pm 0.010$ & $0.946 \pm 0.004$ & $0.564 \pm 0.017$\\
          & \method$_\text{TVAE}$  & $0.929 \pm 0.008$ (\textcolor{green!70!black}{+0.003}) & $0.486 \pm 0.009$ (\textcolor{green!70!black}{+0.003}) & $0.958 \pm 0.004$ (\textcolor{green!70!black}{+0.012}) & $0.492 \pm 0.011$ (\textcolor{green!70!black}{-0.072})\\
          & ARF  & $0.818 \pm 0.012$ & $0.453 \pm 0.007$ & $0.960 \pm 0.004$ & $0.577 \pm 0.015$\\
          & \method$_\text{ARF}$  & $0.836 \pm 0.008$ (\textcolor{green!70!black}{+0.018}) & $0.464 \pm 0.007$ (\textcolor{green!70!black}{+0.011}) & $0.962 \pm 0.004$ (\textcolor{green!70!black}{+0.002}) & $0.423 \pm 0.016$ (\textcolor{green!70!black}{-0.154})\\
          & CTGAN  & $0.889 \pm 0.020$ & $0.444 \pm 0.014$ & $0.934 \pm 0.008$ & $0.586 \pm 0.017$\\
          & \method$_\text{CTGAN}$  & $0.892 \pm 0.017$ (\textcolor{green!70!black}{+0.003}) & $0.437 \pm 0.012$ (\textcolor{red!70!black}{-0.007}) & $0.959 \pm 0.005$ (\textcolor{green!70!black}{+0.025}) & $0.436 \pm 0.012$ (\textcolor{green!70!black}{-0.150})\\
          & NFlow & $0.817 \pm 0.032$ & $0.418 \pm 0.008$ & $0.913 \pm 0.016$ & $0.643 \pm 0.026$\\
          & \method$_\text{NFlow}$  & $0.837 \pm 0.040$ (\textcolor{green!70!black}{+0.020}) & $0.417 \pm 0.015$ (\textcolor{red!70!black}{-0.001}) & $0.962 \pm 0.005$ (\textcolor{green!70!black}{+0.049}) & $0.445 \pm 0.020$ (\textcolor{green!70!black}{-0.198})\\
          & TabDDPM  & $0.067 \pm 0.060$ & $0.036 \pm 0.035$ & $0.812 \pm 0.029$ & $1.761 \pm 0.230$\\
          & \method$_\text{TabDDPM}$ & $0.609 \pm 0.106$ (\textcolor{green!70!black}{+0.542}) & $0.310 \pm 0.055$ (\textcolor{green!70!black}{+0.274}) & $0.952 \pm 0.009$ (\textcolor{green!70!black}{+0.140}) & $0.468 \pm 0.013$ (\textcolor{green!70!black}{-1.293})\\
        \midrule
        \midrule
            \multirow[t]{4}{*}{IHDP} & CTGAN  & $0.663 \pm 0.018$ & $0.419 \pm 0.013$ & $0.888 \pm 0.010$ & $2.521 \pm 0.161$\\
            & \method$_\text{CTGAN}$ & $0.674 \pm 0.014$ (\textcolor{green!70!black}{+0.011}) & $0.424 \pm 0.011$ (\textcolor{green!70!black}{+0.005}) & $0.928 \pm 0.009$ (\textcolor{green!70!black}{+0.040}) & $1.709 \pm 0.052$ (\textcolor{green!70!black}{-0.812})\\
            & TabDDPM & $0.477 \pm 0.036$ & $0.340 \pm 0.022$ & $0.862 \pm 0.011$ & $2.706 \pm 0.138$\\
            & \method$_\text{TabDDPM}$ & $0.553 \pm 0.029$ (\textcolor{green!70!black}{+0.076}) & $0.396 \pm 0.015$ (\textcolor{green!70!black}{+0.056}) & $0.918 \pm 0.011$ (\textcolor{green!70!black}{+0.056}) & $2.346 \pm 0.088$ (\textcolor{green!70!black}{-0.360})\\
            & ARF & $0.528 \pm 0.009$ & $0.381 \pm 0.010$ & $0.921 \pm 0.009$ & $3.019 \pm 0.117$\\
            & \method$_\text{ARF}$ & $0.565 \pm 0.014$ (\textcolor{green!70!black}{+0.037}) & $0.394 \pm 0.010$ (\textcolor{green!70!black}{+0.013}) & $0.921 \pm 0.009$ (+0.000) & $1.629 \pm 0.056$ (\textcolor{green!70!black}{-1.390})\\
            & TVAE & $0.622 \pm 0.014$ & $0.410 \pm 0.010$ & $0.880 \pm 0.014$ & $3.198 \pm 0.172$\\
            & \method$_\text{TVAE}$ & $0.629 \pm 0.015$ (\textcolor{green!70!black}{+0.007}) & $0.412 \pm 0.011$ (\textcolor{green!70!black}{+0.002}) & $0.927 \pm 0.007$ (\textcolor{green!70!black}{+0.047}) & $2.100 \pm 0.075$ (\textcolor{green!70!black}{-1.098})\\
            & NFlow & $0.406 \pm 0.028$ & $0.309 \pm 0.012$ & $0.882 \pm 0.012$ & $3.835 \pm 0.345$\\
            & \method$_\text{NFlow}$ & $0.435 \pm 0.034$ (\textcolor{green!70!black}{+0.029}) & $0.333 \pm 0.020$ (\textcolor{green!70!black}{+0.024}) & $0.921 \pm 0.007$ (\textcolor{green!70!black}{+0.039}) & $2.177 \pm 0.118$ (\textcolor{green!70!black}{-1.658})\\
        \midrule
        \midrule
          \multirow[t]{4}{*}{ACIC} & TVAE & $0.901 \pm 0.014$ & $0.513 \pm 0.004$ & $0.929 \pm 0.005$ & $4.223 \pm 0.138$\\
          & \method$_\text{TVAE}$ & $0.900 \pm 0.014$ (\textcolor{red!70!black}{-0.001}) & $0.514 \pm 0.004$ (\textcolor{green!70!black}{+0.001}) & $0.972 \pm 0.002$ (\textcolor{green!70!black}{+0.043}) & $2.422 \pm 0.118$ (\textcolor{green!70!black}{-1.801})\\
          & CTGAN & $0.880 \pm 0.016$ & $0.421 \pm 0.013$ & $0.942 \pm 0.005$ & $4.518 \pm 0.186$\\
          & \method$_\text{CTGAN}$ & $0.873 \pm 0.014$ (\textcolor{red!70!black}{-0.007}) & $0.424 \pm 0.014$ (\textcolor{green!70!black}{+0.003}) & $0.972 \pm 0.002$ (\textcolor{green!70!black}{+0.030}) & $2.268 \pm 0.154$ (\textcolor{green!70!black}{-2.250})\\
          & ARF & $0.828 \pm 0.003$ & $0.430 \pm 0.002$ & $0.945 \pm 0.002$ & $4.633 \pm 0.146$\\
          & \method$_\text{ARF}$ & $0.835 \pm 0.004$ (\textcolor{green!70!black}{+0.007}) & $0.430 \pm 0.004$ (+0.000) & $0.977 \pm 0.002$ (\textcolor{green!70!black}{+0.032}) & $2.449 \pm 0.149$ (\textcolor{green!70!black}{-2.184})\\
          & NFlow & $0.748 \pm 0.019$ & $0.333 \pm 0.014$ & $0.838 \pm 0.035$ & $5.068 \pm 0.282$\\
          & \method$_\text{NFlow}$ & $0.744 \pm 0.021$ (\textcolor{red!70!black}{-0.004}) & $0.333 \pm 0.010$ (+0.000) & $0.971 \pm 0.002$ (\textcolor{green!70!black}{+0.133}) & $2.938 \pm 0.149$ (\textcolor{green!70!black}{-2.130})\\
          & TabDDPM & $0.124 \pm 0.028$ & $0.002 \pm 0.001$ & $0.813 \pm 0.023$ & $9.281 \pm 1.033$\\
          & \method$_\text{TabDDPM}$ & $0.141 \pm 0.035$ (\textcolor{green!70!black}{+0.017}) & $0.002 \pm 0.000$ (+0.000) & $0.955 \pm 0.019$ (\textcolor{green!70!black}{+0.142}) & $4.497 \pm 0.501$ (\textcolor{green!70!black}{-4.784})\\
        \midrule
        \midrule
          \multirow[t]{4}{*}{Jobs} & TabDDPM  & $0.890 \pm 0.014$ & $0.477 \pm 0.011$ & $0.949 \pm 0.004$ & $3.335 \pm 0.516$\\
          & \method$_\text{TabDDPM}$ & $0.929 \pm 0.009$ (\textcolor{green!70!black}{+0.039}) & $0.493 \pm 0.008$ (\textcolor{green!70!black}{+0.016}) & $0.954 \pm 0.003$ (\textcolor{green!70!black}{+0.005}) & $1.446 \pm 0.052$ (\textcolor{green!70!black}{-1.889})\\
          & ARF & $0.832 \pm 0.010$ & $0.431 \pm 0.019$ & $0.964 \pm 0.004$ & $3.173 \pm 0.691$\\
          & \method$_\text{ARF}$ & $0.863 \pm 0.011$ (\textcolor{green!70!black}{+0.031}) & $0.481 \pm 0.016$ (\textcolor{green!70!black}{+0.050}) & $0.953 \pm 0.004$ (\textcolor{red!70!black}{-0.011}) & $2.280 \pm 0.381$ (\textcolor{green!70!black}{-0.893})\\
          & TVAE & $0.886 \pm 0.017$ & $0.288 \pm 0.009$ & $0.944 \pm 0.006$ & $4.471 \pm 0.336$\\
          & \method$_\text{TVAE}$ & $0.887 \pm 0.014$ (\textcolor{green!70!black}{+0.001}) & $0.300 \pm 0.012$ (\textcolor{green!70!black}{+0.012}) & $0.949 \pm 0.004$ (\textcolor{green!70!black}{+0.005}) & $1.540 \pm 0.167$ (\textcolor{green!70!black}{-2.931})\\
          & CTGAN & $0.830 \pm 0.049$ & $0.339 \pm 0.023$ & $0.925 \pm 0.033$ & $4.608 \pm 0.792$\\
          & \method$_\text{CTGAN}$ & $0.778 \pm 0.076$ (\textcolor{red!70!black}{-0.052}) & $0.298 \pm 0.030$ (\textcolor{red!70!black}{-0.041}) & $0.939 \pm 0.007$ (\textcolor{green!70!black}{+0.014}) & $1.846 \pm 0.270$ (\textcolor{green!70!black}{-2.762})\\
          & NFlow & $0.716 \pm 0.058$ & $0.374 \pm 0.017$ & $0.920 \pm 0.018$ & $5.445 \pm 0.883$\\
          & \method$_\text{NFlow}$ & $0.800 \pm 0.041$ (\textcolor{green!70!black}{+0.084}) & $0.375 \pm 0.017$ (\textcolor{green!70!black}{+0.001}) & $0.952 \pm 0.006$ (\textcolor{green!70!black}{+0.032}) & $2.666 \pm 0.200$ (\textcolor{green!70!black}{-2.779})\\

        \bottomrule
    \end{tabular}
    \label{extended real world table}
\end{table}

\newpage
\section{Ablation study}
\label{ablation}

To add to the evidence of \method's efficacy, we conduct an ablative study by assessing how jointly modelling $P_{\mathbf{X},W}$ affects performance. On the medical datasets used in the main body of the paper, we compare performance of the best standard models with their relevant ablation $\text{\method}_\text{ $\diamond$, joint X,W}$, which models $P_{\mathbf{X},W}$ with the generative model and $P_{Y|W,\mathbf{X}}$ with a PO estimator, and regular \method. We report the results in Table~\ref{ablation table}.

We see that the ablative model, while often improving upon standard generation, is not as effective as \method. Directly modelling $P_{W|\mathbf{X}}$, as \method does, better preserves the treatment assignment and outcome generation mechanisms, and both $\text{JSD}_\pi$ and $U_\text{PEHE}$ are significantly improved by \method in most cases. Using the full inductive bias of directly modelling each distribution of our desiderata, and following the true DGP of data containing treatments is the best approach to generation.

\begin{table}[h]
    \centering
    \scriptsize
    \caption{$P_{\alpha, \mathbf{X}}$, $R_{\beta, \mathbf{X}}$, $\text{JSD}_\pi$, and $U_\text{PEHE}$ values on standard, ablation, and \method models. Ablation results averaged over 5 runs, with 95\% CIs, other results use 20 runs.}
    \begin{tabular}{l l l l l l}
        \toprule
        \textbf{Dataset} & \textbf{Model}  & $\boldsymbol{P_{\alpha, X}}$ ($\uparrow$)& $\boldsymbol{R_{\beta, X}}$ ($\uparrow$) & $\boldsymbol{\textbf{JSD}_\pi}$ ($\uparrow$)& $\boldsymbol{U_\textbf{PEHE}}$ ($\downarrow$)\\
        \midrule
        \multirow[t]{4}{*}{ACTG} & TVAE & 0.926 $\pm$ 0.013 & 0.483 $\pm$ 0.010 & 0.946 $\pm$ 0.004 & 0.564 $\pm$ 0.017\\
        & \method$_\text{TVAE, joint X,W}$ (\emph{ablation}) & 0.918 $\pm$ 0.021 & 0.473 $\pm$ 0.012 & 0.939 $\pm$ 0.010 & 0.475 $\pm$ 0.012\\
        & \method$_\text{TVAE}$  & 0.929 $\pm$ 0.008 & 0.486 $\pm$ 0.009 & 0.958 $\pm$ 0.004 & 0.492 $\pm$ 0.011\\
         \midrule
         \midrule
        \multirow[t]{4}{*}{IHDP} & CTGAN  & 0.663 $\pm$ 0.018 & 0.419 $\pm$ 0.013 & 0.888 $\pm$ 0.010 & 2.521 $\pm$ 0.161\\
        & \method$_\text{CTGAN, joint X,W}$ (\emph{ablation}) & 0.639 $\pm$ 0.021 & 0.428 $\pm$ 0.009 & 0.908 $\pm$ 0.019 & 2.140 $\pm$ 0.134\\
         & \method$_\text{CTGAN}$ & 0.674 $\pm$ 0.014 & 0.424 $\pm$ 0.011 & 0.928 $\pm$ 0.009 & 1.709 $\pm$ 0.052\\
        \midrule
        \midrule
         \multirow[t]{4}{*}{ACIC} & TVAE & $0.901 \pm 0.014$ & $0.513 \pm 0.004$ & $0.929 \pm 0.005$ & $4.223 \pm 0.138$\\
         & \method$_\text{TVAE, joint X,W}$ (\emph{ablation}) & $0.873 \pm 0.022$ & $0.512 \pm 0.010$ & $0.930 \pm 0.019$ & $2.447 \pm 0.249$\\
         & \method$_\text{TVAE}$ & $0.900 \pm 0.014$ & $0.514 \pm 0.004$ & $0.972 \pm 0.002$ & $2.422 \pm 0.118$\\
        \bottomrule
    \end{tabular}
    \label{ablation table}
\end{table}

\subsection{$Q_{W|\mathbf{X}}$ ablation}

We conduct a further ablation study to identify how setting different classifiers as $Q_{W|\mathbf{X}}$ within \method models affects performance. On the IHDP dataset, we compare setting $Q_{W|\mathbf{X}}$ with a logistic regression classifier, as is done in the main experimental section, against a random forest classier. We report the respective $\text{JSD}_\pi$ scores for each model in Table~\ref{w|x comparion table}. We can see that there is no significant difference between how each classifier preserves the treatment assignment mechanism in IHDP. Model selection for $Q_{W|\mathbf{X}}$ is a much smaller priority than selecting $Q_{\mathbf{X}}$, which can significantly affect all metrics, as shown in Table~\ref{tab:real_world_exp_results}.

\begin{table}[htbp]
    \centering
    \caption{$\text{JSD}_\pi$ values for \method models with different classifiers for $Q_{W|\mathbf{X}}$.}
    \begin{tabular}{l l l}
        \toprule
        \textbf{Dataset} & \textbf{Model} & $\boldsymbol{\textbf{JSD}_\pi}$ ($\uparrow$) \\
        \midrule
        \multirow[t]{4}{*}{IHDP} 
        & \method$_\text{CTGAN, log.reg.}$ & 0.928 $\pm$ 0.009 \\
        & \method$_\text{CTGAN, rf.}$ & 0.919 $\pm$ 0.020 \\
        \midrule
        & \method$_\text{TabDDPM, log.reg.}$ & 0.918 $\pm$ 0.011 \\
        & \method$_\text{TabDDPM, rf.}$ & 0.931 $\pm$ 0.015 \\
        \midrule
        & \method$_\text{ARF, log.reg.}$ & 0.921 $\pm$ 0.009 \\
        & \method$_\text{ARF, rf.}$ & 0.922 $\pm$ 0.016 \\
        \midrule
        & \method$_\text{TVAE, log.reg.}$ & 0.927 $\pm$ 0.007 \\
        & \method$_\text{TVAE, rf.}$ & 0.915 $\pm$ 0.020 \\
        \midrule
        & \method$_\text{NFlow, log.reg.}$ & 0.921 $\pm$ 0.007 \\
        & \method$_\text{NFlow, rf.}$ & 0.924 $\pm$ 0.016 \\
        \bottomrule
    \end{tabular}
    \label{w|x comparion table}
\end{table}

\newpage

\section{Hyperparameter stability}
\label{hyperparameter stability}
The performance of generative models is typically sensitive to hyperparameters. To assess the stability of \method's performance across hyperparameters, on IHDP, we compare CTGAN with $\text{\method}_\text{CTGAN}$ with multiple hyperparameter configurations. We report results by changing three hyperparameters: number of hidden units within the generator layers (\texttt{generator\_n\_units\_hidden}) (Table~\ref{hidden units table}), number of hidden layers within the generator (\texttt{generator\_n\_layers\_hidden}) (Table~\ref{hidden layers table}), and activation functions used in the generator (\texttt{generator\_nonlin}) (Table~\ref{activation fn table}), keeping all other hyperparameters default.

The performance gap between $\text{\method}_\text{CTGAN}$ and CTGAN is relatively stable across these configurations. $\text{\method}_\text{CTGAN}$ outperforms CTGAN in each metric at almost all hyperparameter levels. The most statistically significant differences are consistently noted in the $\text{JSD}_\pi$ and $U_\text{PEHE}$ metrics, which is compatible with the results displayed in the main paper.

\begin{table}[h]
    \centering
    \scriptsize
    \caption{Comparison of \method with standard generation on IHDP at different \texttt{generator\_n\_units\_hidden} levels. Averaged over 5 runs, with 95\% CIs.}
    \begin{tabular}{l l l l l l}
        \toprule
        \texttt{generator\_n\_units\_hidden} & \textbf{Model} & $\boldsymbol{P_{\alpha, X}}$ ($\uparrow$)& $\boldsymbol{R_{\beta, X}}$ ($\uparrow$) & $\boldsymbol{\textbf{JSD}_\pi}$ ($\uparrow$)& $\boldsymbol{U_\textbf{PEHE}}$ ($\downarrow$)\\
        \midrule
        \multirow[t]{4}{*}{5} & CTGAN  & 0.517 $\pm$ 0.026 & 0.396 $\pm$ 0.015 & 0.863 $\pm$ 0.033 & 2.914 $\pm$ 0.047\\
         & \method$_\text{CTGAN}$  & 0.565 $\pm$ 0.011 & 0.405 $\pm$ 0.011 & 0.941 $\pm$ 0.000 & 2.194 $\pm$ 0.265\\
        \midrule
        \midrule
        \multirow[t]{4}{*}{50} & CTGAN  & 0.622 $\pm$ 0.028 & 0.411 $\pm$ 0.043 & 0.916 $\pm$ 0.15 & 2.282 $\pm$ 0.141\\
         & \method$_\text{CTGAN}$ & 0.664 $\pm$ 0.020 & 0.444 $\pm$ 0.017 & 0.905 $\pm$ 0.041 & 1.960 $\pm$ 0.174\\
         \midrule
         \midrule
         \multirow[t]{4}{*}{100} & CTGAN & 0.607 $\pm$ 0.038 & 0.418 $\pm$ 0.032 & 0.894 $\pm$ 0.010 & 2.560 $\pm$ 0.289\\
         & \method$_\text{CTGAN}$ & 0.682 $\pm$ 0.016 & 0.439 $\pm$ 0.018 & 0.912 $\pm$ 0.004 & 2.097 $\pm$ 0.095\\
         \midrule
         \midrule
         \multirow[t]{4}{*}{300} & CTGAN & 0.619 $\pm$ 0.030 & 0.434 $\pm$ 0.030 & 0.908 $\pm$ 0.023 & 2.426 $\pm$ 0.289\\
         & \method$_\text{CTGAN}$ & 0.699 $\pm$ 0.018 & 0.458 $\pm$ 0.015 & 0.928 $\pm$ 0.016 & 2.028 $\pm$ 0.163\\
         \midrule
         \midrule
         \multirow[t]{4}{*}{500} & CTGAN & 0.663 $\pm$ 0.018 & 0.419 $\pm$ 0.013 & 0.888 $\pm$ 0.010 & 2.521 $\pm$ 0.161\\
         & \method$_\text{CTGAN}$ & 0.674 $\pm$ 0.014 & 0.424 $\pm$ 0.011 & 0.928 $\pm$ 0.009 & 1.709 $\pm$ 0.052\\
        \bottomrule
    \end{tabular}
    \label{hidden units table}
\end{table}

\begin{table}[h!]
    \centering
    \scriptsize
    \caption{Comparison of \method with standard generation on IHDP at different \texttt{generator\_n\_layers\_hidden} levels. Averaged over 5 runs, with 95\% CIs.}
    \begin{tabular}{l l l l l l}
        \toprule
        \texttt{generator\_n\_layers\_hidden} & \textbf{Model} & $\boldsymbol{P_{\alpha, X}}$ ($\uparrow$)& $\boldsymbol{R_{\beta, X}}$ ($\uparrow$) & $\boldsymbol{\textbf{JSD}_\pi}$ ($\uparrow$)& $\boldsymbol{U_\textbf{PEHE}}$ ($\downarrow$)\\
        \midrule
        \multirow[t]{4}{*}{2} & CTGAN & 0.663 $\pm$ 0.018 & 0.419 $\pm$ 0.013 & 0.888 $\pm$ 0.010 & 2.521 $\pm$ 0.161\\
         & \method$_\text{CTGAN}$ & 0.674 $\pm$ 0.014 & 0.424 $\pm$ 0.011 & 0.928 $\pm$ 0.009 & 1.709 $\pm$ 0.052\\
         \midrule
         \midrule
         \multirow[t]{4}{*}{3} & CTGAN & 0.595 $\pm$ 0.067 & 0.395 $\pm$ 0.066 & 0.868 $\pm$ 0.064 & 2.982 $\pm$ 0.647\\
         & \method$_\text{CTGAN}$ & 0.693 $\pm$ 0.075 & 0.441 $\pm$ 0.043 & 0.924 $\pm$ 0.018 & 2.028 $\pm$ 0.143\\
         \midrule
         \midrule
         \multirow[t]{4}{*}{4} & CTGAN & 0.583 $\pm$ 0.049 & 0.259 $\pm$ 0.074 & 0.807 $\pm$ 0.054 & 3.278 $\pm$ 0.191\\
         & \method$_\text{CTGAN}$ & 0.596 $\pm$ 0.220 & 0.301 $\pm$ 0.084 & 0.886 $\pm$ 0.014 & 2.690 $\pm$ 0.836\\
         \midrule
         \midrule
         \multirow[t]{4}{*}{5} & CTGAN & 0.490 $\pm$ 0.092 & 0.313 $\pm$ 0.069 & 0.770 $\pm$ 0.127 & 2.871 $\pm$ 0.599\\
         & \method$_\text{CTGAN}$ & 0.691 $\pm$ 0.071 & 0.386 $\pm$ 0.041 & 0.915 $\pm$ 0.010 & 2.498 $\pm$ 0.536\\
        \bottomrule
    \end{tabular}
    \label{hidden layers table}
\end{table}

\begin{table}[h!]
    \centering
    \scriptsize
    \caption{Comparison of \method with standard generation on IHDP at different \texttt{generator\_nonlin} settings. Averaged over 5 runs, with 95\% CIs.}
    \begin{tabular}{l l l l l l}
        \toprule
        \texttt{generator\_nonlin} & \textbf{Model} & $\boldsymbol{P_{\alpha, X}}$ ($\uparrow$)& $\boldsymbol{R_{\beta, X}}$ ($\uparrow$) & $\boldsymbol{\textbf{JSD}_\pi}$ ($\uparrow$)& $\boldsymbol{U_\textbf{PEHE}}$ ($\downarrow$)\\
        \midrule
         \multirow[t]{4}{*}{ReLU} & CTGAN & 0.663 $\pm$ 0.018 & 0.419 $\pm$ 0.013 & 0.888 $\pm$ 0.010 & 2.521 $\pm$ 0.161\\
         & \method$_\text{CTGAN}$ & 0.674 $\pm$ 0.014 & 0.424 $\pm$ 0.011 & 0.928 $\pm$ 0.009 & 1.709 $\pm$ 0.052\\
        \midrule
        \midrule
        \multirow[t]{4}{*}{SELU} & CTGAN & 0.604 $\pm$ 0.020 & 0.419 $\pm$ 0.015 & 0.855 $\pm$ 0.023 & 2.509 $\pm$ 0.160\\
         & \method$_\text{CTGAN}$ & 0.699 $\pm$ 0.017 & 0.445 $\pm$ 0.025 & 0.929 $\pm$ 0.014 & 2.043 $\pm$ 0.130\\
         \midrule
         \midrule
         \multirow[t]{4}{*}{Leaky ReLU} & CTGAN & 0.648 $\pm$ 0.045 & 0.415 $\pm$ 0.015 & 0.889 $\pm$ 0.016 & 2.482 $\pm$ 0.210\\
         & \method$_\text{CTGAN}$ & 0.699 $\pm$ 0.028 & 0.457 $\pm$ 0.019 & 0.916 $\pm$ 0.011 & 2.036 $\pm$ 0.135\\
        \bottomrule
    \end{tabular}
    \label{activation fn table}
\end{table}

\newpage

\section{Congeniality bias}
Congeniality bias \citep{curth2023search} is a phenomenon that may arise from generation with \method. In this scenario, it refers to the fact that downstream models which are structurally similar to the outcome generator, $Q_{Y|W,\mathbf{X}}$, may be advantaged in their performance on $\mathcal{D}_\text{s}$. For example, if the potential outcomes from an S-learner are used for $Q_{Y|W,\mathbf{X}}$, the outcome generation mechanism in $\mathcal{D}_\text{s}$ may be modelled in such a way that it allows downstream S-learners to better estimate CATEs than other learners. While we acknowledge this phenomenon may disadvantage certain downstream models, we note that our outcome error metric, $U_\text{PEHE}$, averages across a number of downstream learner types, such that conducting generative model selection with $U_\text{PEHE}$ should lead to good performance across a wide variety of downstream learners, not just those similar to $Q_{Y|W,\mathbf{X}}$, helping to reduce this congeniality bias.

\newpage

\section{Causal generative model comparison} \label{sec:causal_gen_model_exp}

For the baseline models in $\S$\ref{causal_model_comparison}, we use the code provided by  \cite{chao2024modelingcausalmechanismsdiffusion} in their GitHub \url{https://github.com/patrickrchao/DiffusionBasedCausalModels}, and we use the same hyperparameter settings for both ANM and DCM as in that work.

In Table~\ref{tab:causal_gen_results} we report the full set of results for ANM and DCM with each of these graph discovery methods. We see that the differences between the graph discovery methods are relatively small, except on the IHDP dataset, where $\mathcal{G}_\text{discovered}$ is significantly better than  $\mathcal{G}_\text{naive}$ for the DCM model.

\setlength\tabcolsep{4pt}
\begin{table}[htbp]
    \centering
    \caption{$P_{\alpha, \mathbf{X}}$, $R_{\beta, \mathbf{X}}$, $\text{JSD}_\pi$, and $U_\text{PEHE}$ values for CGMs with different graph discovery methods. Averaged over 20 runs, with 95\% confidence intervals.}
    \begin{tabular}{l l l l l l}
        \toprule
        \textbf{Dataset} & \textbf{Model}  & $\boldsymbol{P_{\alpha, \mathbf{X}}}$ ($\uparrow$)& $\boldsymbol{R_{\beta, \mathbf{X}}}$ ($\uparrow$)& $\boldsymbol{\textbf{JSD}_\pi}$ ($\uparrow$)& $\boldsymbol{U_\textbf{PEHE}}$ ($\downarrow$)\\
        \midrule
          \multirow[t]{4}{*}{ACTG} & DCM $\mathcal{G}_\text{naive}$ &  0.773 $\pm$ 0.013 & 0.369 $\pm$ 0.006 & 0.937 $\pm$ 0.006 & 0.665 $\pm$ 0.034\\
          & DCM $\mathcal{G}_\text{discovered}$ & 0.756 $\pm$ 0.011 & 0.350 $\pm$ 0.007 & 0.956 $\pm$ 0.005 & 0.605 $\pm$ 0.023\\
          & DCM $\mathcal{G}_\text{pruned}$ & 0.758 $\pm$ 0.013 & 0.358 $\pm$ 0.007 & 0.957 $\pm$ 0.003 & 0.596 $\pm$ 0.017\\
          & ANM $\mathcal{G}_\text{naive}$ & 0.787 $\pm$ 0.007 & 0.389 $\pm$ 0.008& 0.954 $\pm$ 0.005& 0.580 $\pm$ 0.017 \\ 
          & ANM $\mathcal{G}_\text{discovered}$ & 0.836 $\pm$ 0.007 & 0.419 $\pm$ 0.007 & 0.952 $\pm$ 0.004& 0.578 $\pm$ 0.019\\
          & ANM $\mathcal{G}_\text{pruned}$ & 0.839 $\pm$ 0.008 & 0.412 $\pm$ 0.005 & 0.952 $\pm$ 0.005 & 0.582 $\pm$ 0.014 \\
        \midrule
        \midrule
        \multirow[t]{4}{*}{$\text{IHDP}^*$} & DCM $\mathcal{G}_\text{naive}$ &  0.557 $\pm$ 0.010 & 0.340 $\pm$ 0.009 & 0.883 $\pm$ 0.016 & 4.878 $\pm$ 0.395\\
        & DCM $\mathcal{G}_\text{discovered}$& 0.658 $\pm$ 0.011 & 0.360 $\pm$ 0.007 & 0.893 $\pm$ 0.008 & 2.059 $\pm$ 0.140\\
        & ANM $\mathcal{G}_\text{naive}$ & 0.597 $\pm$ 0.029 & 0.379 $\pm$ 0.011& 0.900 $\pm$ 0.005 & 1.868 $\pm$ 0.147 \\  
        & ANM $\mathcal{G}_\text{discovered}$& 0.589 $\pm$ 0.012 & 0.359 $\pm$ 0.009& 0.892 $\pm$ 0.008 & 1.865 $\pm$ 0.059 \\
        \midrule
        \midrule
          \multirow[t]{4}{*}{$\text{ACIC}^\dagger$} & DCM $\mathcal{G}_\text{discovered}$&  0.942 $\pm$ 0.004 & 0.422 $\pm$ 0.003& 0.957 $\pm$ 0.003 & 4.249 $\pm$ 0.132\\
          & DCM $\mathcal{G}_\text{pruned}$ & 0.939 $\pm$ 0.004 & 0.420 $\pm$ 0.004 & 0.959 $\pm$ 0.002 & 4.340 $\pm$ 0.159 \\
          & ANM $\mathcal{G}_\text{discovered}$& 0.929 $\pm$ 0.003 & 0.404 $\pm$ 0.003 & 0.872 $\pm$ 0.002 & 4.193 $\pm$ 0.127 \\
          & ANM $\mathcal{G}_\text{pruned}$ & 0.930 $\pm$ 0.004 & 0.404 $\pm$ 0.003 & 0.880 $\pm$ 0.002 & 4.481 $\pm$ 0.174 \\
        \bottomrule
        \multicolumn{5}{l}{${}^*$ $\mathcal{G}_\text{pruned}$ was the same as $\mathcal{G}_\text{discovered}$ for IHDP}\\
        \multicolumn{5}{l}{${}^\dagger$ Excessive runtime caused the exclusion of $\mathcal{G}_\text{naive}$ ACIC results}
    \end{tabular}
    \label{tab:causal_gen_results}
\end{table}

\newpage

\section{Differential privacy with STEAM}
\label{sec:differential_privacy}
\vspace{-5pt}
Theoretical guarantees of the privacy of synthetic data are often required in high-stakes scenarios, such as medicine. \method can permit this, satisfying DP when its three component models do, as an application of the post-processing and composition theorems of DP \citep{dwork2014algorithmic}.

\begin{proposition}
   If $Q_\mathbf{X}$, $Q_{W|\mathbf{X}}$, and $Q_{Y|W,\mathbf{X}}$ satisfy $(\epsilon_\mathbf{X}, \delta_\mathbf{X})$-, $(\epsilon_W, \delta_W)$-, and $(\epsilon_Y, \delta_Y)$-differential privacy respectively, STEAM satisfies $(\epsilon_\text{total}, \delta_\text{total})$-differential privacy, where $ \epsilon_\text{total} = \epsilon_\mathbf{X} + \epsilon_W + \epsilon_Y,\ \delta_\text{total} = \delta_\mathbf{X} + \delta_W + \delta_Y$. 
\end{proposition}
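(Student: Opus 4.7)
The plan is to treat this as a direct application of two standard results from the differential privacy literature: the basic (sequential) composition theorem and the post-processing theorem. The key observation is that STEAM's pipeline decomposes cleanly into a training phase, in which the real data $\mathcal{D}_\text{r}$ is actually accessed, and a generation phase, which only queries the trained models and therefore consumes no additional privacy budget.

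First, I would formalize STEAM as the composition of three mechanisms $\mathcal{A}_\mathbf{X}$, $\mathcal{A}_W$, $\mathcal{A}_Y$ acting on $\mathcal{D}_\text{r}$, where $\mathcal{A}_\mathbf{X}$ returns the fitted $Q_\mathbf{X}$, $\mathcal{A}_W$ returns the fitted $Q_{W|\mathbf{X}}$, and $\mathcal{A}_Y$ returns the fitted $Q_{Y|W,\mathbf{X}}$. By hypothesis, these mechanisms are $(\epsilon_\mathbf{X},\delta_\mathbf{X})$-, $(\epsilon_W,\delta_W)$- and $(\epsilon_Y,\delta_Y)$-DP respectively. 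Second, I would invoke the basic composition theorem (e.g. Theorem 3.16 of Dwork and Roth), which states that the joint release $(\mathcal{A}_\mathbf{X}(\mathcal{D}_\text{r}),\mathcal{A}_W(\mathcal{D}_\text{r}),\mathcal{A}_Y(\mathcal{D}_\text{r}))$ is $(\epsilon_\mathbf{X}+\epsilon_W+\epsilon_Y,\,\delta_\mathbf{X}+\delta_W+\delta_Y)$-DP. Since composition holds even adaptively, it does not matter whether $\mathcal{A}_W$ and $\mathcal{A}_Y$ use the outputs of earlier stages in addition to $\mathcal{D}_\text{r}$.

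Third, I would argue that STEAM's generation phase $\mathcal{D}_\text{s}=\{(\mathbf{X}_\text{s}^{(i)},W_\text{s}^{(i)},Y_\text{s}^{(i)})\}_{i=1}^n$ is a deterministic or randomized function of $(Q_\mathbf{X},Q_{W|\mathbf{X}},Q_{Y|W,\mathbf{X}})$ together with fresh auxiliary randomness independent of $\mathcal{D}_\text{r}$: draw $\mathbf{X}_\text{s}^{(i)}\sim Q_\mathbf{X}$, then $W_\text{s}^{(i)}\sim Q_{W|\mathbf{X}_\text{s}^{(i)}}$, then $Y_\text{s}^{(i)}\sim Q_{Y|W_\text{s}^{(i)},\mathbf{X}_\text{s}^{(i)}}$. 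Because no further access to $\mathcal{D}_\text{r}$ occurs, this is post-processing of the tuple of trained models, and the post-processing theorem preserves the privacy parameters. Combining with the composition step yields the claimed $(\epsilon_\text{total},\delta_\text{total})$ bound.

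There is no genuine obstacle here; the result is essentially bookkeeping. The only point worth stating carefully is the clean separation between the privacy-consuming training of each component and the privacy-free sampling step, since readers unfamiliar with DP might suspect that drawing $W_\text{s}^{(i)}\mid\mathbf{X}_\text{s}^{(i)}$ somehow re-touches the real data. Making explicit that the generation step only invokes the already-DP models, together with fresh randomness, removes this concern and lets the two standard theorems close the argument immediately.
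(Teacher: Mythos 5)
Your proposal is correct and follows essentially the same route as the paper's proof: both reduce the claim to the standard composition and post-processing theorems of differential privacy applied to the three component models. The only cosmetic difference is ordering — you compose the three training mechanisms first and treat the entire sampling pipeline as a single post-processing step, whereas the paper interleaves post-processing arguments between stages before composing — but the substance and the resulting bookkeeping are identical.
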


\begin{proof}
See below.
\end{proof}

There are a number of existing DP generative models, classifiers, and regressors which can be set as $Q_\mathbf{X}$, $Q_{W|\mathbf{X}}$, and $Q_{Y|W,\mathbf{X}}$ respectively to enable this. 

\subsection{\method differential privacy proof}
\label{DP Proof}
The theoretical guarantee of STEAM’s differential privacy (DP) when using individual DP components is grounded in the post-processing and composition theorems of DP \citep{dwork2014algorithmic}, as we state in the main body of the paper. We make this derivation clear here by first outlining the post-processing and composition theorems in full.

\begin{theorem*}[Post-Processing Theorem]
    Let \( M : \mathbb{N}^{|\mathcal{X}|} \to \mathcal{R} \) be a randomised algorithm that is \((\epsilon, \delta)\)-differentially private. Let \( f : \mathcal{R} \to \mathcal{R}' \) be an arbitrary randomised mapping. Then the composition \( f \circ M : \mathbb{N}^{|\mathcal{X}|} \to \mathcal{R}' \) is \((\epsilon, \delta)\)-differentially private.
\end{theorem*}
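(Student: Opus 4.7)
The plan is to prove this via the standard two-step reduction: first establish the claim for \emph{deterministic} post-processing maps, then bootstrap to arbitrary randomised $f$ by representing it as a mixture over deterministic maps indexed by an independent random seed. Throughout, I would fix adjacent databases $D, D' \in \mathbb{N}^{|\mathcal{X}|}$ and an arbitrary measurable event $S' \subseteq \mathcal{R}'$, and aim to verify the definition directly:
\[
\Pr[(f \circ M)(D) \in S'] \leq e^{\epsilon}\Pr[(f \circ M)(D') \in S'] + \delta.
\]

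For the deterministic case, the key move is to convert the event $\{f(M(D)) \in S'\}$ into an event about $M(D)$ alone by taking preimages. Setting $A := f^{-1}(S') \subseteq \mathcal{R}$, which is measurable since $f$ is, we have $\{f(M(D)) \in S'\} = \{M(D) \in A\}$. Applying the $(\epsilon,\delta)$-DP guarantee of $M$ to the set $A$ then gives $\Pr[M(D) \in A] \leq e^{\epsilon}\Pr[M(D') \in A] + \delta$, and rewriting the right-hand side in terms of $f$ closes the deterministic case.

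For the randomised case, I would represent $f$ as a family of deterministic maps $\{f_r\}_r$ indexed by an independent random seed $r \sim \mu$, so that $f(y) \stackrel{d}{=} f_r(y)$ with $r$ independent of $M(D)$. Conditioning on $r$ and applying the deterministic bound pointwise, then integrating against $\mu$, yields
\[
\Pr[f(M(D)) \in S'] = \int \Pr[f_r(M(D)) \in S']\, d\mu(r) \leq \int \bigl( e^{\epsilon}\Pr[f_r(M(D')) \in S'] + \delta \bigr) d\mu(r),
\]
which collapses to $e^{\epsilon}\Pr[f(M(D')) \in S'] + \delta$ since $\mu$ is a probability measure and the additive $\delta$ term integrates to itself.

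The main obstacle is genuinely mild: the only subtlety is the measure-theoretic bookkeeping in the randomised step, namely ensuring joint measurability of $(r,y) \mapsto f_r(y)$ so that the Fubini-style exchange above is valid and $\{r : f_r(M(D)) \in S'\}$ is a measurable set. This is handled by the standing assumption in the DP literature that randomised algorithms are realised as measurable maps of the input and an independent random string, which makes the mixture representation of $f$ legitimate and the rest of the argument essentially a one-line preimage calculation.
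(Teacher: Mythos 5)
The paper does not prove this statement: it quotes the Post-Processing Theorem verbatim as a known result from Dwork and Roth (2014) and uses it only as an ingredient in the proof of Proposition 1. Your argument is correct and is precisely the canonical proof from that reference --- the preimage reduction $\{f(M(D)) \in S'\} = \{M(D) \in f^{-1}(S')\}$ for deterministic $f$, followed by the independent-seed (convex-combination) extension to randomised $f$ --- so there is nothing to fault and no divergence from the source the paper relies on.
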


\begin{theorem*}[Composition Theorem]
    Let $M_i : \mathbb{N}^{|\mathcal{X}|} \to \mathcal{R}_i$ be an $(\epsilon_i, \delta_i)$-differentially private algorithm for $i \in [k]$. Define $M_{[k]} : \mathbb{N}^{|\mathcal{X}|} \to \prod_{i=1}^{k} \mathcal{R}_i$ as:
\[
M_{[k]}(x) = (M_1(x), M_2(x), \dots, M_k(x)),
\]
then $M_{[k]}$ is $\left( \sum_{i=1}^{k} \epsilon_i, \sum_{i=1}^{k} \delta_i \right)$-differentially private.
\end{theorem*}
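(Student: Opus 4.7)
The proof proceeds by induction on $k$. The base case $k=1$ is immediate, since the conclusion reduces to the hypothesis. For the inductive step, the inductive hypothesis gives that $M_{[k-1]}$ is $(\sum_{i<k}\epsilon_i, \sum_{i<k}\delta_i)$-DP, and since $M_k$ is independent of $M_{[k-1]}$ given the input, it suffices to handle the 2-fold case: show that if $M_1$ is $(\epsilon_1, \delta_1)$-DP and $M_2$ is $(\epsilon_2, \delta_2)$-DP, both drawing independent randomness, then the pair $(M_1, M_2)$ is $(\epsilon_1 + \epsilon_2, \delta_1 + \delta_2)$-DP; the general claim follows by iterating.

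For the 2-fold case, I would fix neighboring datasets $x, x'$ and a measurable $S \subseteq \mathcal{R}_1 \times \mathcal{R}_2$, and decompose via Fubini
\[
\Pr[(M_1(x), M_2(x)) \in S] = \int_{\mathcal{R}_1} \Pr[M_2(x) \in S_{r_1}] \, dP_{M_1(x)}(r_1),
\]
where $S_{r_1} = \{r_2 : (r_1, r_2) \in S\}$. A naive pointwise application of the two DP guarantees, chained through the integral, yields only the loose bound $e^{\epsilon_2}\delta_1 + \delta_2$ rather than the desired $\delta_1 + \delta_2$, so a more careful argument is needed.

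To obtain the tight bound, the plan is to invoke the standard equivalent characterization of approximate DP: a mechanism $M_i$ is $(\epsilon_i, \delta_i)$-DP iff, for each ordered pair of neighboring inputs, there is a ``bad event'' $B_i \subseteq \mathcal{R}_i$ with $\Pr[M_i(x) \in B_i] \leq \delta_i$ on whose complement the likelihood ratio $dP_{M_i(x)}/dP_{M_i(x')}$ is bounded by $e^{\epsilon_i}$. Because $M_1$ and $M_2$ use independent randomness, on the product complement $(\mathcal{R}_1 \setminus B_1) \times (\mathcal{R}_2 \setminus B_2)$ the joint Radon--Nikodym derivative factors and is bounded by $e^{\epsilon_1 + \epsilon_2}$, while a union bound shows that the combined bad event has mass at most $\delta_1 + \delta_2$ under $M(x)$. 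Splitting $\Pr[(M_1(x),M_2(x)) \in S]$ into its intersection with the good and bad regions and bounding each piece separately then gives $\Pr[(M_1,M_2)(x) \in S] \leq e^{\epsilon_1+\epsilon_2}\Pr[(M_1,M_2)(x') \in S] + \delta_1 + \delta_2$, as required.

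The main obstacle is the bad-event characterization itself: making it measure-theoretically precise, and in particular choosing the sets $B_i$ so that the union bound couples the two mechanisms symmetrically and the composed slack is exactly $\delta_1 + \delta_2$, not the asymmetric $e^{\epsilon_2}\delta_1 + \delta_2$ produced by direct chaining. Once this lemma is in hand, the inductive composition over $k$ mechanisms is a routine bookkeeping exercise.
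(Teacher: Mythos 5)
First, note that the paper never proves this statement: it is quoted verbatim as the standard (non\nobreakdash-adaptive) basic composition theorem of differential privacy, attributed to Dwork and Roth, and then used as a black box in the proof of Proposition 1. So your attempt can only be measured against the standard literature proof. Your architecture (induction on $k$, reducing to the two\nobreakdash-fold case with independent randomness) is correct, but the key lemma you plan to invoke is false in the direction you need. The claim that $M$ is $(\epsilon,\delta)$-DP \emph{iff} there is a bad event $B$ with $\Pr[M(x)\in B]\le\delta$ outside of which the likelihood ratio is bounded by $e^{\epsilon}$ is only an ``if'': the existence of such a $B$ is the strictly stronger notion usually called probabilistic (or pointwise) DP. Counterexample: on $\{0,1\}$ take $P=(1/2,\,1/2)$, $Q=(1/2-c,\,1/2+c)$ and $\epsilon=0$; then $P(S)\le Q(S)+c$ for every $S$, so the pair is $(0,c)$-close, yet the likelihood ratio exceeds $1$ at the point $0$, which has $P$-mass $1/2 \gg c$, so no admissible $B$ exists. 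Your union-bound step therefore cannot get off the ground as written; repairing it requires the genuinely more delicate decomposition lemma in which one constructs an auxiliary measure $P'$ with $P'(S)\le e^{\epsilon}Q(S)$ for all $S$ and $P$, $P'$ within distance $\delta$.

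The simpler repair is that the ``naive'' route you dismissed actually succeeds once you truncate before changing measure. Writing $b(r_1)=\Pr[M_2(x')\in S_{r_1}]$ and $P_1,Q_1$ for the laws of $M_1(x),M_1(x')$, you have $\Pr[M_2(x)\in S_{r_1}]\le\min\{1,\,e^{\epsilon_2}b(r_1)\}+\delta_2$, and $g:=\min\{1,\,e^{\epsilon_2}b\}$ takes values in $[0,1]$, so the layer-cake identity $\int g\,dP_1=\int_0^1 P_1(g>t)\,dt$ upgrades the DP guarantee of $M_1$ from indicator sets to $[0,1]$-valued test functions: $\int g\,dP_1\le e^{\epsilon_1}\int g\,dQ_1+\delta_1$. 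Since $\int g\,dQ_1\le e^{\epsilon_2}\int b\,dQ_1$, chaining gives exactly $e^{\epsilon_1+\epsilon_2}\Pr[(M_1(x'),M_2(x'))\in S]+\delta_1+\delta_2$. The $e^{\epsilon_2}\delta_1$ inflation you feared arises only if you apply the $M_1$ guarantee to the untruncated function $e^{\epsilon_2}b$, which can exceed $1$. With this substitution your induction closes and the proof is complete.
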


Given these theorems, we have our guarantee of DP generation with STEAM, as stated in Proposition 1. Specifically:

\begin{proof}
    $Q_\mathbf{X}$ generates $\mathbf{X}$, and satisfies $(\epsilon_\mathbf{X}, \delta_\mathbf{X})$-differential privacy by assumption. 

    By the post-processing theorem, inputting $\mathbf{X}$ as the condition to $Q_{W|\mathbf{X}}$ does not affect its privacy. $Q_{W|\mathbf{X}}$ generates $W$, and satisfies $(\epsilon_W, \delta_W)$-differential privacy by assumption.

    By the post-processing theorem, inputting $W$ and $\mathbf{X}$ as the conditions to $Q_{Y|W,\mathbf{X}}$ does not affect their privacy. $Q_{Y|W,\mathbf{X}}$ generates $Y$, and satisfies $(\epsilon_Y, \delta_Y)$-differential privacy by assumption.

    STEAM generates $(\mathbf{X}, W, Y)$, and is the composition of $Q_\mathbf{X}$, $Q_{W|\mathbf{X}}$, and $Q_{Y|W,\mathbf{X}}$, i.e. STEAM = ($Q_\mathbf{X}$, $Q_{W|\mathbf{X}}$, $Q_{Y|W,\mathbf{X}}$)

    Therefore, by the composition theorem STEAM satisfies $(\epsilon_\text{total}, \delta_\text{total})$-differential privacy, where $\epsilon_\text{total} = \epsilon_\mathbf{X} + \epsilon_W + \epsilon_Y,\ \delta_\text{total} = \delta_\mathbf{X} + \delta_W + \delta_Y$.
\end{proof}

\subsection{Differential privacy experiments}

We now examine \method's performance in $(\epsilon, \delta)$-DP generation. For comparison we use a state-of-the-art DP generative model, AIM \citep{mckenna2022aim}, set at privacy level $(\epsilon, \delta)$, and, for \method, we set $Q_\mathbf{X}$ as $(\epsilon/3, \delta/3)$-AIM, $Q_{W|\mathbf{X}}$ as an $(\epsilon/3, \delta/3)$-DP random forest, and $Q_{Y|W, \mathbf{X}}$ as an $(\epsilon/3, \delta/3)$-T-Learner, such that \method is also $(\epsilon, \delta)$-DP. We compare performance on the ACTG dataset across $\epsilon \in \{0.25, 0.5, 1,2,3,5,10,15\}$ with $\delta=10^{-6}$ in Figure~\ref{fig:privacy_aim}.

\begin{tcolorbox}[colback=green!10!white!85!gray, colframe=green!10!white!85!gray, 
  colbacktitle=green!10!white!85!gray, coltitle = black, left=2mm, right=2mm, title=, enhanced jigsaw, borderline west={3pt}{0pt}{green!50!black}, sharp corners,breakable]
    \vspace{-6pt}
 \textbf{Takeaway.} $\text{\method}_\text{AIM}$ models $P_{Y|W,\mathbf{X}}$ better on all tested values of $\epsilon$, as $U_\text{PEHE}$ is significantly lower than for standard AIM. $P_{W|\mathbf{X}}$ is better modelled by $\text{\method}_\text{AIM}$ at small $\epsilon$, with equivalent performance between the methods at less conservative budgets. $P_{\mathbf{X}}$, on the other hand, is better preserved by standard AIM, scoring higher on $P_{\alpha, \mathbf{X}}$ and $R_{\beta, \mathbf{X}}$ at most $\epsilon$. This is likely because assigning $Q_\mathbf{X}$ one third of the budget of the standard AIM model and having it model largely the same distribution, save for the removed $W$ and $Y$, is prohibitively restrictive given the high-dimensionality of $\mathbf{X}$. As such, with uniform distribution of ($\epsilon, \delta$) across each component, there is a trade-off between $\text{\method}_\text{AIM}$ and standard AIM, where $\text{\method}_\text{AIM}$ better preserves $P_{W|\mathbf{X}}$ and $P_{Y|W,\mathbf{X}}$, while standard AIM preserves $P_{\mathbf{X}}$ better. Distributing ($\epsilon, \delta$) differently amongst  $Q_\mathbf{X}$, $Q_{W|\mathbf{X}}$, and $Q_{Y|W,\mathbf{X}}$ could address this trade-off, as we discuss in Appendix~\ref{privacy budget appendix}.
    \vspace{-6pt}
\end{tcolorbox}

\begin{figure*}[t]
  \centering
  \begin{minipage}[c]{0.24\textwidth}
    \vfill
    \centering
    \includegraphics[width=3.4cm]{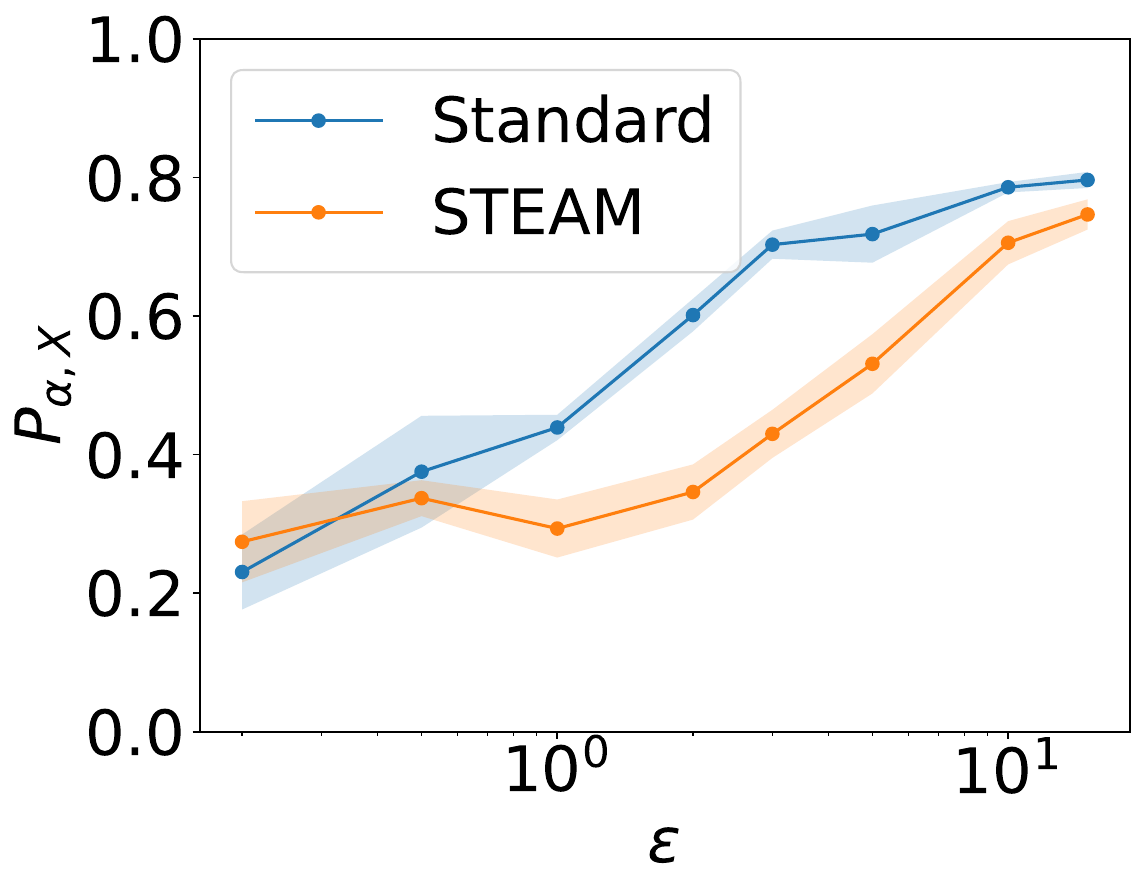}
    \vfill
  \end{minipage}
  \begin{minipage}[c]{0.24\textwidth}
    \vfill
    \centering
    \includegraphics[width=3.4cm]{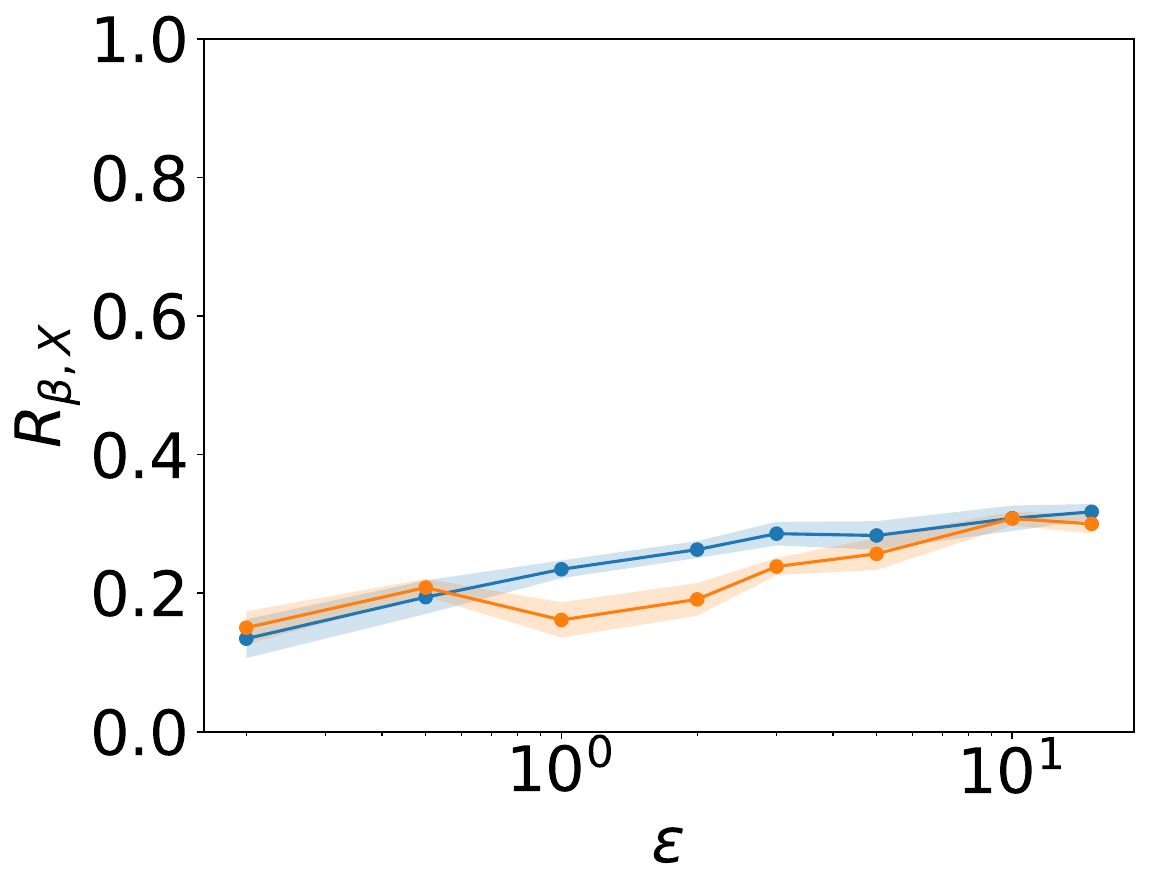}
    \vfill
  \end{minipage}
  \begin{minipage}[c]{0.24\textwidth}
    \vfill
    \centering
    \includegraphics[width=3.4cm]{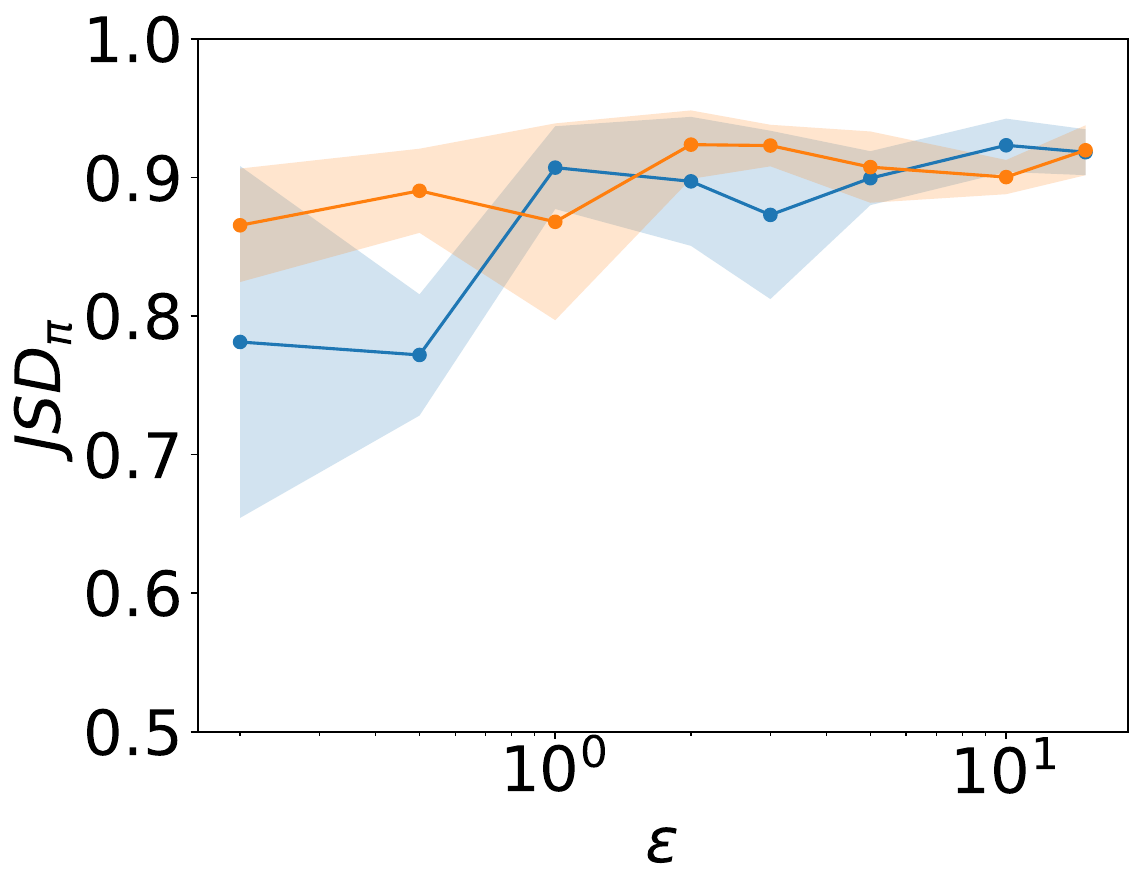}
    \vfill
  \end{minipage}
  \begin{minipage}[c]{0.24\textwidth}
    \vfill
    \centering
    \includegraphics[width=3.4cm]{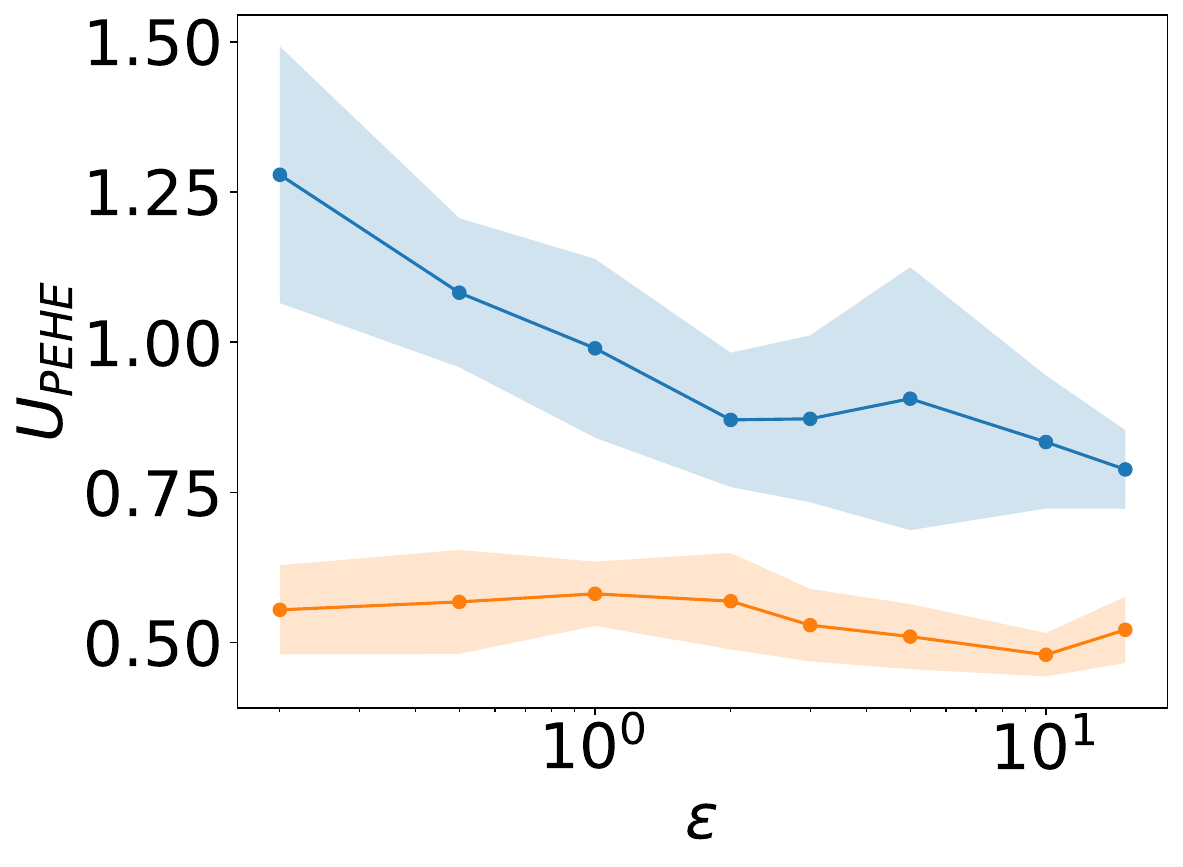}
    \vfill
  \end{minipage}
  \caption{$P_{\alpha, \mathbf{X}}$ ($\uparrow$), $R_{\beta, \mathbf{X}}$ ($\uparrow$), $\text{JSD}_\pi$ ($\uparrow$), and $U_\text{PEHE}$ ($\downarrow$) evaluating $\text{\method}_\text{AIM}$ and standard AIM across privacy budgets. Averaged over 5 runs, shaded area represents 95\% CIs.}
  \label{fig:privacy_aim}
  \vspace{-10pt}
\end{figure*}

\subsection{Extended differential privacy results}\label{sec:extended_privacy_results}
We now report $(\epsilon, \delta)$-DP generation results on the ACTG across a wider set of baseline models. In Figure~\ref{fig:privacy_gem} we compare GEM \citep{liu2021iterative} with $\text{\method}_\text{GEM}$, in Figure~\ref{fig:privacy_mst} we compare MST \citep{mckenna2021winning} with $\text{\method}_\text{MST}$, and in Figure~\ref{fig:privacy_rap} we compare RAP \citep{aydore2021differentially} with $\text{\method}_\text{RAP}$. While there are some nuances to each baseline comparison, the general takeaway remains similar: STEAM models preserve $P_{W|\mathbf{X}}$ and $P_{Y|W,\mathbf{X}}$ better, while standard models preserve $P_\mathbf{X}$ better.

\begin{figure}[h!]
  \centering
  \begin{minipage}[c]{0.24\textwidth}
    \vfill
    \centering
    \includegraphics[width=3.4cm]{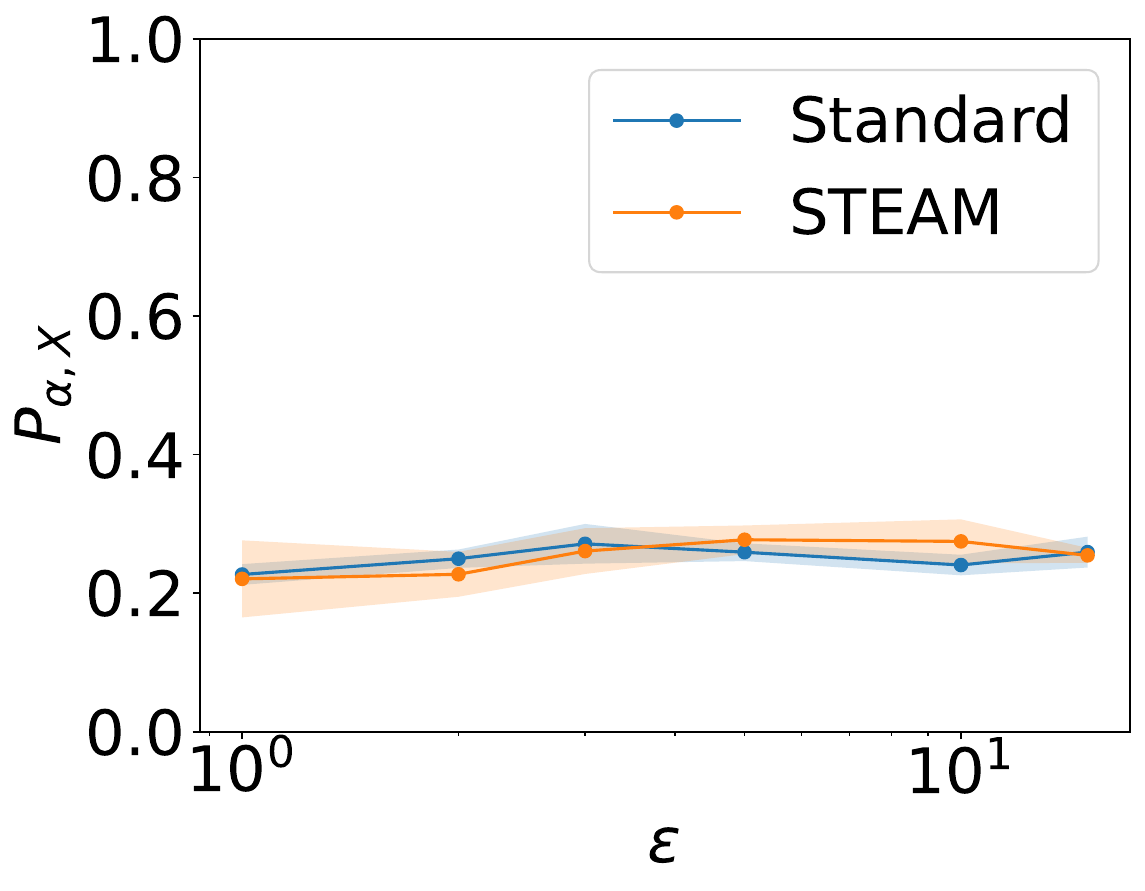}
    \vfill
  \end{minipage}
  \begin{minipage}[c]{0.24\textwidth}
    \vfill
    \centering
    \includegraphics[width=3.4cm]{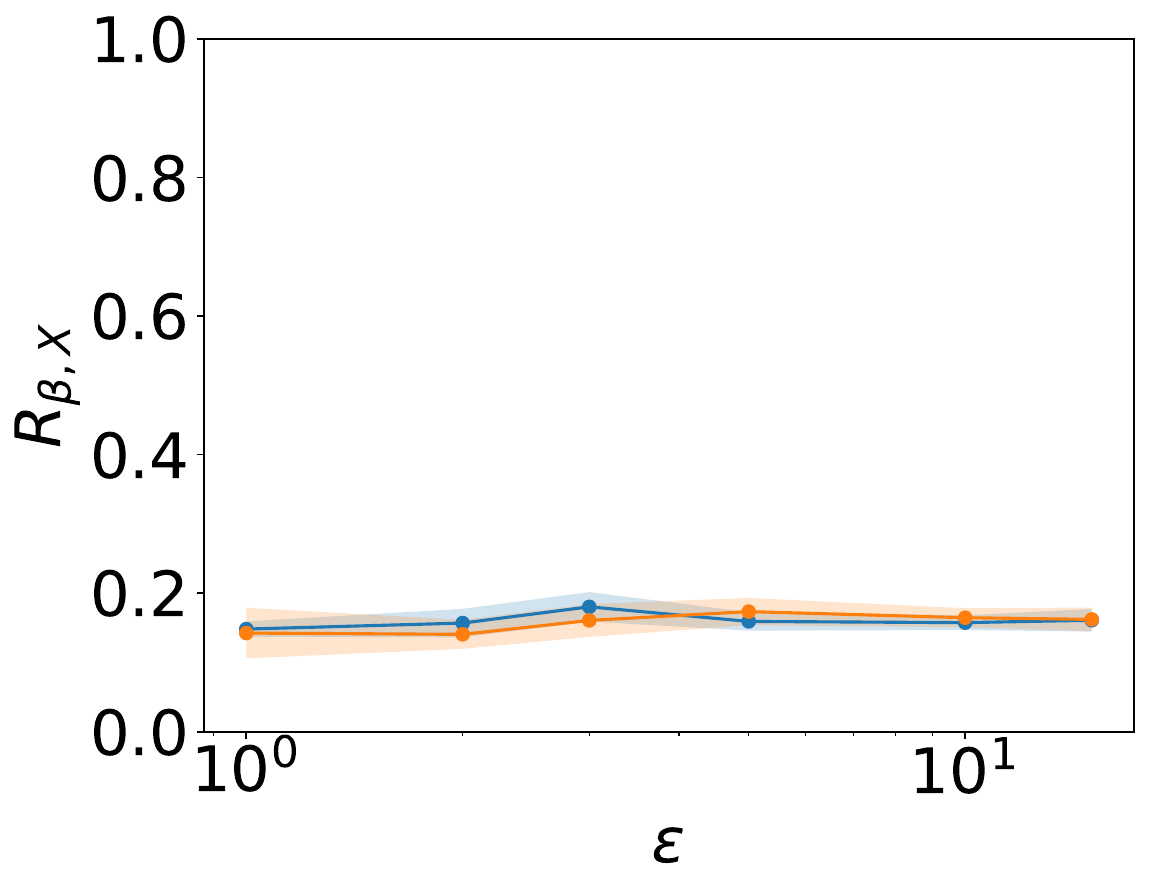}
    \vfill
  \end{minipage}
  \begin{minipage}[c]{0.24\textwidth}
    \vfill
    \centering
    \includegraphics[width=3.4cm]{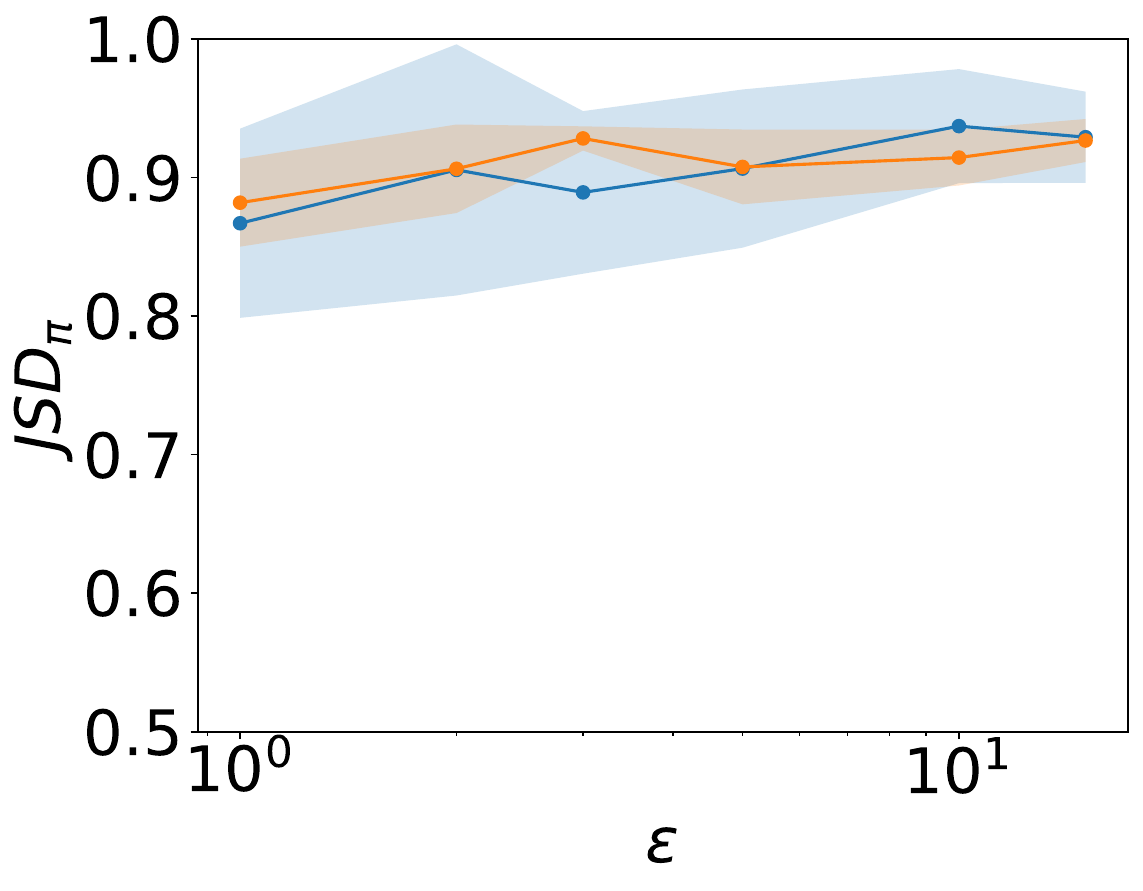}
    \vfill
  \end{minipage}
  \begin{minipage}[c]{0.24\textwidth}
    \vfill
    \centering
    \includegraphics[width=3.4cm]{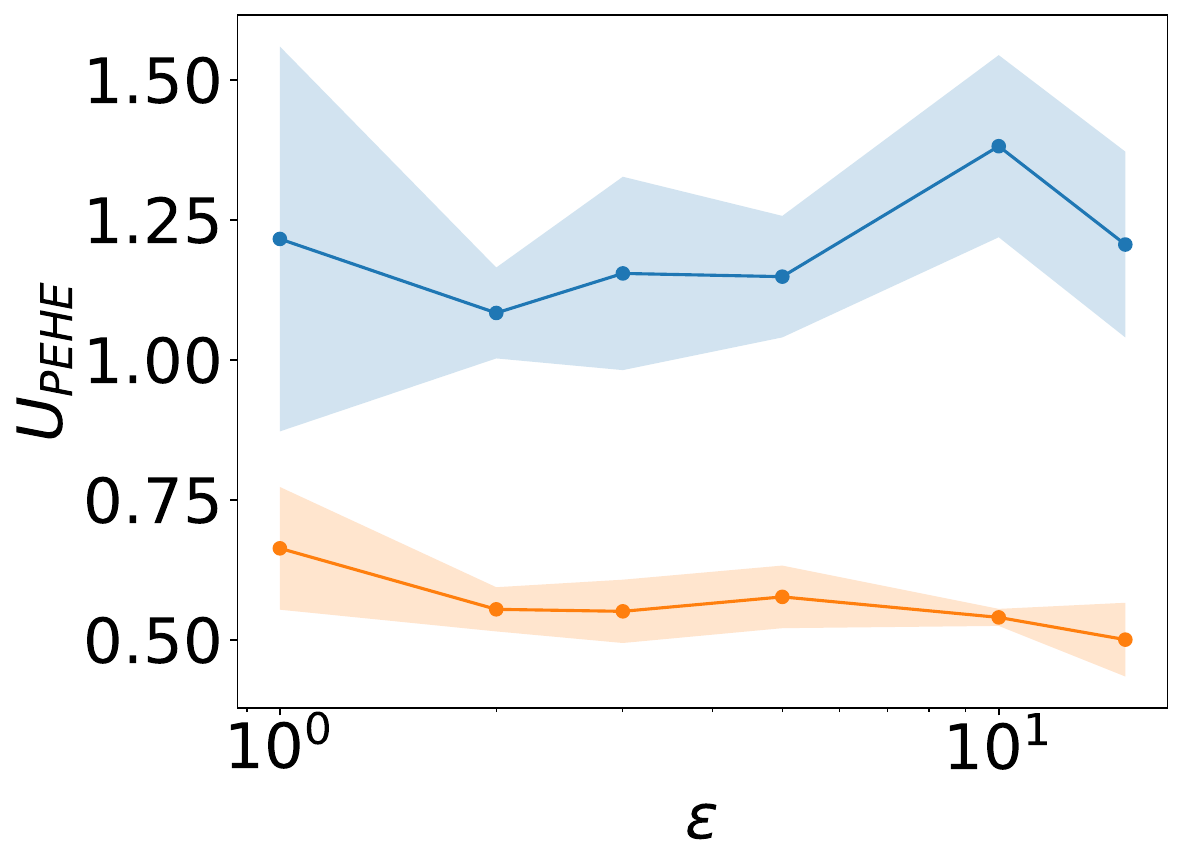}
    \vfill
  \end{minipage}
  \caption{$P_{\alpha, \mathbf{X}}$ ($\uparrow$), $R_{\beta, \mathbf{X}}$ ($\uparrow$), $\text{JSD}_\pi$ ($\uparrow$), and $U_\text{PEHE}$ ($\downarrow$) evaluating $\text{\method}_\text{GEM}$ and standard GEM across privacy budgets. Averaged over 5 runs, shaded area represents 95\% CIs.}
  \label{fig:privacy_gem}
  \vspace{-10pt}
\end{figure}

\begin{figure}[h!]
  \centering
  \begin{minipage}[c]{0.24\textwidth}
    \vfill
    \centering
    \includegraphics[width=3.4cm]{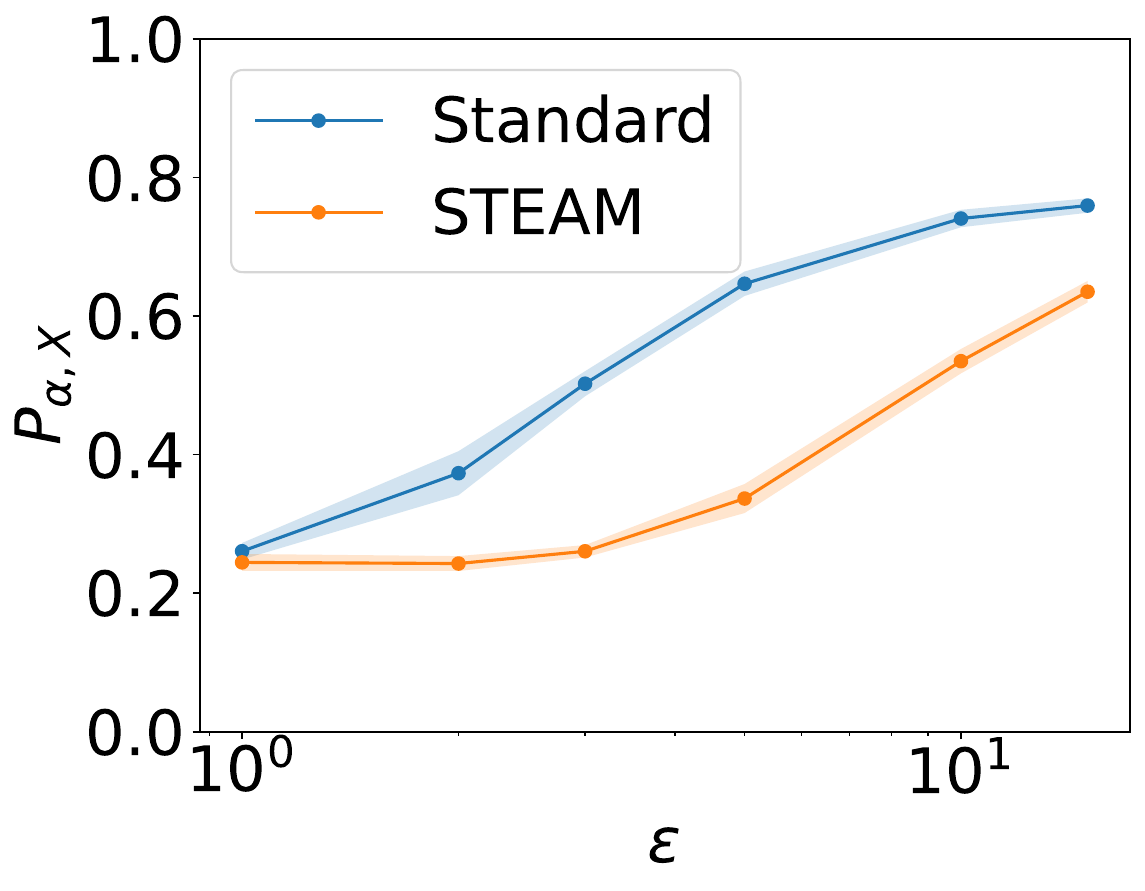}
    \vfill
  \end{minipage}
  \begin{minipage}[c]{0.24\textwidth}
    \vfill
    \centering
    \includegraphics[width=3.4cm]{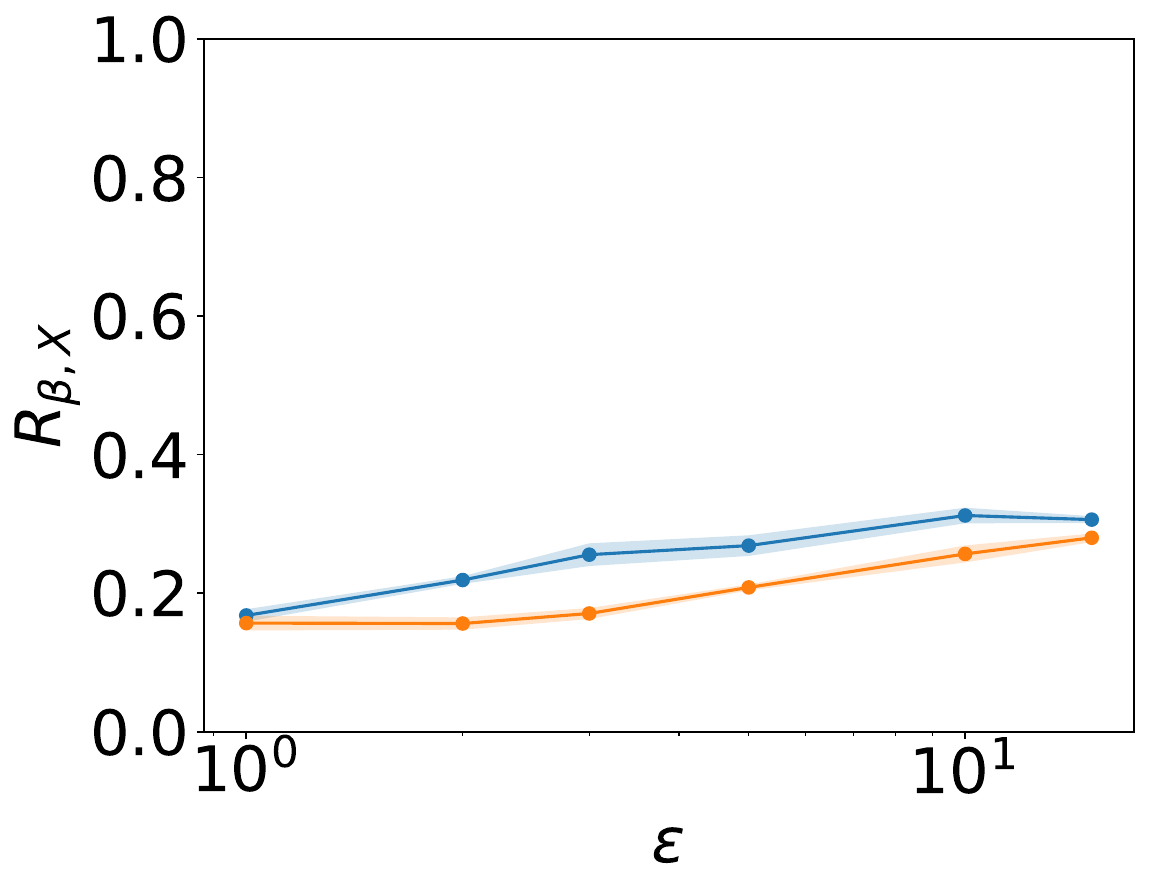}
    \vfill
  \end{minipage}
  \begin{minipage}[c]{0.24\textwidth}
    \vfill
    \centering
    \includegraphics[width=3.4cm]{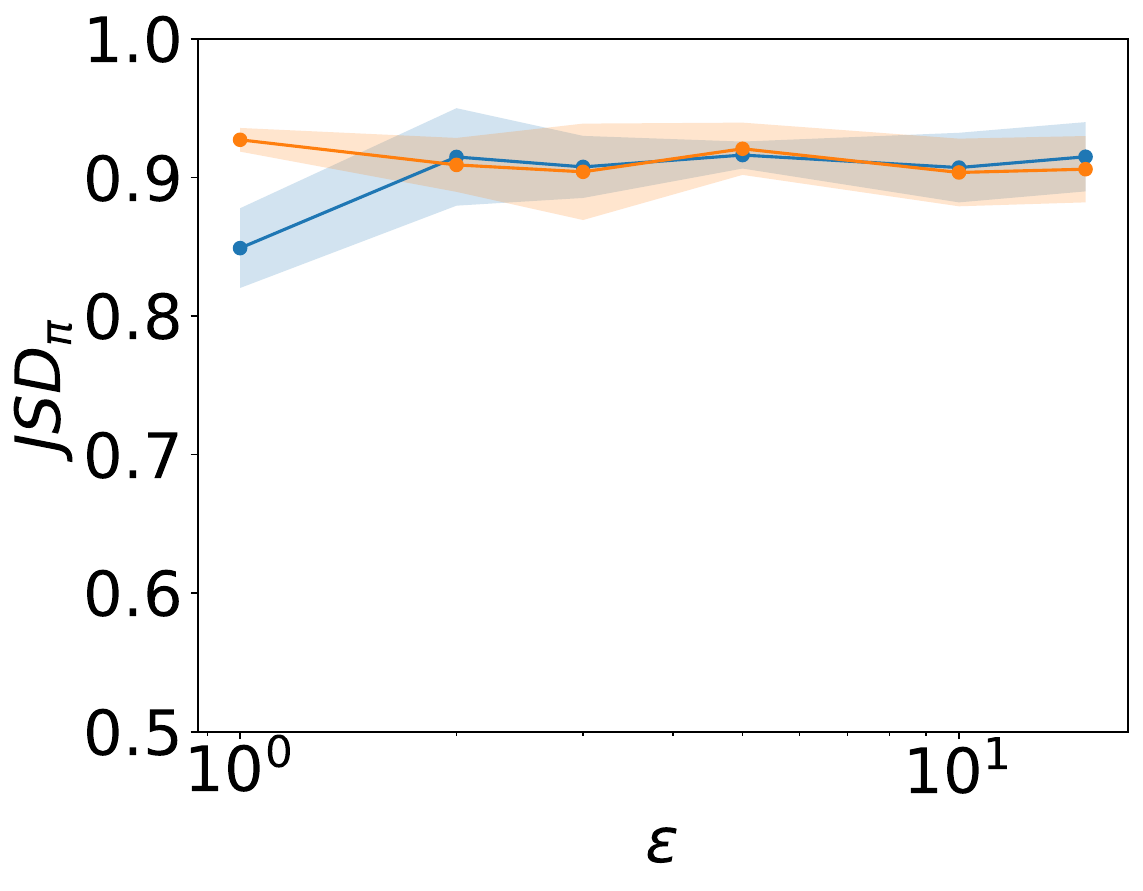}
    \vfill
  \end{minipage}
  \begin{minipage}[c]{0.24\textwidth}
    \vfill
    \centering
    \includegraphics[width=3.4cm]{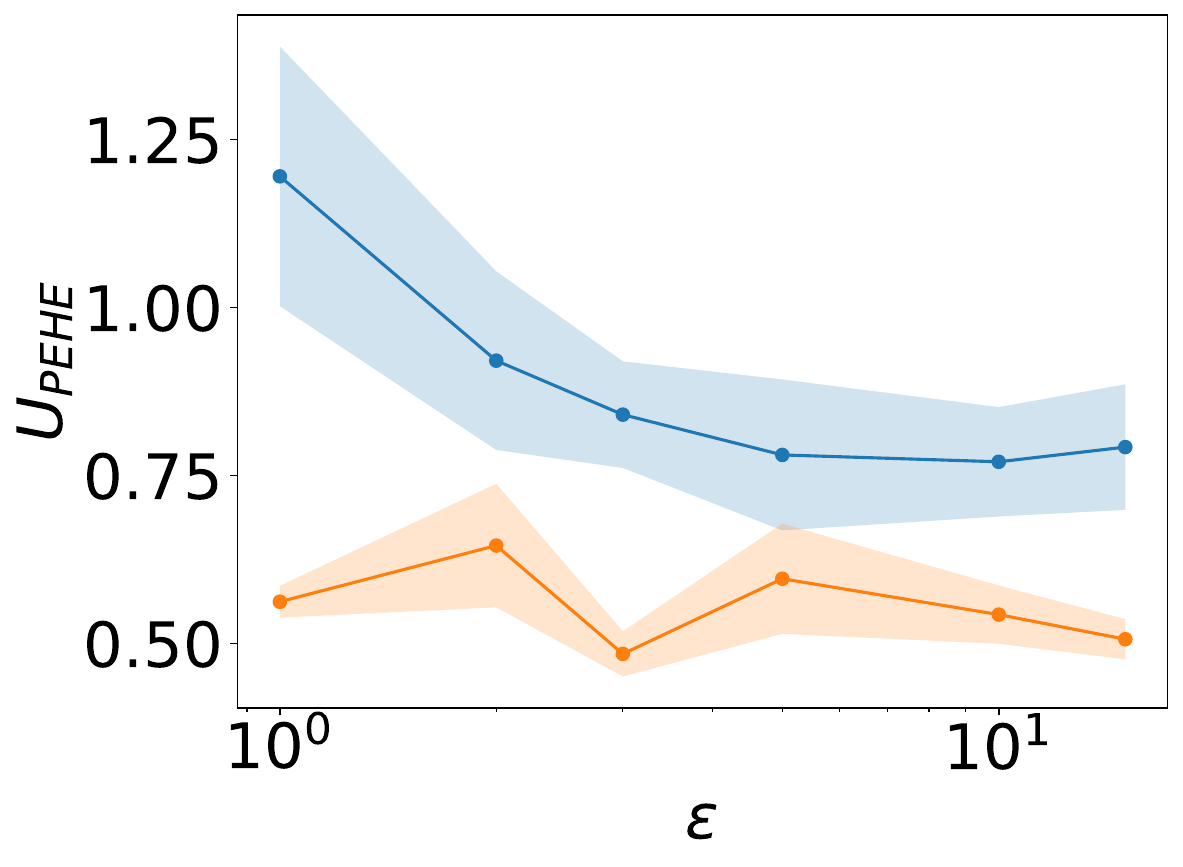}
    \vfill
  \end{minipage}
  \caption{$P_{\alpha, \mathbf{X}}$ ($\uparrow$), $R_{\beta, \mathbf{X}}$ ($\uparrow$), $\text{JSD}_\pi$ ($\uparrow$), and $U_\text{PEHE}$ ($\downarrow$) evaluating $\text{\method}_\text{MST}$ and standard MST across privacy budgets. Averaged over 5 runs, shaded area represents 95\% CIs.}
  \label{fig:privacy_mst}
  \vspace{-10pt}
\end{figure}

\begin{figure}[h!]
  \centering
  \begin{minipage}[c]{0.24\textwidth}
    \vfill
    \centering
    \includegraphics[width=3.4cm]{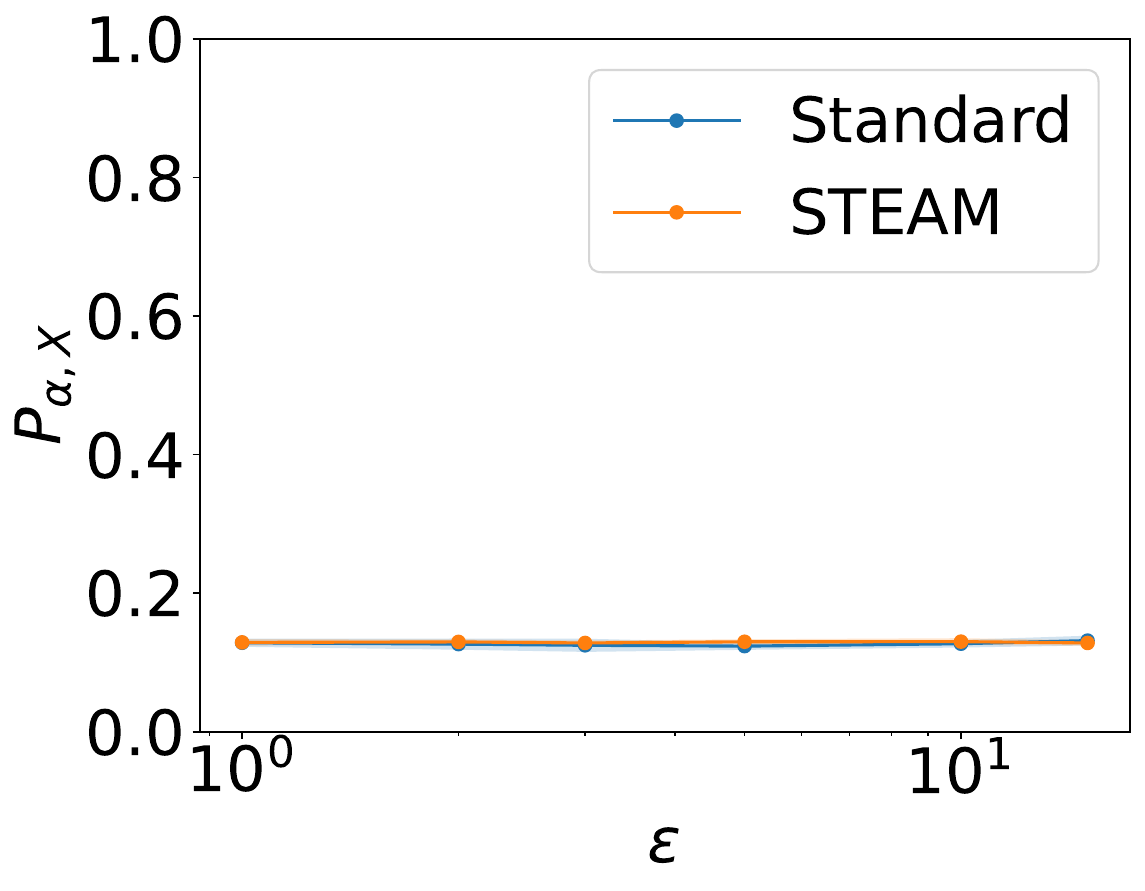}
    \vfill
  \end{minipage}
  \begin{minipage}[c]{0.24\textwidth}
    \vfill
    \centering
    \includegraphics[width=3.4cm]{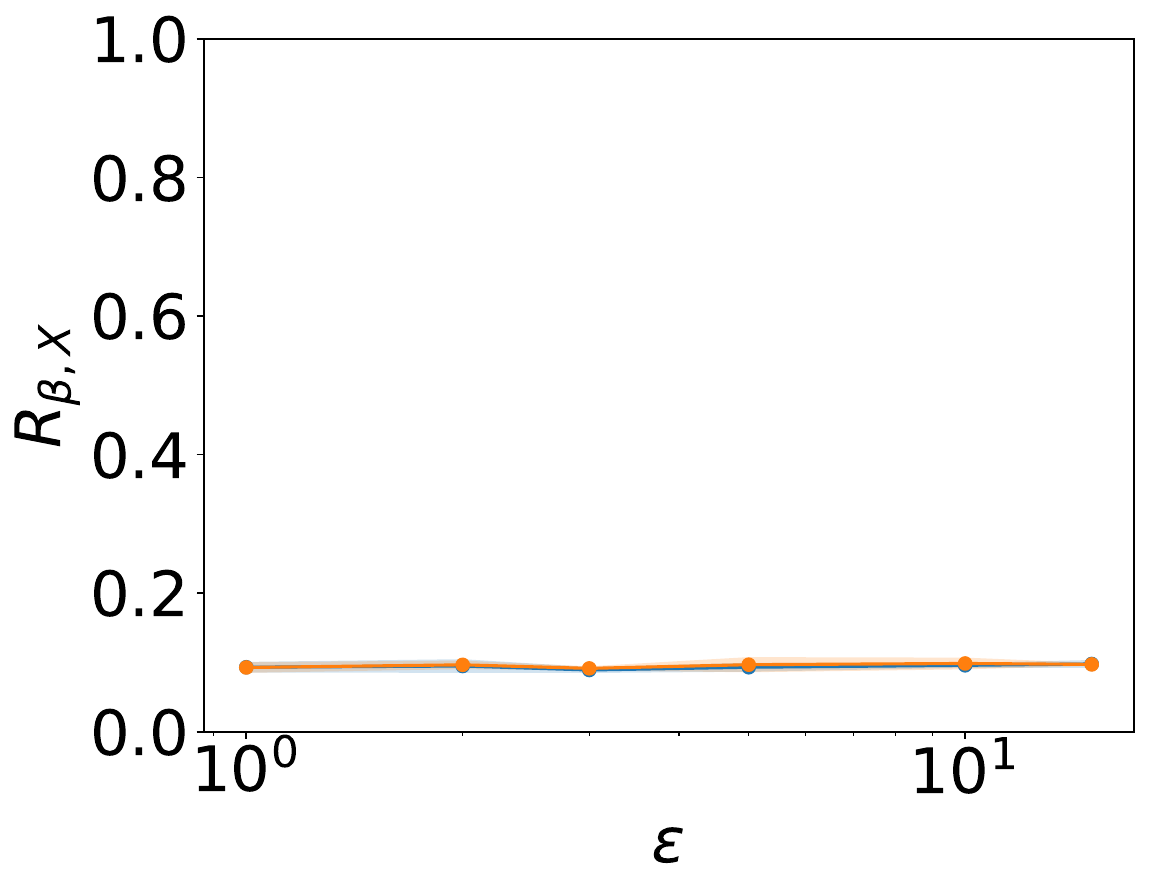}
    \vfill
  \end{minipage}
  \begin{minipage}[c]{0.24\textwidth}
    \vfill
    \centering
    \includegraphics[width=3.4cm]{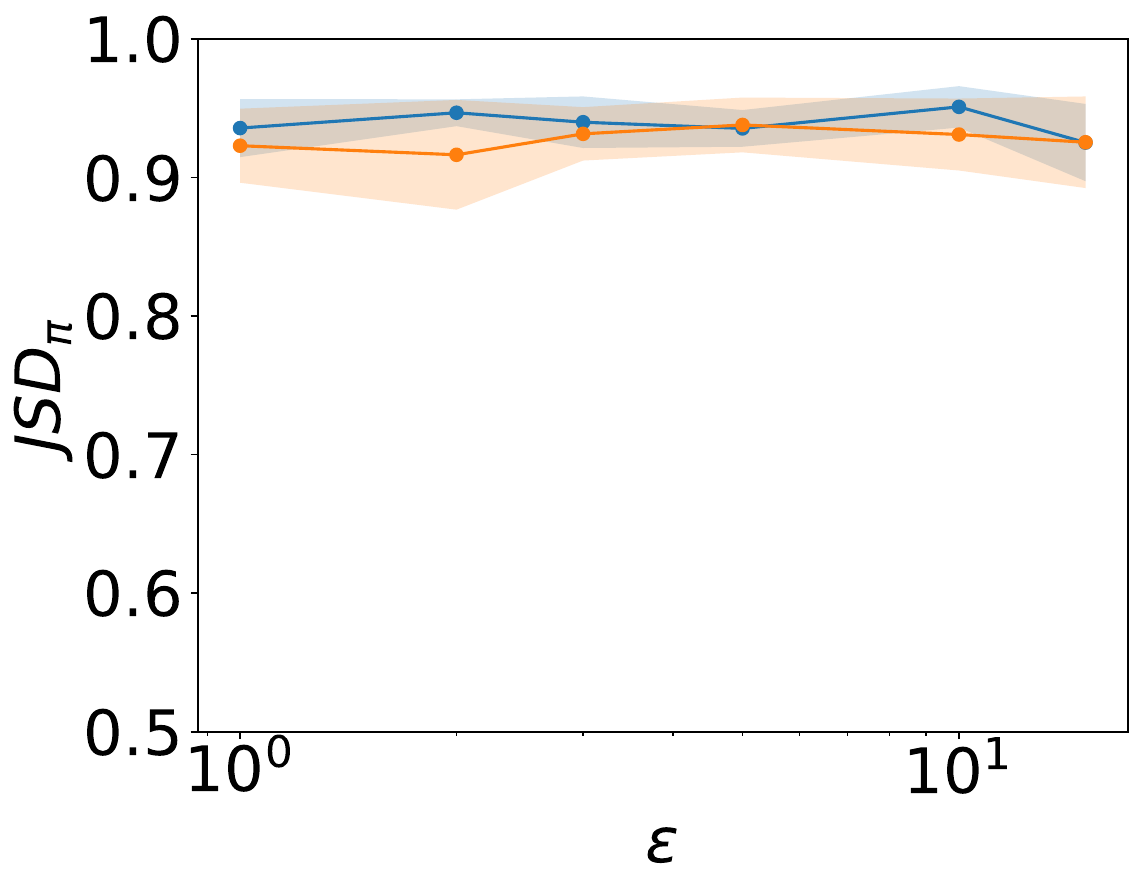}
    \vfill
  \end{minipage}
  \begin{minipage}[c]{0.24\textwidth}
    \vfill
    \centering
    \includegraphics[width=3.4cm]{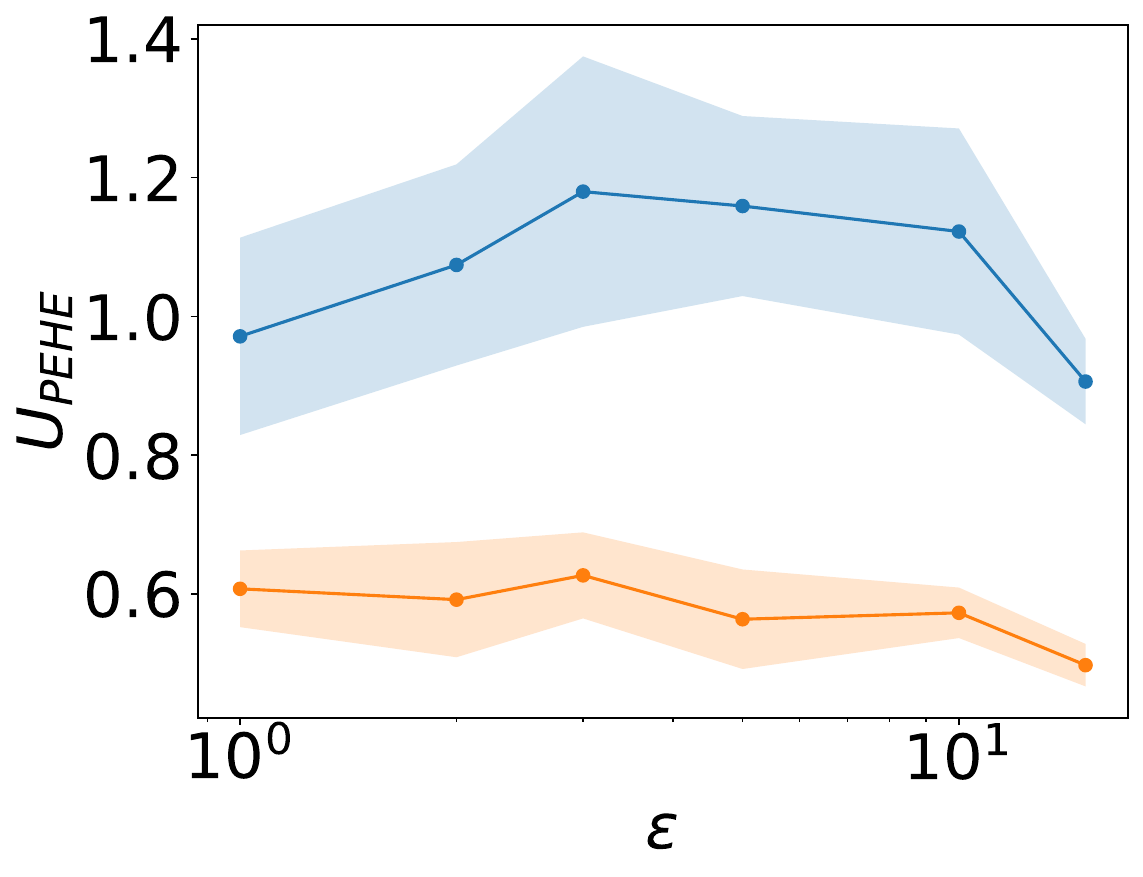}
    \vfill
  \end{minipage}
  \caption{$P_{\alpha, \mathbf{X}}$ ($\uparrow$), $R_{\beta, \mathbf{X}}$ ($\uparrow$), $\text{JSD}_\pi$ ($\uparrow$), and $U_\text{PEHE}$ ($\downarrow$) evaluating $\text{\method}_\text{RAP}$ and standard RAP across privacy budgets. Averaged over 5 runs, shaded area represents 95\% CIs.}
  \label{fig:privacy_rap}
\end{figure}

It is worth noting, however, that when baseline models perform poorly in modelling $P_\mathbf{X}$, as is the case for GEM and RAP, then the relevant STEAM model exhibits similar performance in this regard.

\subsection{Allocation of the privacy budget in \method}\label{privacy budget appendix}

In \method, uniform distribution of the privacy budget $\epsilon$ amongst the three component models ensures ($\epsilon, \delta$)-DP. However, such allocation is uninformed on the difficulty of modelling of $P_\mathbf{X}$, $P_{W|\mathbf{X}}$, and $P_{Y|W,\mathbf{X}}$, and their relative importance to downstream analysts.

In relation to the importance of each distribution, one immediate improvement can be to distribute $\epsilon$ according to some preference function $f:(0,\infty) \times \triangle^2  \rightarrow \epsilon \cdot \triangle^2$ (where $\triangle^2$ is the 2-simplex) which takes input of the budget $\epsilon$ and weights $\mathbf{w}$ for the relative importance of good modelling in $Q_\mathbf{X}$, $Q_{W|\mathbf{X}}$, and $Q_{Y|W,\mathbf{X}}$, and outputs a corresponding $\epsilon$ distribution. For example, a simple preference function definition would be $f( \epsilon, \mathbf{w}) = \epsilon \cdot \mathbf{w}$ where $\mathbf{w}$ could be defined by a data holder with some prior knowledge of the importance level of each component distribution to downstream analysts. Another approach, if it is not necessary to specify the desired $\epsilon$ distribution \textit{a priori}, is to treat it as a hyperparameter, to be tuned over a series of runs to optimize some metric, such as a combination of $P_{\alpha, X}$, $R_{\beta, X}$, $\text{JSD}_\pi$, and $U_\text{PEHE}$.

Incorporating knowledge of the complexity of modelling $P_\mathbf{X}$, $P_{W|\mathbf{X}}$, and $P_{Y|W,\mathbf{X}}$ is more difficult. While some proxy measures could be established, such as the number of covariates in $\mathbf{X}$ indicating the complexity of $P_{W|\mathbf{X}}$, establishing a robust understanding of how the complexity of these distributions relate and compare, is highly non-trivial, and as such, we leave this for future work.

\newpage

\section*{NeurIPS Paper Checklist}

\begin{enumerate}

\item {\bf Claims}
    \item[] Question: Do the main claims made in the abstract and introduction accurately reflect the paper's contributions and scope?
    \item[] Answer: \answerYes{} 
    \item[] Justification: We provide theoretical and experimental results to show misalignment of popular metrics with treatment effect use cases in $\S$\ref{sec:metrics}. We provide experimental evidence of our generation methods' superior performance in $\S$\ref{sec:experiments}.
    \item[] Guidelines:
    \begin{itemize}
        \item The answer NA means that the abstract and introduction do not include the claims made in the paper.
        \item The abstract and/or introduction should clearly state the claims made, including the contributions made in the paper and important assumptions and limitations. A No or NA answer to this question will not be perceived well by the reviewers. 
        \item The claims made should match theoretical and experimental results, and reflect how much the results can be expected to generalize to other settings. 
        \item It is fine to include aspirational goals as motivation as long as it is clear that these goals are not attained by the paper. 
    \end{itemize}

\item {\bf Limitations}
    \item[] Question: Does the paper discuss the limitations of the work performed by the authors?
    \item[] Answer: \answerYes{} 
    \item[] Justification: We discuss limitations in $\S$\ref{sec:discussion}.
    \item[] Guidelines:
    \begin{itemize}
        \item The answer NA means that the paper has no limitation while the answer No means that the paper has limitations, but those are not discussed in the paper. 
        \item The authors are encouraged to create a separate "Limitations" section in their paper.
        \item The paper should point out any strong assumptions and how robust the results are to violations of these assumptions (e.g., independence assumptions, noiseless settings, model well-specification, asymptotic approximations only holding locally). The authors should reflect on how these assumptions might be violated in practice and what the implications would be.
        \item The authors should reflect on the scope of the claims made, e.g., if the approach was only tested on a few datasets or with a few runs. In general, empirical results often depend on implicit assumptions, which should be articulated.
        \item The authors should reflect on the factors that influence the performance of the approach. For example, a facial recognition algorithm may perform poorly when image resolution is low or images are taken in low lighting. Or a speech-to-text system might not be used reliably to provide closed captions for online lectures because it fails to handle technical jargon.
        \item The authors should discuss the computational efficiency of the proposed algorithms and how they scale with dataset size.
        \item If applicable, the authors should discuss possible limitations of their approach to address problems of privacy and fairness.
        \item While the authors might fear that complete honesty about limitations might be used by reviewers as grounds for rejection, a worse outcome might be that reviewers discover limitations that aren't acknowledged in the paper. The authors should use their best judgment and recognize that individual actions in favor of transparency play an important role in developing norms that preserve the integrity of the community. Reviewers will be specifically instructed to not penalize honesty concerning limitations.
    \end{itemize}

\item {\bf Theory assumptions and proofs}
    \item[] Question: For each theoretical result, does the paper provide the full set of assumptions and a complete (and correct) proof?
    \item[] Answer: \answerYes{} 
    \item[] Justification: Proof for our theorem in $\S$\ref{sec:metrics} is in Appendix~\ref{sec:metric_failure_proof}.
    \item[] Guidelines:
    \begin{itemize}
        \item The answer NA means that the paper does not include theoretical results. 
        \item All the theorems, formulas, and proofs in the paper should be numbered and cross-referenced.
        \item All assumptions should be clearly stated or referenced in the statement of any theorems.
        \item The proofs can either appear in the main paper or the supplemental material, but if they appear in the supplemental material, the authors are encouraged to provide a short proof sketch to provide intuition. 
        \item Inversely, any informal proof provided in the core of the paper should be complemented by formal proofs provided in appendix or supplemental material.
        \item Theorems and Lemmas that the proof relies upon should be properly referenced. 
    \end{itemize}

    \item {\bf Experimental result reproducibility}
    \item[] Question: Does the paper fully disclose all the information needed to reproduce the main experimental results of the paper to the extent that it affects the main claims and/or conclusions of the paper (regardless of whether the code and data are provided or not)?
    \item[] Answer: \answerYes{} 
    \item[] Justification: We disclose hyperparameter settings in Appendix~\ref{main experiment detail appendix} and provide code here \url{https://github.com/harrya32/STEAM} and here \url{https://github.com/vanderschaarlab/STEAM}.
    \item[] Guidelines:
    \begin{itemize}
        \item The answer NA means that the paper does not include experiments.
        \item If the paper includes experiments, a No answer to this question will not be perceived well by the reviewers: Making the paper reproducible is important, regardless of whether the code and data are provided or not.
        \item If the contribution is a dataset and/or model, the authors should describe the steps taken to make their results reproducible or verifiable. 
        \item Depending on the contribution, reproducibility can be accomplished in various ways. For example, if the contribution is a novel architecture, describing the architecture fully might suffice, or if the contribution is a specific model and empirical evaluation, it may be necessary to either make it possible for others to replicate the model with the same dataset, or provide access to the model. In general. releasing code and data is often one good way to accomplish this, but reproducibility can also be provided via detailed instructions for how to replicate the results, access to a hosted model (e.g., in the case of a large language model), releasing of a model checkpoint, or other means that are appropriate to the research performed.
        \item While NeurIPS does not require releasing code, the conference does require all submissions to provide some reasonable avenue for reproducibility, which may depend on the nature of the contribution. For example
        \begin{enumerate}
            \item If the contribution is primarily a new algorithm, the paper should make it clear how to reproduce that algorithm.
            \item If the contribution is primarily a new model architecture, the paper should describe the architecture clearly and fully.
            \item If the contribution is a new model (e.g., a large language model), then there should either be a way to access this model for reproducing the results or a way to reproduce the model (e.g., with an open-source dataset or instructions for how to construct the dataset).
            \item We recognize that reproducibility may be tricky in some cases, in which case authors are welcome to describe the particular way they provide for reproducibility. In the case of closed-source models, it may be that access to the model is limited in some way (e.g., to registered users), but it should be possible for other researchers to have some path to reproducing or verifying the results.
        \end{enumerate}
    \end{itemize}

\item {\bf Open access to data and code}
    \item[] Question: Does the paper provide open access to the data and code, with sufficient instructions to faithfully reproduce the main experimental results, as described in supplemental material?
    \item[] Answer: \answerYes{} 
    \item[] Justification: We provide code here \url{https://github.com/harrya32/STEAM} and here \url{https://github.com/vanderschaarlab/STEAM}.
    \item[] Guidelines:
    \begin{itemize}
        \item The answer NA means that paper does not include experiments requiring code.
        \item Please see the NeurIPS code and data submission guidelines (\url{https://nips.cc/public/guides/CodeSubmissionPolicy}) for more details.
        \item While we encourage the release of code and data, we understand that this might not be possible, so “No” is an acceptable answer. Papers cannot be rejected simply for not including code, unless this is central to the contribution (e.g., for a new open-source benchmark).
        \item The instructions should contain the exact command and environment needed to run to reproduce the results. See the NeurIPS code and data submission guidelines (\url{https://nips.cc/public/guides/CodeSubmissionPolicy}) for more details.
        \item The authors should provide instructions on data access and preparation, including how to access the raw data, preprocessed data, intermediate data, and generated data, etc.
        \item The authors should provide scripts to reproduce all experimental results for the new proposed method and baselines. If only a subset of experiments are reproducible, they should state which ones are omitted from the script and why.
        \item At submission time, to preserve anonymity, the authors should release anonymized versions (if applicable).
        \item Providing as much information as possible in supplemental material (appended to the paper) is recommended, but including URLs to data and code is permitted.
    \end{itemize}

\item {\bf Experimental setting/details}
    \item[] Question: Does the paper specify all the training and test details (e.g., data splits, hyperparameters, how they were chosen, type of optimizer, etc.) necessary to understand the results?
    \item[] Answer: \answerYes{} 
    \item[] Justification: We disclose hyperparameter settings in Appendix~\ref{main experiment detail appendix} and provide code here \url{https://github.com/harrya32/STEAM} and here \url{https://github.com/vanderschaarlab/STEAM}.
    \item[] Guidelines:
    \begin{itemize}
        \item The answer NA means that the paper does not include experiments.
        \item The experimental setting should be presented in the core of the paper to a level of detail that is necessary to appreciate the results and make sense of them.
        \item The full details can be provided either with the code, in appendix, or as supplemental material.
    \end{itemize}

\item {\bf Experiment statistical significance}
    \item[] Question: Does the paper report error bars suitably and correctly defined or other appropriate information about the statistical significance of the experiments?
    \item[] Answer: \answerYes{} 
    \item[] Justification: We include 95\% CIs with all reported results.
    \item[] Guidelines:
    \begin{itemize}
        \item The answer NA means that the paper does not include experiments.
        \item The authors should answer "Yes" if the results are accompanied by error bars, confidence intervals, or statistical significance tests, at least for the experiments that support the main claims of the paper.
        \item The factors of variability that the error bars are capturing should be clearly stated (for example, train/test split, initialization, random drawing of some parameter, or overall run with given experimental conditions).
        \item The method for calculating the error bars should be explained (closed form formula, call to a library function, bootstrap, etc.)
        \item The assumptions made should be given (e.g., Normally distributed errors).
        \item It should be clear whether the error bar is the standard deviation or the standard error of the mean.
        \item It is OK to report 1-sigma error bars, but one should state it. The authors should preferably report a 2-sigma error bar than state that they have a 96\% CI, if the hypothesis of Normality of errors is not verified.
        \item For asymmetric distributions, the authors should be careful not to show in tables or figures symmetric error bars that would yield results that are out of range (e.g. negative error rates).
        \item If error bars are reported in tables or plots, The authors should explain in the text how they were calculated and reference the corresponding figures or tables in the text.
    \end{itemize}

\item {\bf Experiments compute resources}
    \item[] Question: For each experiment, does the paper provide sufficient information on the computer resources (type of compute workers, memory, time of execution) needed to reproduce the experiments?
    \item[] Answer: \answerYes{} 
    \item[] Justification: We describe compute resources and experiment run times in Appendix~\ref{main experiment detail appendix}.
    \item[] Guidelines:
    \begin{itemize}
        \item The answer NA means that the paper does not include experiments.
        \item The paper should indicate the type of compute workers CPU or GPU, internal cluster, or cloud provider, including relevant memory and storage.
        \item The paper should provide the amount of compute required for each of the individual experimental runs as well as estimate the total compute. 
        \item The paper should disclose whether the full research project required more compute than the experiments reported in the paper (e.g., preliminary or failed experiments that didn't make it into the paper). 
    \end{itemize}
    
\item {\bf Code of ethics}
    \item[] Question: Does the research conducted in the paper conform, in every respect, with the NeurIPS Code of Ethics \url{https://neurips.cc/public/EthicsGuidelines}?
    \item[] Answer: \answerYes{} 
    \item[] Justification: We have read and conform with the code of ethics.
    \item[] Guidelines:
    \begin{itemize}
        \item The answer NA means that the authors have not reviewed the NeurIPS Code of Ethics.
        \item If the authors answer No, they should explain the special circumstances that require a deviation from the Code of Ethics.
        \item The authors should make sure to preserve anonymity (e.g., if there is a special consideration due to laws or regulations in their jurisdiction).
    \end{itemize}

\item {\bf Broader impacts}
    \item[] Question: Does the paper discuss both potential positive societal impacts and negative societal impacts of the work performed?
    \item[] Answer: \answerYes{} 
    \item[] Justification: We discuss broader impacts of our work in $\S$\ref{sec:discussion}.
    \item[] Guidelines:
    \begin{itemize}
        \item The answer NA means that there is no societal impact of the work performed.
        \item If the authors answer NA or No, they should explain why their work has no societal impact or why the paper does not address societal impact.
        \item Examples of negative societal impacts include potential malicious or unintended uses (e.g., disinformation, generating fake profiles, surveillance), fairness considerations (e.g., deployment of technologies that could make decisions that unfairly impact specific groups), privacy considerations, and security considerations.
        \item The conference expects that many papers will be foundational research and not tied to particular applications, let alone deployments. However, if there is a direct path to any negative applications, the authors should point it out. For example, it is legitimate to point out that an improvement in the quality of generative models could be used to generate deepfakes for disinformation. On the other hand, it is not needed to point out that a generic algorithm for optimizing neural networks could enable people to train models that generate Deepfakes faster.
        \item The authors should consider possible harms that could arise when the technology is being used as intended and functioning correctly, harms that could arise when the technology is being used as intended but gives incorrect results, and harms following from (intentional or unintentional) misuse of the technology.
        \item If there are negative societal impacts, the authors could also discuss possible mitigation strategies (e.g., gated release of models, providing defenses in addition to attacks, mechanisms for monitoring misuse, mechanisms to monitor how a system learns from feedback over time, improving the efficiency and accessibility of ML).
    \end{itemize}
    
\item {\bf Safeguards}
    \item[] Question: Does the paper describe safeguards that have been put in place for responsible release of data or models that have a high risk for misuse (e.g., pretrained language models, image generators, or scraped datasets)?
    \item[] Answer: \answerNA{} 
    \item[] Justification: No such risk.
    \item[] Guidelines:
    \begin{itemize}
        \item The answer NA means that the paper poses no such risks.
        \item Released models that have a high risk for misuse or dual-use should be released with necessary safeguards to allow for controlled use of the model, for example by requiring that users adhere to usage guidelines or restrictions to access the model or implementing safety filters. 
        \item Datasets that have been scraped from the Internet could pose safety risks. The authors should describe how they avoided releasing unsafe images.
        \item We recognize that providing effective safeguards is challenging, and many papers do not require this, but we encourage authors to take this into account and make a best faith effort.
    \end{itemize}

\item {\bf Licenses for existing assets}
    \item[] Question: Are the creators or original owners of assets (e.g., code, data, models), used in the paper, properly credited and are the license and terms of use explicitly mentioned and properly respected?
    \item[] Answer: \answerYes{} 
    \item[] Justification: We cite and name licenses for all assets used.
    \item[] Guidelines:
    \begin{itemize}
        \item The answer NA means that the paper does not use existing assets.
        \item The authors should cite the original paper that produced the code package or dataset.
        \item The authors should state which version of the asset is used and, if possible, include a URL.
        \item The name of the license (e.g., CC-BY 4.0) should be included for each asset.
        \item For scraped data from a particular source (e.g., website), the copyright and terms of service of that source should be provided.
        \item If assets are released, the license, copyright information, and terms of use in the package should be provided. For popular datasets, \url{paperswithcode.com/datasets} has curated licenses for some datasets. Their licensing guide can help determine the license of a dataset.
        \item For existing datasets that are re-packaged, both the original license and the license of the derived asset (if it has changed) should be provided.
        \item If this information is not available online, the authors are encouraged to reach out to the asset's creators.
    \end{itemize}

\item {\bf New assets}
    \item[] Question: Are new assets introduced in the paper well documented and is the documentation provided alongside the assets?
    \item[] Answer: \answerYes{} 
    \item[] Justification: We provide documented code here \url{https://github.com/harrya32/STEAM} and here \url{https://github.com/vanderschaarlab/STEAM}.
    \item[] Guidelines:
    \begin{itemize}
        \item The answer NA means that the paper does not release new assets.
        \item Researchers should communicate the details of the dataset/code/model as part of their submissions via structured templates. This includes details about training, license, limitations, etc. 
        \item The paper should discuss whether and how consent was obtained from people whose asset is used.
        \item At submission time, remember to anonymize your assets (if applicable). You can either create an anonymized URL or include an anonymized zip file.
    \end{itemize}

\item {\bf Crowdsourcing and research with human subjects}
    \item[] Question: For crowdsourcing experiments and research with human subjects, does the paper include the full text of instructions given to participants and screenshots, if applicable, as well as details about compensation (if any)? 
    \item[] Answer: \answerNA{} 
    \item[] Justification: No crowdsourcing nor research with human subjects.
    \item[] Guidelines:
    \begin{itemize}
        \item The answer NA means that the paper does not involve crowdsourcing nor research with human subjects.
        \item Including this information in the supplemental material is fine, but if the main contribution of the paper involves human subjects, then as much detail as possible should be included in the main paper. 
        \item According to the NeurIPS Code of Ethics, workers involved in data collection, curation, or other labor should be paid at least the minimum wage in the country of the data collector. 
    \end{itemize}

\item {\bf Institutional review board (IRB) approvals or equivalent for research with human subjects}
    \item[] Question: Does the paper describe potential risks incurred by study participants, whether such risks were disclosed to the subjects, and whether Institutional Review Board (IRB) approvals (or an equivalent approval/review based on the requirements of your country or institution) were obtained?
    \item[] Answer: \answerNA{} 
    \item[] Justification: No crowdsourcing nor research with human subjects.
    \item[] Guidelines:
    \begin{itemize}
        \item The answer NA means that the paper does not involve crowdsourcing nor research with human subjects.
        \item Depending on the country in which research is conducted, IRB approval (or equivalent) may be required for any human subjects research. If you obtained IRB approval, you should clearly state this in the paper. 
        \item We recognize that the procedures for this may vary significantly between institutions and locations, and we expect authors to adhere to the NeurIPS Code of Ethics and the guidelines for their institution. 
        \item For initial submissions, do not include any information that would break anonymity (if applicable), such as the institution conducting the review.
    \end{itemize}

\item {\bf Declaration of LLM usage}
    \item[] Question: Does the paper describe the usage of LLMs if it is an important, original, or non-standard component of the core methods in this research? Note that if the LLM is used only for writing, editing, or formatting purposes and does not impact the core methodology, scientific rigorousness, or originality of the research, declaration is not required.
    \item[] Answer: \answerNA{} 
    \item[] Justification: Core method does not involve LLMs.
    \item[] Guidelines:
    \begin{itemize}
        \item The answer NA means that the core method development in this research does not involve LLMs as any important, original, or non-standard components.
        \item Please refer to our LLM policy (\url{https://neurips.cc/Conferences/2025/LLM}) for what should or should not be described.
    \end{itemize}

\end{enumerate}

\end{document}